\pdfoutput=1
\documentclass{article}
\usepackage{utils/utils}
\usepackage{utils/iclr2026}
\iclrfinalcopy
\usepackage{silence}
\newcommand{\rebuttal}[1]{#1}

\title{Reducing Symmetry Increase in Equivariant Neural Networks}

\author{Ning Lin, 
Jiacheng Cen, 
Anyi Li, 
Wenbing Huang\thanks{Corresponding authors.}\, ,
Hao Sun\footnotemark[1]\\
Gaoling School of Artificial Intelligence, Renmin University of China, Beijing, China\\
\texttt{\{ninglin00,\,jiacc.cn,\,li\_anyi\}@outlook.com},\\
\texttt{\{hwenbing,\,haosun\}@ruc.edu.cn}
}

\begin{document}
\maketitle

\begin{abstract}
Equivariant Neural Networks (ENNs) have empowered numerous applications in scientific fields. Despite their remarkable capacity for representing geometric structures, ENNs suffer from degraded expressivity when processing symmetric inputs: the output representations are invariant to transformations that extend beyond the input's symmetries. The mathematical essence of this phenomenon is that a symmetric input, after being processed by an equivariant map, experiences an increase in symmetry. While prior research has documented symmetry increase in specific cases, a rigorous understanding of its underlying causes and general \rebuttal{reduction} strategies remains lacking. In this paper, we provide a detailed and in-depth characterization of symmetry increase \rebuttal{together with a principled framework for its reduction}: (i) For any given feature space and input symmetry group, we prove that the increased symmetry admits an infimum determined by the structure of the feature space; (ii) Building on this foundation, we develop a computable algorithm to derive this infimum, \rebuttal{and propose practical guidelines for feature design to prevent harmful symmetry increases.} (iii) Under standard regularity assumptions, we demonstrate that for \emph{most} equivariant maps, \rebuttal{our guidelines} effectively \rebuttal{reduce} symmetry increase. To complement our theoretical findings, we provide visualizations and experiments \rebuttal{on both synthetic datasets and the real-world QM9 dataset}. The results validate our theoretical predictions.
\end{abstract}

\section{Introduction}
\label{sec:introduction}

\begin{wrapfigure}[9]{r}[0pt]{0.55\textwidth}
    \centering
    \vspace{-1.5cm}
    \subfigure[$3$-fold]{\includegraphics[width = 4cm]{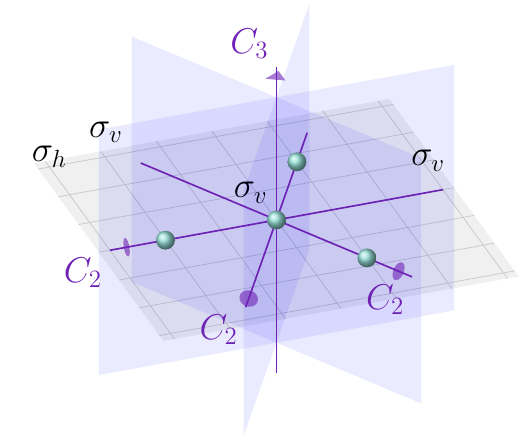}}
    \hspace{-0.5cm}
    \subfigure[$4$-fold]{\includegraphics[width = 4cm]{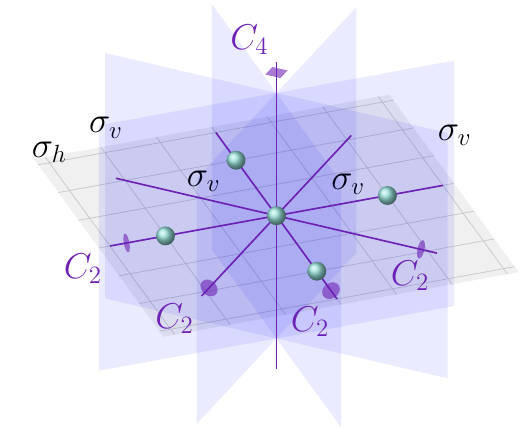}}
    \vspace{-0.5cm}
    \caption{$k$-fold structures.}
    \label{fig:k_fold_struct}
    \vspace{-1.2cm}
\end{wrapfigure}

Equivariant Neural Networks (ENNs) have become a cornerstone of modern machine learning, empowering numerous applications in scientific fields ranging from molecular dynamics to materials design~\citep{bronstein2021geometric,huang2026geometric}. By building in physical symmetries, these models achieve remarkable data efficiency and generalization capabilities when representing complex geometric structures.

Despite their success, ENNs exhibit a critical vulnerability when processing symmetric inputs: their expressivity can degrade, leading to a loss of information. This phenomenon, which we term \textbf{symmetry increase}, occurs when the output representation becomes invariant to transformations that are not symmetries of the original input itself. A canonical example arises when processing $k$-fold symmetric structures. These objects, visualized in \cref{fig:k_fold_struct}, possess a specific dihedral symmetry, yet an ENN will map their distinct rotated versions to an identical feature, erasing their orientation.

This type of degradation has been documented in previous work. Empirically, it has been observed that the degradation depends on the feature space, particularly for symmetric inputs \citep{joshi2023expressive}. Theoretically, research on ENNs has shown that for $k$-fold symmetries, selecting only low-degree features can cause all rotated inputs to collapse into a single representation \citep{cen_are_2024}. This ENN-specific issue is a modern manifestation of a general phenomenon observed in other fields. It has been linked to physical principles such as Curie's Principle \citep{smidt_finding_2021} and described using the concept of orbit types \citep{kaba_symmetry_2023}.

However, existing analyses provide an incomplete picture and lack a predictive framework. In our analysis, this degradation of $k$-fold caused by symmetry increase can be categorized into three distinct types:  \emph{full degeneration}, \emph{axial degeneration}, and \emph{half degeneration} (see~\cref{three_type_of_generation})\footnote{Although we choose $k$-fold as an illustrative example, our theories are applicable to general cases.}. The work of \citet{joshi2023expressive}, while empirically important, does not theoretically explore the cause of the degradation. The \emph{collapse-to-zero} theory proposed by \citet{cen_are_2024} addresses only full degeneration, which is the most extreme case identified by our analysis. The broader principles discussed by \citet{smidt_finding_2021} and \citet{kaba_symmetry_2023} lack a rigorous mathematical description, and the solutions proposed, such as in \citet{kaba_symmetry_2023}, often involve relaxing the equivariance constraint itself, rather than providing a solution within the equivariant framework.

In this paper, we fill this gap by providing a comprehensive mathematical characterization of symmetry increase. Our main contributions are briefly listed as follows:

\begin{itemize}[leftmargin=25pt]
    \item In \cref{the_infimum_of_symmetry}, we prove for any given feature space and input symmetry group, that the increased symmetry is bounded from below by a unique symmetry infimum, which is determined entirely by the algebraic structure of the feature space.
    \item In \cref{computation_of_orbit_types}, we develop a computable algorithm to derive this infimum by analyzing the orbit types. This provides practical \rebuttal{guidelines} for predicting and \rebuttal{controlling} potential symmetry increases,  \rebuttal{thereby preventing harmful symmetry increases in feature design}.
    \item In \cref{density_of_isovariant_maps}, we demonstrate that under regularity conditions, such as the manifold hypothesis for data, our method can fully reduce symmetry increase. Specifically, for \emph{most} equivariant maps or for ENNs with sufficient approximation capabilities, the output symmetry will be precisely this predictable infimum, preventing orientational information loss.
    \item In \cref{sec:experiment}, we complement our theoretical findings with empirical evidence. We provide visualizations to illustrate the proposed concepts and present experimental results \rebuttal{on both synthetic datasets and the real-world QM9 dataset}, which validate our theoretical predictions \rebuttal{and demonstrate the practical effectiveness of our framework}.
\end{itemize} 

\section{Preliminaries}
\label{preliminaries}

\textbf{Group action and representation.}
Consider the action of a group $G$ on a set $X$, denoted by $\rho_{X}$. This action is a map that assigns to each element $g \in G$ a transformation $\rho_{X}(g): X \to X$, such that $\rho_{X}(g_{1}g_{2}) = \rho_{X}(g_{1})\rho_{X}(g_{2})$. We call such a set $X$ a $G$-set. In particular, if $X$ is a vector space and $\rho_{X}(g)$ is a linear transformation for all $g \in G$, we call $X$ a $G$-representation\footnote{Unless otherwise specified, all groups are assumed to be compact Lie groups, and all vector spaces are finite-dimensional real vector spaces.}.

\textbf{Equivariant map.} 
For maps between two $G$-sets $X$ and $Y$, an equivariant map is one that respects the group action, meaning that the output transforms accordingly when the input is transformed. Formally, a map $f: X \to Y$ is equivariant if for all $g \in G$ and $x \in X$:
\begin{equation}
f(\rho_{X}(g)(x)) = \rho_{Y}(g)(f(x)).
\end{equation}

\begin{example}[Equivariant Encoding of Point Clouds]
\label{ex:eq_encode_point_cloud}

    The symmetry group is $G = H \times S_{n}$, where $H$ is typically the special orthogonal group $SO(3)$ or the orthogonal group $O(3)$, and $S_n$ is the permutation group. We consider features that are invariant to both permutation and translation. To achieve translation invariance, the input representation $X$ is the space of centered point clouds, where $H$ acts on the coordinate of each point and $S_n$ permutes the points. To achieve permutation invariance, the final output are designed to be invariant with respect to $S_n$. This means the feature representation of interest is a direct sum of specified irreducible representations of $H$. The task of equivariant encoding is then to learn an equivariant map $f: X \to Y$.
\end{example}

\textbf{Data symmetry.}
For a $G$-set $X$, the action partitions the space into orbits, $G(x) := \{g(x) \mid g \in G\}$. The subgroup that fixes a point is its \textbf{isotropy subgroup}, $G_{x} := \{g \in G \mid g(x) = x\}$. The conjugacy class $(G_{x})$ of this subgroup is the \textbf{orbit type} of $x$. The set of all points with orbit type $(H)$ is $X_{(H)}$ and the set of points fixed by all elements of a subgroup $H$ is the fixed-point set $X^H$.

These concepts distinguish between the global symmetry of the space and the intrinsic symmetry of an object. The group $G$ represents the global symmetry, transforming the object between different reference frames. An orbit $G(x)$ thus represents a single physical object in all of its possible orientations. This object possesses its own intrinsic symmetry, mathematically defined by the isotropy subgroup $G_x$ for any point $x$ on its orbit. While the specific subgroup is frame-dependent, its structure up to conjugation is constant. The orbit type $(G_x)$ therefore serves as a reference-frame independent identifier that precisely describes the physical object's intrinsic symmetry. The set of all possible orbit types, $\mathcal{O}_G(X)$, thus catalogs all distinct symmetries that objects in the space can possess.

\begin{example}
\label{ex:o3sn_on_k_fold}
Now apply the setup from \cref{ex:eq_encode_point_cloud}. Consider the action of $G = O(3) \times S_{k+1}$ on centered point cloud space $X$ for $k > 2$. Let $x \in X$ be the set of vertices of a $k$-fold in the $xOy$-plane:
\begin{equation}
x = (x_0, x_1, \dots , x_k), \quad \text{where } x_i = (\cos (2i \pi / k), \sin (2i \pi / k), 0)\ \text{for}\ i > 0.
\end{equation}
with $x_0$ at the origin. The generators of $G_x$ include:(1) A rotation about the $z$-axis combined with a cyclic permutation of $x_1,\dots,x_k$. (2) A reflection across the $xOz$-plane combined with a product of transpositions. (3) A reflection across the $xOy$-plane combined with the identity.

Considering the projection map $\pi_{X}((g, \sigma)) = g$, where $(g, \sigma)$ is a pair consisting of a geometric transformation $g \in O(3)$ and a permutation $\sigma \in S_{k+1}$, we find that geometric symmetry $\pi_X(G_x)$ is the dihedral group with horizontal reflection of order $4k$, denoted by the Schoenflies symbol $D_{kh}$.
\end{example}

The symmetry of data can be altered by an equivariant map. The following theorem shows that an equivariant map does not decrease symmetry.

\begin{theorem}[Curie's principle, \citet{kaba_symmetry_2023}, Thm.~1]
\label{thm_part1_cont:curies_principle}
Let $f: X \to Y$ be a $G$-equivariant map. For $x \in X$, the isotropy subgroup of $x$ is contained in that of its image $f(x)$, i.e.,
\begin{equation}
G_x \subseteq G_{f(x)}.
\end{equation}
\end{theorem}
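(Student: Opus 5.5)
The argument is a direct element-chase from the definition of equivariance. No earlier result is needed beyond the definitions of isotropy subgroup and equivariant map from the preliminaries.

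Fix $x \in X$ and take an arbitrary $g \in G_x$, so that $\rho_X(g)(x) = x$. We must show $g \in G_{f(x)}$, i.e.\ $\rho_Y(g)(f(x)) = f(x)$. Applying equivariance of $f$ at the point $x$ with the element $g$ gives
\begin{equation*}
\rho_Y(g)\bigl(f(x)\bigr) = f\bigl(\rho_X(g)(x)\bigr).
\end{equation*}
Substituting $\rho_X(g)(x) = x$ into the right-hand side yields $f(x)$. Hence $\rho_Y(g)(f(x)) = f(x)$, which says exactly that $g$ fixes $f(x)$, i.e.\ $g \in G_{f(x)}$. Since $g \in G_x$ was arbitrary, $G_x \subseteq G_{f(x)}$.

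There is no real obstacle here. The only point to watch is that the inclusion is an honest subset relation between subgroups of the same group $G$, not merely a relation between conjugacy classes. This holds because both isotropy groups are computed with respect to the same $G$-action, pulled through $f$. As an immediate consequence, the orbit type $(G_{f(x)})$ contains a conjugate of the orbit type $(G_x)$; this is the form in which the statement will be used later when discussing symmetry increase.
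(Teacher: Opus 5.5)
Your element-chase is correct and complete: for $g \in G_x$, equivariance gives $\rho_Y(g)(f(x)) = f(\rho_X(g)(x)) = f(x)$, so $g \in G_{f(x)}$. The paper does not prove this statement itself; it cites Thm.~1 of \citet{kaba_symmetry_2023}, and your computation is the standard argument. One small wording fix in your closing remark: orbit types are conjugacy classes, so the consequence should be stated as $(G_x) \le (G_{f(x)})$ in the paper's partial order. In fact $G_{f(x)}$ contains $G_x$ itself, not merely a conjugate of it.
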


Such an increase in symmetry becomes unavoidable if the feature space $Y$ cannot support the input's symmetry. If the orbit type $(G_{x})$ is not present in $Y$ (i.e., $(G_{x}) \notin \mathcal{O}_{G}(Y)$), then the equality $G_x = G_{f(x)}$ is impossible (otherwise, $G_{f(x)}$ becomes an isotropy subgroup, implying $(G_{x}) \in \mathcal{O}_{G}(Y)$, which leads to a contradiction), and the symmetry must therefore strictly increase. This increase leads to a degeneration, as any transformation $g$ in the larger group $G_{f(x)}$ that is not in $G_x$ will map the distinct inputs $x$ and $g(x)$ to the same output, since $f(g(x)) = g(f(x)) = f(x)$. \cref{three_type_of_generation} illustrates three possible types of such degenerations for the $k$-fold structure from \cref{ex:o3sn_on_k_fold}.

\section{The Infimum of Symmetry}
\label{the_infimum_of_symmetry}

\begin{wrapfigure}[7]{r}[0pt]{0.5\textwidth}
    \centering
    \vspace{-1.2cm}
    \includegraphics[width = 0.85\linewidth]{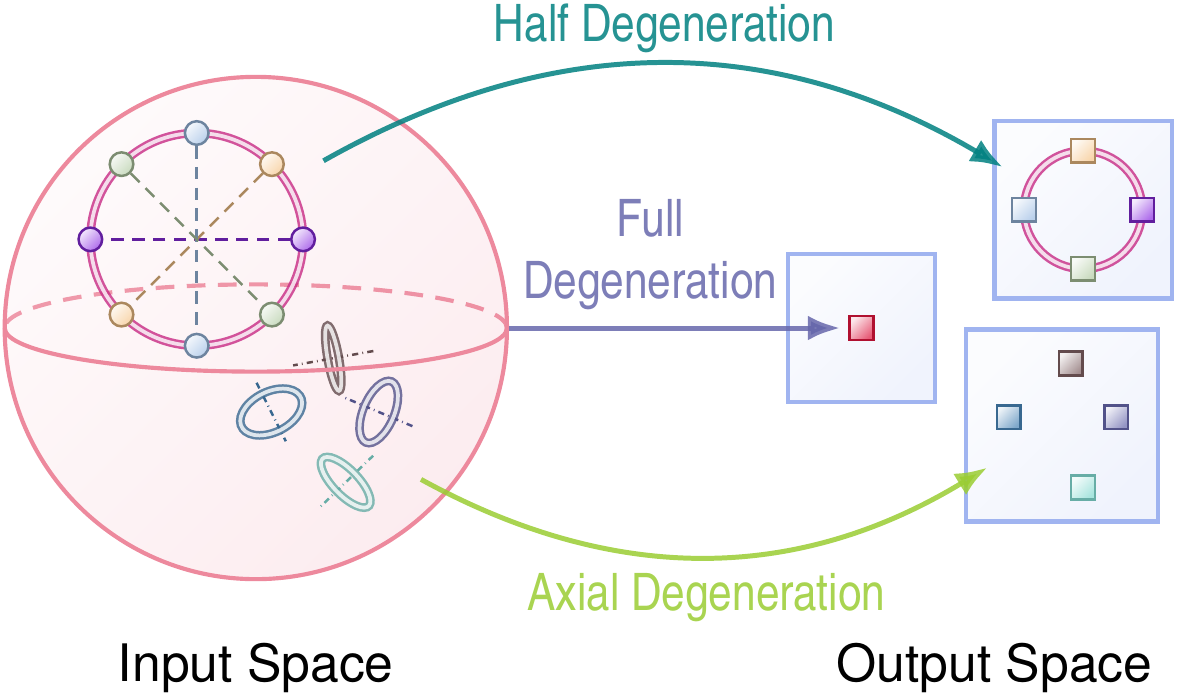}
    \vspace{-0.3cm}
    \caption{Three types of degeneration of $k$-fold.}
    \label{three_type_of_generation}
\end{wrapfigure}

This section establishes that symmetry increase is governed by an infimum determined by the feature space. We prove the existence of this infimum and show that its coincidence with the input symmetry is a necessary condition for an equivariant map to preserve symmetry.

\subsection{The Infimum of Symmetry}

The symmetry of data can increase after being transformed by an equivariant map. This increase can be an intentional design choice, or it can arise from subtle properties of the feature space. For instance, in the task from \cref{ex:eq_encode_point_cloud}, the requirement for permutation-invariant features means that the permutation group $S_n$ is naturally introduced into the isotropy subgroup of any output. This type of designed, unavoidable symmetry increase is formalized by the kernel of the group action on the feature space. We define the \textbf{kernel} of the action $\rho_X$ as the set of group elements that fix every point in $X$: $\mathrm{ker}\rho_{X} := \{g \in G \mid g(x) = x, \forall x \in X\}$

Distinguishing between intentional and unintended symmetry increase is crucial. We begin by assuming that the group action is faithful, i.e., it has a trivial kernel with $\mathrm{ker}\rho_{Y} = \{e\}$. The case of a nontrivial kernel will be discussed in the next subsection.

To characterize the behavior of symmetry increase within a representation $X$, we first establish a partial order to compare different symmetries. An orbit type $(H_1)$ is considered greater than or equal to another, $(H_2)$, written as $(H_1) \ge (H_2)$, if $H_1$ contains a conjugate of $H_2$. This ordering reflects that a larger orbit type corresponds to a higher symmetry.

With this framework, we are interested in the lower bound of orbit types that can be reached from a point $x$ with a specific isotropy group $H = G_x$. The analysis can be framed around any closed subgroup $H$, as the set of all possible isotropy subgroups is precisely the set of all closed subgroups of $G$ (see, e.g., \citet{field_dynamics_2007,mostow_equivariant_1957}). The fixed-point space $X^H$ contains points of all higher orbit types. This leads to the following powerful theorem, which guarantees that the lower bound on symmetry increase is unique, corresponding to an isotropy subgroup that is unique up to conjugation.

\begin{restatable}[Uniqueness of Minimal Type]{theorem}{thmOneCntUniquenessOfMinimalType}
\label{thm_part1_cont:uniqueness_of_minimal_type}
Let $X$ be a representation of a compact Lie group $G$. For any closed subgroup $H \subseteq G$, a unique minimal orbit type exists among the points in the fixed-point subspace $X^H$. In particular, if $(H) \in \mathcal{O}_G(X^H)$, then $(H)$ is the minimal orbit type within that subspace.
\end{restatable}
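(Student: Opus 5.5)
The plan is to produce one point $x_0\in X^H$ and show that $(G_{x_0})\le (G_y)$ for every $y\in X^H$. Minimality and uniqueness then follow at once. The final sentence of the theorem is then immediate. Every $y\in X^H$ satisfies $H\subseteq G_y$, i.e.\ $(H)\le (G_y)$. So if some point of $X^H$ has orbit type $(H)$, that type is the minimum.

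\textbf{Step 1: a locally minimal point.} Every $y\in X^H$ has $G_y\supseteq H$, a closed subgroup of the compact Lie group $G$. Strictly decreasing chains of closed subgroups of $G$ are finite, since each strict inclusion lowers either the dimension or the number of components. Order the isotropy groups $G_y$, $y\in X^H$, lexicographically by $(\dim G_y,\ \#\pi_0(G_y))$, and choose $x_0\in X^H$ at which this pair is smallest. By the slice theorem (Mostow--Palais), every $z$ in a small neighbourhood of $x_0$ satisfies $G_z\subseteq gG_{x_0}g^{-1}$ for some $g\in G$. For $z\in X^H$, minimality forces equality. Hence the orbit type is constant, equal to $(G_{x_0})$, on a neighbourhood of $x_0$ in $X^H$. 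Call such points locally minimal and let $M\subseteq X^H$ be the set of them. The same argument applied inside any small ball shows that $M$ is open and dense in $X^H$.

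\textbf{Step 2: comparison with an arbitrary $y$.} Fix $y\in X^H$ and consider the affine line $z_t=x_0+ty\subseteq X^H$.
\begin{itemize}
\item For large $t$, write $z_t=t(y+x_0/t)$. Scaling does not change isotropy, and $y+x_0/t\to y$, so the slice theorem at $y$ gives $(G_{z_t})\le (G_y)$.
\item For small $t$, Step 1 gives $(G_{z_t})=(G_{x_0})$.
\end{itemize}
It therefore suffices to show that the orbit type along the line can only jump \emph{up} at isolated parameters, so that its generic value is $(G_{x_0})$ at both ends.

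For this I would use two facts. First, a linear action of a compact Lie group has only finitely many orbit types. Second, for any closed $K$, the set $\{t: G_{z_t}\supseteq K\}$ is the intersection of the line with the linear subspace $X^K$, hence all of $\mathbb{R}$, a single point, or empty. Combining these with Step 1 (which says the type can only drop when moving off a point), the set of $t$ where $(G_{z_t})$ is not the generic type should be finite. Then $(G_{x_0})=(G_{z_t})\le (G_y)$ for large generic $t$, completing the argument.

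\textbf{Main obstacle.} The delicate step is Step 2. Conjugates of a fixed $K$ form an infinite family, so "finite union of subspaces meets the line in finitely many points" is not literally available. I would handle this with semialgebraic geometry. The set $\{z\in X^H : (G_z)\ge (K)\}$ equals $X^H\cap G\cdot X^K$, a closed semialgebraic set, using that compact Lie groups are real algebraic. When this set is proper in $X^H$ it has lower dimension. Consequently the complement $M$ of the finitely many such strata is open, dense and semialgebraic in $X^H$, and a generic line through $x_0$ meets the bad set in finitely many points.

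If direct genericity along the specific line $x_0+ty$ fails, I would perturb $y$ to a nearby $y'$. By the slice theorem, a sufficiently small perturbation still gives $(G_{y'})\le (G_y)$, and the perturbed line $x_0+ty'$ is generic.
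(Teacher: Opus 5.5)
Your proposal has a genuine gap: the one step that carries the content of the theorem is asserted rather than proved.

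Your Step 1 is correct. Minimising $(\dim G_y,\#\pi_0(G_y))$ and applying the slice theorem shows that the set $M$ of points of $X^H$ near which the orbit type is locally constant is open and dense. This is only the easy half, and it matches the openness part of the paper's argument. Nothing in it prevents two different locally constant types from filling two disjoint open regions of $X^H$, separated by a wall of higher symmetry. Lower semicontinuity at wall points is fully consistent with that picture, so ruling it out is the whole content of the theorem, and Step 2 does not do so.

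The observation that $\{t: G_{z_t}\supseteq K\}$ is linear is of no help here. Isotropy groups can rotate inside one conjugacy class along an interval. For example, with $SO(3)$ acting on $\mathbb{R}^3$ and $z_t=e_1+te_2$, each $G_{z_t}$ is the circle about a moving axis, and no fixed subgroup is shared by the points of the line. Your semialgebraic fallback rests on the claim that $X^H\cap G\cdot X^K$, if proper, has lower dimension. That does not follow from being closed and semialgebraic. Closed half-spaces, or the set $\{(v,w):\|v\|\le\|w\|\}$ in $V\oplus W$, are proper closed semialgebraic sets of full dimension, and your argument gives no reason why $X^H\cap G\cdot X^K$ cannot look like that.

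That claim is not an auxiliary lemma; it is equivalent to the theorem. Suppose it held. Then the finitely many proper sets $X^H\cap G\cdot X^K$ would be nowhere dense, so their complement in $X^H$ would contain some point $z$. Since $z\in G\cdot X^{G_z}$, the set $X^H\cap G\cdot X^{G_z}$ could not be proper, so it equals $X^H$. That says $(G_z)\le(G_y)$ for every $y\in X^H$, and the line argument becomes unnecessary. So the proposal relocates the difficulty rather than resolving it.

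What is missing is a reason why the locus of type $\le (K)$ in $X^H$ is not merely open but dense. The paper, in \cref{prop_part1_app:minimal_orbit_type_dense_open}, gets this algebraically:
\begin{itemize}
\item A real point $x$ of type $(K)$ has a closed $G^{\mathbb C}$-orbit, so a Luna-type result gives a Zariski neighbourhood on which all types are $\le (K^{\mathbb C})$.
\item Intersecting with the linear space $(X^{\mathbb C})^{H^{\mathbb C}}$ and taking real points yields a nonempty real Zariski-open subset of $X^H$, which is dense.
\item The identities $(G^{\mathbb C})_y=(G_y)^{\mathbb C}$ and $(H^{\mathbb C})\le(K^{\mathbb C})\Rightarrow(H)\le(K)$ transfer the inequality back to $G$.
\end{itemize}
Minimality then forces equality, so every minimal type occupies an open dense set, and uniqueness follows because two open dense sets intersect. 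Zariski-openness is exactly what upgrades \emph{open} to \emph{dense}. A purely semialgebraic or semicontinuity argument like yours needs some substitute for it. One candidate is an induction over slice representations in the style of the principal orbit type theorem, but it would have to handle the fact that $X^H$ is not $G$-invariant.
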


The uniqueness guaranteed by this theorem allows us to define the \textbf{symmetry infimum}, denoted by $I_{G}(X, H)$, as this unique minimal orbit type. In the context of symmetry increase, we are concerned with the relationship between $I(Y, G_x)$ and $(G_{f(x)})$. An unexpected symmetry increase occurs if $(G_{f(x)}) > I_G(Y, G_x)$. The desired behavior for an equivariant map is captured by the following definition. For a map between $G$-sets $X$ and $Y$, we define an \textbf{isovariant map} as one that strictly preserves symmetry for all $x \in X$:
\begin{equation}
G_x = G_{f(x)}.
\end{equation}
Using the concept of the symmetry infimum, we can provide a necessary condition for the existence of isovariant maps. In \cref{subsec:the_manifold_hypothesis}, we will see that this condition is in fact not sufficient for equivariant maps between representations, even when we assume a trivial kernel $\mathrm{ker}\rho_{Y}= \{e\}$.

\begin{theorem}
\label{thm_part1_cont:necessary_isovariant}
A necessary condition for the existence of an isovariant map between $G$-sets $X$ and $Y$ is that $\mathcal{O}_{G}(X) \subseteq \mathcal{O}_{G}(Y)$. When $X$ and $Y$ are representations of a compact Lie group $G$, this is equivalent to the condition that $I_{G}(Y,H) = (H)$ for all $(H) \in \mathcal{O}_{G}(X)$.
\end{theorem}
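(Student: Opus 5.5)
The proof has two parts. The first is a direct orbit-type argument. The second is an equivalence that rests on \cref{thm_part1_cont:uniqueness_of_minimal_type} together with the observation that isotropy subgroups in a fixed-point space contain $H$.

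\textbf{Necessity.} Suppose $f: X \to Y$ is isovariant and let $(H) \in \mathcal{O}_G(X)$, so $H = G_x$ for some $x \in X$. By isovariance $G_{f(x)} = G_x = H$, so $H$ is the isotropy subgroup of the point $f(x) \in Y$. Hence $(H) \in \mathcal{O}_G(Y)$, and $\mathcal{O}_G(X) \subseteq \mathcal{O}_G(Y)$. Conjugacy invariance is automatic: replacing $x$ by $g(x)$ conjugates both isotropy groups.

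\textbf{Equivalence for representations.} We show that for a closed subgroup $H$, $(H) \in \mathcal{O}_G(Y)$ if and only if $I_G(Y,H) = (H)$. Applying this to every $(H) \in \mathcal{O}_G(X)$ gives the stated equivalence.

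\emph{Forward direction.} Suppose $(H) \in \mathcal{O}_G(Y)$. Then some $y$ has $G_y = gHg^{-1}$. The point $g^{-1}(y)$ has isotropy group $g^{-1}G_y g = H$, so it lies in $Y^H$ and has orbit type $(H)$. Thus $(H) \in \mathcal{O}_G(Y^H)$, and the ``in particular'' clause of \cref{thm_part1_cont:uniqueness_of_minimal_type} gives $I_G(Y,H) = (H)$. This step could also be argued directly: every $y \in Y^H$ has $G_y \supseteq H$, so $(G_y) \ge (H)$, and $(H)$ is therefore a lower bound that is attained.

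\emph{Converse.} If $I_G(Y,H) = (H)$, then by definition of the infimum $(H)$ is the orbit type of some point in $Y^H \subseteq Y$, so $(H) \in \mathcal{O}_G(Y)$.

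\textbf{Main subtlety.} The one delicate point is making sure the infimum is attained by an actual point of $Y^H$, rather than being merely a lower bound. I take this as part of the definition supplied by \cref{thm_part1_cont:uniqueness_of_minimal_type}, which asserts that a minimal orbit type exists \emph{among the points} of $Y^H$. The set $Y^H$ is nonempty since it contains $0$, so the infimum is well defined. A secondary point is that orbit type is defined using $G$-conjugacy even when restricted to $Y^H$; the forward direction handles this by translating $y$ into $Y^H$ before invoking the theorem.
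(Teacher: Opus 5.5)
Your proof is correct and follows essentially the paper's route. Necessity is exactly the remark the paper makes just before the theorem. Your equivalence argument (every isotropy group of a point in $Y^H$ contains $H$, so a realized $(H)$ is the minimum; conversely $I_G(Y,H)$ is itself the orbit type of a point of $Y^H$) is the paper's proof of \cref{thm_part1_cont:necessary_relative_isovariant} specialized to a trivial kernel, where $p_Y(H)=H$.
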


\subsection{Equivariance with Non-trivial Kernels}
\label{subsec:eq_nontri_ker}

When the feature space $Y$ is restricted to have a non-trivial kernel, i.e., $\ker \rho_Y \neq \{e\}$, the definition of an isovariant map becomes too restrictive. Since every isotropy group $G_y$ in $Y$ contains the kernel, any subgroup $H$ not containing $\ker \rho_Y$ cannot occur as an isotropy subgroup. Consequently, for any map $f : X \to Y$, an input isotropy subgroup $G_x$ that does not contain the kernel must increase.

To formalize this unavoidable symmetry increase, we introduce an operator $p_Y$. In the discussion following \cref{ex:o3sn_on_k_fold}, the presence of the $S_{k+1}$ kernel forces the input isotropy subgroup $G_x$ to become at least $D_{kh} \times S_{k+1}$ in the feature space. We generalize this observation. The operator $p_Y(H)$ is defined as the smallest subgroup containing $H$ that is compatible with the action on $Y$, given by the projection $p_{Y} = \pi_{Y}^{-1}\circ \pi_{Y}$, where $\pi_{Y}: G \to G / \mathrm{ker}\rho_{Y}$ is the natural projection. This operator is idempotent ($p_Y^2 = p_Y$) and maps any isotropy subgroup in $Y$ to itself. These properties reveal why increasing is unavoidable. For any equivariant map $f$, the relation $G_x \subseteq G_{f(x)}$ must hold. Applying the $p_Y$ operator to this inclusion gives:
\begin{equation}
    G_{x} \subseteq p_{Y}(G_{x}) \subseteq p_Y(G_{f(x)}) = G_{f(x)}.
\end{equation}

This unavoidable increasing from $G_x$ to $p_Y(G_x)$ means our goal is not to preserve $G_x$ itself, but to ensure no \textit{additional} symmetry is introduced beyond $p_Y(G_x)$. This leads to a more practical definition. We say a map $f: X \to Y$ is an \textbf{isovariant map relative to $Y$} if for all $x \in X$ it satisfies
\begin{equation}
p _Y(G_x) = G_{f(x)} \quad \iff \quad \rho _Y(G_x) = \rho _Y(G_{f(x)}).
\end{equation}
When the kernel is trivial, this definition reduces to that of a standard isovariant map. With this refined goal, we can state a necessary condition for the existence of such maps.

\begin{restatable}{theorem}
{thmOneContNecessaryRelativeIsovariant}
\label{thm_part1_cont:necessary_relative_isovariant}
A necessary condition for the existence of an isovariant map relative to $Y$ from a $G$-set $X$ to a $G$-set $Y$ is that $(p_Y(H)) \in \mathcal{O}_G(Y)$ for every $(H) \in \mathcal{O}_G(X)$. When $X$ and $Y$ are representations, this is equivalent to the condition that $I_{G}(Y, H) = (p_Y(H))$ for all $(H) \in \mathcal{O}_G(X)$.
\end{restatable}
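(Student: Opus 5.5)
The argument has two parts. The first is the necessary condition for arbitrary $G$-sets; the second is the equivalence with the infimum formulation when $X$ and $Y$ are representations.

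\emph{Necessity.} Suppose $f: X \to Y$ is isovariant relative to $Y$, and let $(H) \in \mathcal{O}_G(X)$. Choose $x \in X$ with $G_x = H$; replacing $x$ by a suitable $g(x)$ makes this possible, since $G_{g(x)} = gG_xg^{-1}$. Relative isovariance gives $G_{f(x)} = p_Y(H)$, so $p_Y(H)$ is an isotropy subgroup of a point of $Y$ and $(p_Y(H)) \in \mathcal{O}_G(Y)$. The class $(p_Y(H))$ does not depend on the representative $H$. Indeed, $\ker\rho_Y$ is normal, so $p_Y(gHg^{-1}) = g\,p_Y(H)\,g^{-1}$, because $p_Y(K) = K\cdot\ker\rho_Y$.

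\emph{Equivalence for representations.} Fix $(H) \in \mathcal{O}_G(X)$. The key identity is $Y^H = Y^{p_Y(H)}$. To see it, note that every element of $p_Y(H) = H\ker\rho_Y$ has the form $hk$ with $k$ acting trivially on $Y$, so it acts on $Y$ exactly as $h$ does. Hence the two subgroups have the same fixed points. Moreover, for every $y \in Y^H$ we have $H \subseteq G_y$, and $G_y$ already contains $\ker\rho_Y$; applying the operator gives $p_Y(H) \subseteq p_Y(G_y) = G_y$. So every orbit type occurring in $Y^H$ is $\ge (p_Y(H))$.

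\emph{Forward direction.} Assume $(p_Y(H)) \in \mathcal{O}_G(Y)$, so some $y$ has $G_y = g\,p_Y(H)\,g^{-1}$. Then $y' = g^{-1}(y)$ satisfies $G_{y'} = p_Y(H)$. In particular $y' \in Y^{p_Y(H)} = Y^H$, so $(p_Y(H)) \in \mathcal{O}_G(Y^H)$. We know every orbit type in $Y^H$ is $\ge (p_Y(H))$, and $(p_Y(H))$ is attained there. By \cref{thm_part1_cont:uniqueness_of_minimal_type}, the unique minimal type in $Y^H$ is therefore $(p_Y(H))$, i.e. $I_G(Y,H) = (p_Y(H))$.

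\emph{Converse.} If $I_G(Y,H) = (p_Y(H))$, then by the definition of the infimum this orbit type is realized by some point of $Y^H \subseteq Y$. Hence $(p_Y(H)) \in \mathcal{O}_G(Y)$.

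\emph{Main obstacle.} The one delicate point is in the forward direction: the argument must show that a point of type $(p_Y(H))$ actually lies in $Y^H$, not merely somewhere in $Y$. Otherwise the minimal type in $Y^H$ could be some other conjugate-incomparable class. The normalization by $g^{-1}$ is what handles this. We also need the partial order to be antisymmetric on conjugacy classes of closed subgroups of a compact Lie group, which holds by a dimension and component-count argument. Finally, when $\ker\rho_Y$ is trivial, $p_Y$ is the identity, and the statement reduces to \cref{thm_part1_cont:necessary_isovariant}, which serves as a consistency check.
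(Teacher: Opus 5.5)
Your proof is correct and follows the same route as the paper. You show that every isotropy subgroup occurring in $Y^H$ contains $H\ker\rho_Y = p_Y(H)$, giving $(p_Y(H)) \le I_G(Y,H)$. You then use $H \subseteq p_Y(H)$ to place a point with isotropy exactly $p_Y(H)$ inside $Y^H$ and conclude equality by minimality, with the converse immediate because $I_G(Y,H)$ is realized in $Y$. Your added remarks on the well-definedness of $(p_Y(H))$ on conjugacy classes and on antisymmetry of the order only make explicit details that the paper leaves implicit.
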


\begin{figure}[!t]
    \centering
    \vspace{-0.15cm}
    \includegraphics[width = 0.95\textwidth]{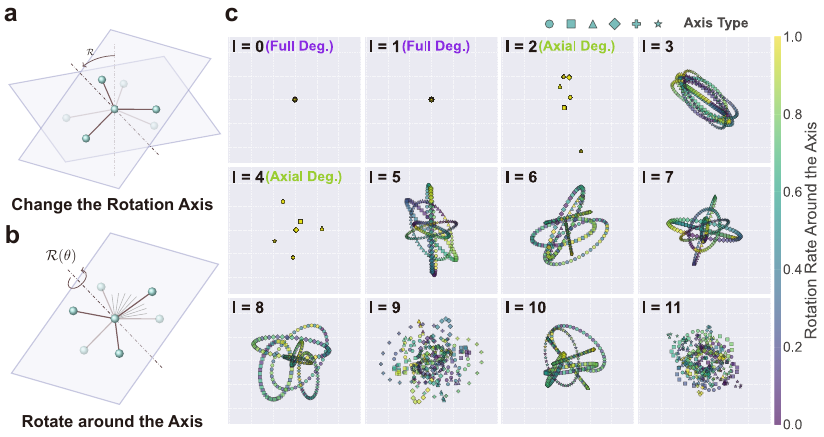}
    \vspace{-0.3cm}
    \caption{Visualization of representation spaces. (a) A $k$-fold structure is reoriented onto multiple planes. (b) Each is further rotated about the perpendicular axis. (c) All structures are embedded and projected into 2D. Marker shapes denote rotation axes, and colors denote rotation rates. Full degeneration appears at $l=0,1$, and axial degeneration at $l=2,4$.}
    \label{fig:full-axis-deg}
    \vspace{-0.3cm}
\end{figure}

\section{Computation of Orbit Types}
\label{computation_of_orbit_types}

The computation of orbit types is a classical problem, with most established results on general representations, often in the context of bifurcation theory. In representation learning tasks, however, we utilize feature spaces containing high multiplicities of these representations, which requires us to supplement the existing computational frameworks.

\subsection{Orbit Types of High-Multiplicity Representations}

For a compact Lie group $G$, any representation space $X$ can be uniquely decomposed into a direct sum of its irreducible components. For $G=SO(3)$, this decomposition is written as
\begin{equation}
\textstyle X \cong \bigoplus_{l_0=0}^{\infty} V_{l=l_0}^{\oplus m(X, V_{l=l_0})},
\end{equation}
where $V_{l=l_0}$ is the irreducible representation corresponding to the space of spherical harmonics of degree $l_0$, and $m(X, V_{l=l_0})$ is its multiplicity. For $G=O(3)$, the decomposition is similar, but the irreducible representations must also be distinguished by parity, denoted $V_{l=l_0^+}$ and $V_{l=l_0^-}$.

We begin with the foundational criterion for identifying isotropy subgroups, first established as a necessary condition by \citet{michel_symmetry_1980}.

\begin{theorem}[Michel's Criterion, \citet{michel_symmetry_1980}, App.~A]
\label{thm_part2_cont:michel_criterion_necessary}
Let $V$ be a representation of a group $G$. A necessary condition for a closed subgroup $H$ to be an isotropy subgroup in $V$ is that for any adjacent closed subgroup $H' \supsetneq H$, the dimension of the fixed-point subspace strictly decreases:
\begin{equation}
\dim V^{H'} < \dim V^H.
\end{equation}
\end{theorem}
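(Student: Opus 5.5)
Since $H'\supseteq H$, every vector fixed by $H'$ is fixed by $H$, so $V^{H'}\subseteq V^H$ and $\dim V^{H'}\le \dim V^H$ holds automatically. The content of the statement is therefore that equality cannot occur. I will argue by contraposition: assume $H=G_v$ is an isotropy subgroup for some $v\in V$, and suppose some closed subgroup $H'\supsetneq H$ has $\dim V^{H'}=\dim V^H$. We derive a contradiction.

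Because $V^{H'}$ and $V^H$ are linear subspaces with $V^{H'}\subseteq V^H$, equal (finite) dimension forces $V^{H'}=V^H$. By definition of the isotropy subgroup, $v$ is fixed by $H=G_v$, so $v\in V^H=V^{H'}$. Hence every $h'\in H'$ fixes $v$, which gives $H'\subseteq G_v=H$. This contradicts $H'\supsetneq H$. So for every closed $H'\supsetneq H$ we must have $V^{H'}\subsetneq V^H$, and therefore $\dim V^{H'}<\dim V^H$.

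The argument uses neither adjacency nor compactness. Adjacency only matters for the converse, where it is used to certify isotropy subgroups; here it merely restricts which $H'$ need to be checked. The one point needing care is the linear-algebra step "same dimension plus inclusion implies equality." This relies on finite-dimensionality, which the paper assumes throughout, and on $V^H$ being a linear subspace, which holds because the action is linear. I expect no real obstacle.
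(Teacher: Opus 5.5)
Your proof is correct. It is the standard argument: the inclusion $V^{H'}\subseteq V^H$ together with equal dimension forces $V^{H'}=V^H$, so $H'$ fixes $v$ and hence $H'\subseteq G_v=H$, contradicting $H'\supsetneq H$. The paper does not reprove this criterion (it cites Michel), but it uses exactly this step itself when proving $I_G(V,H)=(G_{V^H})$ for high-multiplicity representations, so your approach matches the paper's own reasoning.
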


While this condition is not sufficient for all representations, its sufficiency can be guaranteed under certain common conditions. We define a representation $V$ of a group $G$ as a high-multiplicity representation if for every non-zero isotypic component corresponding to an irreducible representation $V_i$, its multiplicity $m(V, V_i)$ is greater than $\dim G$.

\begin{restatable}{proposition}{propTwoCntMichelCriterionSufficiency}
\label{prop_part2_cont:michel_criterion_sufficiency}
For a high-multiplicity representation $V$, the necessary condition stated in \cref{thm_part2_cont:michel_criterion_necessary} is also sufficient.
\end{restatable}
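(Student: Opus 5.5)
The plan is to show that, for a closed subgroup $H$ satisfying Michel's condition, a \emph{generic} point of $V^H$ has isotropy group exactly $H$. Every $x\in V^H$ satisfies $G_x\supseteq H$. So it suffices to show that the "bad" set
\[
S=\{x\in V^H : G_x\supsetneq H\}
\]
is a proper subset of $V^H$. In fact I will aim to show it is contained in a finite union of sets of dimension strictly less than $\dim V^H$.

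\textbf{Step 1 (reduction to finitely many types).} By the standard finiteness of orbit types for representations of compact Lie groups, $V$ has only finitely many orbit types $(K_1),\dots,(K_r)$. Hence
\[
S=\bigcup_{j}S_j,\qquad S_j=\{x\in V^H:(G_x)=(K_j),\ G_x\supsetneq H\}.
\]
Each $S_j$ is contained in the image of the smooth (indeed semialgebraic) map
\[
\Phi_j:\{g\in G: gK_jg^{-1}\supsetneq H\}\times V^{K_j}\to V,\qquad (g,v)\mapsto gv,
\]
because $x=gv$ with $v\in V^{K_j}$ has $G_x\supseteq gK_jg^{-1}$. Therefore
\[
\dim S_j\le \dim G+\dim V^{K_j},
\]
provided $S_j\neq\emptyset$.

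\textbf{Step 2 (codimension gain from high multiplicity).} Suppose $K=gK_jg^{-1}\supsetneq H$. Closed subgroups of a compact Lie group satisfy the descending chain condition, since each proper closed subgroup drops either the dimension or the number of components. So $K$ contains a closed subgroup $H'$ that is adjacent to $H$ (minimal among closed subgroups properly containing $H$). By hypothesis $\dim V^{H'}<\dim V^H$, and therefore $V^K\subseteq V^{H'}\subsetneq V^H$.

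Now decompose $V=\bigoplus_i V_i\otimes_{D_i} W_i$, where $W_i=\mathrm{Hom}_G(V_i,V)$ and $D_i=\mathrm{End}_G(V_i)\in\{\mathbb R,\mathbb C,\mathbb H\}$. Fixed points are taken factorwise, so
\[
V^L\cong\bigoplus_i V_i^L\otimes_{D_i}W_i\quad\text{for every subgroup }L,\qquad\text{hence}\qquad \dim V^L=\sum_i m_i\dim V_i^L,
\]
with $m_i=m(V,V_i)$ counted appropriately. Since $V_i^K\subseteq V_i^H$ for all $i$, with strict inclusion for at least one $i$, we get
\[
\dim V^H-\dim V^K\ge m_i>\dim G.
\]
Combining with Step 1 gives $\dim S_j<\dim V^H$ for every $j$.

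\textbf{Step 3 (conclusion).} A finite union of semialgebraic (or smooth-image) sets of dimension less than $\dim V^H$ has measure zero in $V^H$ and cannot cover it. Hence some $x\in V^H$ has $G_x=H$, i.e.\ $H$ is an isotropy subgroup.

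I expect the main obstacle to be Step 2: the bookkeeping that a strict drop in fixed-point dimension occurs inside some single isotypic block and gets multiplied by the full multiplicity. This has to be stated carefully over $\mathbb R$, where $\mathbb C$- or $\mathbb H$-type irreducibles make the multiplicity count relative to $D_i$. I would also verify that the paper's convention "multiplicity $>\dim G$" means real multiplicity in this sense. A secondary technical point is justifying the dimension bound for $\Phi_j$. I would handle it by noting that all sets involved are semialgebraic, since compact Lie groups are real algebraic and the action is algebraic, so image dimension is bounded by source dimension.
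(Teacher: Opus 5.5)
Your proof is correct, and it reaches the conclusion by a different route than the paper. Both arguments share the step you identify as the main obstacle. For every closed $K \supsetneq H$ there is an adjacent $H'$ with $H \subsetneq H' \subseteq K$, so $V^K \subseteq V^{H'} \subsetneq V^H$. The strict drop must occur in some isotypic block $V_i^{\oplus m_i}$, and there $(V_i^{\oplus m_i})^L = (V_i^L)^{\oplus m_i}$ multiplies it into a drop of at least $m_i > \dim G$; hence $\dim V^K + \dim G < \dim V^H$. The $\mathrm{End}_G(V_i)$ bookkeeping is unnecessary: taking $m_i$ to be the number of real summands matches the paper's convention.

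From this point the two proofs diverge. The paper invokes the Ihrig--Golubitsky criterion (\cref{prop_part2_app:ihrig_golubitsky_criterion_sufficiency}) as a black box, using only the crude bound $0 \le \alpha_G(H,H') \le \dim G$. Because that criterion is stated for faithful representations, the paper must first pass to $G/\ker\rho_V$. You instead prove sufficiency directly: the points of $V^H$ with isotropy strictly larger than $H$ lie in finitely many sets $G\cdot V^{K_j}$, each of dimension at most $\dim G + \dim V^{K_j} < \dim V^H$. This is essentially the mechanism behind the Ihrig--Golubitsky criterion, with the normalizer term $\dim N_G(H,K_j) - \dim N_G(K_j)$ replaced by $\dim G$, which costs nothing in the high-multiplicity regime.

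Your route buys three things. It is self-contained. It needs no faithfulness reduction: if $\ker\rho_V \not\subseteq H$, any adjacent supergroup inside $H\ker\rho_V$ has the same fixed space as $H$, so Michel's condition fails anyway. And it gives the stronger statement that points with isotropy exactly $H$ have full measure in $V^H$.

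The price is that you must supply the dimension theory the citation hides. Since $\Phi_j$ lands in $V$ rather than in $V^H$, you need $G\cdot V^{K_j} \cap V^H$ to be null in $V^H$, not merely $G\cdot V^{K_j}$ to be null in $V$. Either of your suggested tools handles this: semialgebraic dimension after replacing $G$ by the real-algebraic group $\rho_V(G)$, or the fact that a locally Lipschitz image of a $d$-manifold is $\mathcal{H}^{s}$-null for every $s > d$.
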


This criterion is particularly powerful for two reasons. First, it offers a computationally convenient method in the form of a chain recursion, which only requires checking adjacent subgroups. The dimensions of the necessary fixed-point spaces can be calculated via the trace formula ~\citep{golubitsky_singularities_1988}. Second, the sufficiency condition is frequently met in our applications, as it holds for all finite groups and for feature spaces \rebuttal{with a high number of channels. Based on the result of \cref{prop_part2_cont:michel_criterion_sufficiency}, we design an orbit type test \cref{algo:orbit_type_test} and a symmetry infimum calculation \cref{algo:symmetry_inf}. Using the two algorithms described above, we have characterized all instances of symmetry increase for the closed subgroups of $SO(3)$ or $O(3)$ in the representations $V_{l = l_{0}}^{\oplus r}$ and $V_{l = l_{0}^{\pm}}^{\oplus r}$ for $r > 3$, respectively, see \cref{subsec:cal_sym_inf}. We now illustrate our algorithms with a simple example.}

\begin{figure}[!t]
\vspace{-0.15cm}
\centering
\begin{minipage}[t]{0.48\textwidth}
\begin{algorithm}[H]
\caption{Orbit Type Test for High-Multiplicity Representations}
\label{algo:orbit_type_test}
\KwData{Symmetry group $G$;\newline Closed subgroup $H \subset G$;\newline High-Multiplicity Rep. $V$ of $G$}
\KwResult{$\texttt{is\_in}((H), \mathcal{O}_{G}(V))$.}

Let set $S=\{H_i\} \subset G$ to be all adjacent closed supergroups of $H$ in $G$\;
$\mathcal{O} \leftarrow \varnothing$\;

\vspace{0.12cm}
$d_H \leftarrow \dim V^H$\;
\vspace{0.12cm}
\For{$H_i$ in S}{
    $d_{H_i} \leftarrow \dim V^{H_i}$\;
    \If{$d_H - d_{H_i} = 0$}{
        \Return \texttt{False}\;
    }
}
\Return \texttt{True}\;
\end{algorithm}
\end{minipage}
\hfill
\begin{minipage}[t]{0.48\textwidth}
\begin{algorithm}[H]
\caption{Symmetry Infimum Calculation}
\label{algo:symmetry_inf}
\KwData{Symmetry group $G$;\newline Closed subgroup $H \subset G$;\newline Rep. $V$ of $G$.}
\KwResult{Symmetry inf. $I_G(V, H)$.}

\If{$\texttt{\upshape is\_in}((H), \mathcal{O}_{G}(V))$}{
    \Return $(H)$\;
}

Let set $S=\{H_i\}$ to be all closed supergroups of $H$ in $G$\;
$\mathcal{O} \leftarrow \varnothing$\;
\For{$H_i$ in S}{
    \If{$\texttt{\upshape is\_in}((H_i), \mathcal{O}_{G}(V))$}{
        Add $(H_i)$ to $\mathcal{O}$\;
    }
}

\Return $\min(\mathcal{O})$\;
\end{algorithm}
\end{minipage}
\vspace{-0.3cm}
\end{figure}

\begin{example}
\label{ex:infimum_dnh}

\rebuttal{We illustrate our algorithms by calculating the orbit type and symmetry infimum for the geometric symmetry $D_{kh}$ ($k > 2$)  of the $k$-fold from \cref{ex:o3sn_on_k_fold}, considered as a subgroup of $O(3)$. The calculation is performed in the high-multiplicity representation space $Y = V_{l = l_{0}}^{\oplus r}$ ($r > 3, l_0 > 0$). Here we provide only a sketch of the derivation, the full procedure is provided in \cref{subsec:syminc_for_dkh}.}

First, we apply the orbit type test from \cref{algo:orbit_type_test}. This involves comparing the dimension of the fixed-point space of $D_{kh}$ with that of its adjacent supergroups (e.g., $D_{pk,h}$ and, for $k=4$, $O_h$). The analysis shows that $(D_{kh})$ is an orbit type if and only if $l_0 \geq k$ and $l_0, k$ have the same parity. Next, we apply the symmetry infimum calculation from \cref{algo:symmetry_inf}. This requires identifying the minimal orbit type among all supergroups of $D_{kh}$, including non-adjacent ones like $D_{\infty h}$ and $O(3)$. The final results are summarized in \cref{tab:infimum_summary_dnh}.

\begin{table}[H]
\centering
\vspace{-0.5cm}
\caption{The symmetry infimum $I_{O(3)}( V_{l = l_0}^{\oplus r},D_{kh})$ for $k>2, r>3, l_0>0$.}
\label{tab:infimum_summary_dnh}
\setlength{\tabcolsep}{6pt}
\begin{tabular}{lcccccc}
\toprule
 & \multicolumn{2}{c}{$l_0 < k$} & \multicolumn{2}{c}{$k \leq l_0 < 2k$} & \multicolumn{2}{c}{$l_0 \geq 2k$}\\
 & $l_0$ is even & $l_0$ is odd  & $l_0$ is even & $l_0$ is odd  & $l_0$ is even & $l_0$ is odd \\
\midrule
$k$ is even & $(D_{\infty h})$ & $(O(3))$  & $(D_{kh})$  &  $(O(3))$ & $(D_{kh})$ & $(O(3))$ \\
$k$ is odd  & $(D_{\infty h})$  & $(O(3))$ & $(D_{\infty h})$ & $(D_{kh})$ & $(D_{2kh})$ & $(D_{kh})$ \\
\bottomrule
\end{tabular}
\vspace{-0.3cm}
\end{table}

\end{example}

The analysis in \cref{ex:infimum_dnh} predicts three types of degeneration for the $k$-fold inputs from \cref{ex:eq_encode_point_cloud}:
\begin{itemize}
    \item \textbf{Half Degeneration:} The \rebuttal{symmetry} infimum of $G_x$ is $(D_{2kh} \times S_{k+1})$. The feature cannot distinguish the $k$-fold from itself rotated by $\pi/k$ around the $z$-axis.
    \item \textbf{Axial Degeneration:} The \rebuttal{symmetry} infimum of $G_x$ is $(D_{\infty h}  \times S_{k+1})$. The feature cannot distinguish the $k$-fold from itself rotated by \emph{any} angle around the $z$-axis.
    \item \textbf{Full Degeneration:} The \rebuttal{symmetry} infimum of $G_x$ is $(O(3)  \times S_{k+1})$. The feature cannot distinguish the $k$-fold from itself rotated by \emph{any} angle around \emph{any} axis.
\end{itemize}

\rebuttal{We consider encoding $k$-fold point clouds using equivariant neural networks and visualize the resulting embeddings.} The three degenerations are experimentally verified in our visualizations, with \cref{fig:full-axis-deg} showing full and axial degeneration, and \cref{fig:axis-half-deg} showing axial and half degeneration. Although derived assuming high multiplicity ($r > 3$) in the feature representation, these predictions are identical for the single representation case ($r=1$), \rebuttal{see \cref{subsec:cal_sym_inf}}.

\subsection{\rebuttal{Guidelines for Managing Symmetry Increase}}
\label{subsec:sym_inc_app}

\rebuttal{It is a general property of representations that the orbit types of a direct sum are related to those of its components by $\mathcal{O}_G(V_{1}) \cup \mathcal{O}_G(V_{2}) \subseteq \mathcal{O}_G(V_{1} \oplus V_{2})$, and $I_G(V_{1} \oplus V_{2},H) \le I_{G}(V_{i},H)$ for $i=1,2$, with equality conditions discussed in \cref{subsec:comp_prop}. These properties provide a direct mechanism for controlling the symmetry increase of an equivariant feature, that is to choose components whose symmetry infimum (computed as described in \cref{subsec:cal_sym_inf} for $G=SO(3)$ or $O(3)$) align with the desired behavior for task-relevant symmetries. Regarding the selection of the feature space $Y$ in equivariant representation learning, this principle translates into two guidelines.}

\rebuttal{For \textbf{orientation-dependent tasks} (e.g., \cref{sec:sym_emb_norm}), when considering the kernel of feature space, it is crucial to avoid non-trivial symmetry increase (i.e. ensuring the map is relative isovariant) since such increases can lead to the accidental loss of orientational information. Therefore, for a given input symmetry $(H)$, one should select feature components that contain the orbit type $(p_{Y}(H))$.}

\rebuttal{For \textbf{general tasks} (e.g., \cref{sec:qm9}), certain forms of symmetry increase must be avoided, as the output symmetry reflects the dimensionality of the fixed-point subspace where the equivariant features lie (see \textit{Remark} of \cref{prop_thm_app:high_mult_sym_inf}). In this context, one should generally avoid components where the symmetry infimum indicates a severe compression of the fixed-point subspace. Specifically, one must be cautious of components corresponding to non-trivial representations where the symmetry increases to the full group, as this causes the component to be annihilated and lose all discriminative power.}

\begin{figure}[!t]
    \centering
    \vspace{-0.15cm}
    \includegraphics[width = 0.95\textwidth]{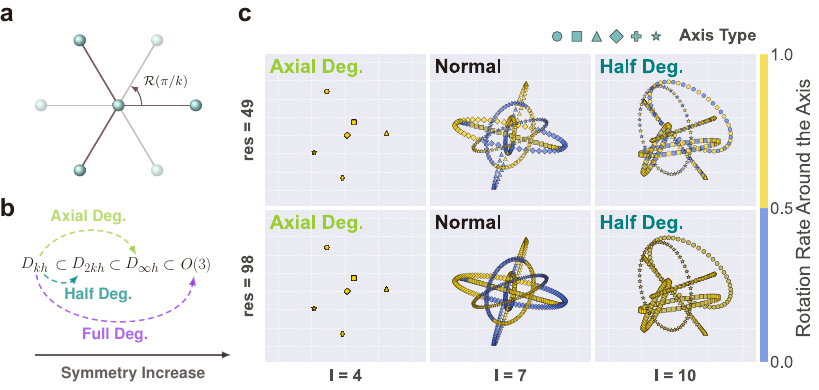}
    \vspace{-0.3cm}
    \caption{Visualization of representation spaces. (a) A $k$-fold ($k$ is odd) structure is rotated $\pi/k$ about the perpendicular axis. (b) The path of symmetry increase. (c) Half degeneration appears at $l=10$. At $\mathrm{res}=98$ and $\mathrm{res}=49$, the overall shape is identical, but the yellow data points (\emph{i.e.} the second half of the rotation) completely cover the blue data points.}
    \label{fig:axis-half-deg}
    \vspace{-0.3cm}
\end{figure}

\section{Density of (Almost) Isovariant Maps}
\label{density_of_isovariant_maps}

We now connect the preceding theory to a practical machine learning context by introducing models for the data distribution and for the parameterized map. We show that the necessary conditions for isovariance established previously become sufficient under a relaxed definition of isovariance.

\subsection{The Manifold Hypothesis}
\label{subsec:the_manifold_hypothesis}

\rebuttal{Motivated by the manifold hypothesis and the broader considerations summarized in \cref{subsec:manifold_bg}, we model the data distribution as being supported on a finite union of smooth, compact submanifolds $M=\bigcup_j M_j$ embedded in $X$. When a group $G$ acts on $X$, this action equips each $M_j$ with a natural $G$-manifold structure.}

The central question is: when does an isovariant map $f: M \to Y$ exist? The existence is a non-trivial issue. The counterexample in \cref{cex_part3_app:non_isovariant} demonstrates that the necessary condition of orbit type inclusion is not sufficient. Specifically, it shows that an isovariant map can fail to exist precisely because the multiplicities of the irreducible representations in the feature space are insufficient.

The non-existence of perfectly isovariant maps motivates a more practical, relaxed definition: a map that is isovariant almost everywhere. To formalize this, we equip $M$ with the $d$-dimensional Hausdorff measure $\mu_M$, where $d = \max_j \{\dim M_j\}$. This allows us to identify subsets of \emph{measure zero} as negligible. Note that $M_{(H)}$ is a finite union of submanifolds, then $f$ is \textbf{almost isovariant relative to $Y$} if for every orbit type $(H)$ in the data support, the isovariance condition
\begin{equation}
    \rho _Y(G_x) = \rho _Y(G_{f(x)}),
\end{equation}
holds for all points $x \in M_{(H)}$ except for a subset of $\mu_{M_{(H)}}$-measure zero. This ensures that any undesired increase in symmetry occurs only on a negligible portion of the data.

\subsection{Genericity of (Almost) Isovariant Maps}

For \cref{ex:eq_encode_point_cloud}, we select TFN~\citep{thomas2018tensor}, a classic ENN based on tensor products, as our parameterized model. We provide the complete formulation to \cref{subsec:proof_of_parametric_cinf_density}. An important property of TFN parameterizations $\mathcal{F}_{\mathrm{TFN}}$ is that they satisfy a universal approximation theorem \citep{dym_universality_2020}. In topology, this is equivalent to $\mathcal{F}_{\mathrm{TFN}}$ being dense in equivariant function space $C_{G}(X, Y)$ with respect to the $C^0$ topology. In fact, we can establish a stronger approximation theorem.

\begin{restatable}{theorem}
{thmThreeContParametricCinfDensity}
\label{thm_part3_cont:parametric_cinf_density}
In \cref{ex:eq_encode_point_cloud}, the function families  $\mathcal{F}_\mathrm{TFN}$ with smooth non-polynomial activation function are $C^{\infty}$-dense in the space of smooth equivariant maps $C_{G}^{\infty}(X, Y)$. That is, for any integer $r \geq 0$, any map $f \in C_{G}^{\infty}(X, Y)$, any compact set $K \subset X$, and any $\epsilon > 0$, there exists $g \in \mathcal{F}_\mathrm{TFN}$ such that
\begin{equation}
    \textstyle\max_{x \in K} \left\| D^k f(x) - D^k g(x) \right\| < \epsilon, k \leq r.
\end{equation}
\end{restatable}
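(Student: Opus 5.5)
Our plan is to reduce the $C^\infty$-density statement to a polynomial approximation problem and then to the known $C^0$ universality of TFN, following the structure of \citet{dym_universality_2020}.

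\textbf{Step 1: equivariant polynomials are $C^\infty$-dense.} Fix $f \in C_G^{\infty}(X,Y)$, a compact set $K$, an order $r$ and $\epsilon>0$. Since $K$ is compact we may enlarge it to a $G$-invariant compact ball $B$, because $G$ is compact. By the Weierstrass--Nachbin theorem for derivatives, or simply by convolution with a Gaussian followed by truncation of the Taylor series, there is a polynomial map $p$ with $\|D^k f - D^k p\|_{B} < \epsilon$ for all $k \le r$.

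We then symmetrize $p$ with the Haar measure:
\begin{equation}
\bar p(x) = \int_G \rho_Y(g)^{-1} p(\rho_X(g)x)\, dg .
\end{equation}
This map $\bar p$ is again a polynomial of the same degree, and it is equivariant. The action on $X$ is linear and orthogonal, and $B$ is invariant, so the derivatives commute with averaging up to orthogonal conjugation. Since $f = \bar f$, we obtain $\|D^k f - D^k\bar p\|_B < \epsilon$. Hence equivariant polynomials are $C^r$-dense on compacts.

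\textbf{Step 2: TFN approximates each equivariant polynomial in $C^r$.} \citet{dym_universality_2020} show that every $G$-equivariant polynomial $X\to Y$, with $G = H\times S_n$ acting as in \cref{ex:eq_encode_point_cloud}, lies in the span of finitely many explicit building blocks. These blocks are tensor products of spherical harmonics of the point coordinates, summed over points, followed by invariant and equivariant linear maps. In the TFN architecture, the tensor-product layers and linear layers realize these blocks exactly, except for the products and polynomial nonlinearities on the scalar (invariant) channels, which TFN produces only through the activation $\sigma$.

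So the remaining task is to realize the needed scalar polynomial gates. We approximate them in $C^r$ on compacts by shallow networks in $\sigma$. For smooth non-polynomial $\sigma$, monomials are limits of divided differences of $\sigma(w t + b)$ with respect to $w$, and these limits hold in $C^r$ on compacts. Equivalently, we can use the $C^r$ version of the Leshno--Pinkus universality theorem, due to Pinkus, which holds for smooth non-polynomial activations. The equivariant tensor-product parts are exact, so composing preserves $C^r$ closeness by the chain rule, provided every intermediate quantity stays in a compact set. This follows from continuity on $K$.

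\textbf{Step 3 (main obstacle): stability of $C^r$ closeness under composition and multiplication.} We need a quantitative statement: if $g_m\to g$ and $h_m\to h$ in $C^r$ on compacts, then $g_m\cdot h_m \to g\cdot h$ and $\sigma\circ g_m\to\sigma\circ g$ in $C^r$. This follows from the Leibniz rule and the Faà di Bruno formula, together with uniform bounds on the derivatives of $\sigma$ on compacts.

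The delicate point is that TFN gates its nonlinearity through norms of $l>0$ features. The norm map is not smooth at the origin, so the approximation must be routed through squared norms, which are invariant polynomials, or through $l=0$ channels only. We would check that the construction of Step 2 uses only such smooth invariants. Combining Steps 1--3 with a triangle inequality then gives $g\in\mathcal F_{\mathrm{TFN}}$ satisfying the bound in the statement.
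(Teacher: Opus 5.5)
Your plan matches the paper's proof. You Haar-symmetrize a $C^r$ polynomial approximant on a $G$-invariant compact set, use the Dym--Maron spanning argument to place all equivariant polynomials in the polynomial-radial family $\mathcal{F}_{\mathrm{TFN}}^{\mathrm{poly}}$, replace each polynomial radial function by an MLP via Pinkus's $C^r$ theorem, and propagate the error through the finitely many layers by the Leibniz and chain rules. One correction of framing: in the paper's TFN the activation enters only through the radial functions $h_l(\|x_i-x_j\|_2^2)$ (there are no norm gates, and products of features come exactly from the filter tensor products), and the smoothness issue you flag is handled by the solid-harmonic filter $h_l(\|x\|_2^2)\mathcal{Y}_{lm}(x)$, which is smooth at coincident points.
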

Here, $D^k$ denotes the $k$-th order total derivative operator. 
A significant portion of maps within a dense parameterization reflects the \emph{generic} properties of the mapping space. For equivariant maps, a key generic property, closely related to almost isovariance, is that the dimension of the set of points where the orbit type is increase from $(H)$ to $(H')$ by a map $f$ is constrained for a generic map. The following theorem shows that for expressive models with $C^\infty$ approximation capabilities, such as the TFN discussed, almost isovariance is a generic property, and full relative isovariance can be achieved by increasing representation multiplicity. As shown in \cref{cex_part3_app:non_isovariant}, this requirement is tight.

\begin{restatable}{theorem}
{thmThreeContSufficientRelativeIsovariant}
\label{thm_part3_cont:sufficient_relative_isovariant}Let $\mathcal{F}$ be a equivariant parametrization with $C^{\infty}$ approximation capability. If for every $(H) \in \mathcal{O}_G(M)$ we have $(p_{Y}(H)) \in \mathcal{O}_G(Y)$, then for any finite union of compact, smooth $G$-submanifolds $M \subset X$, any $f \in C^{\infty}_{G}(X, Y)$, any integer $r \ge 0$, and any $\epsilon > 0$, there exists a map $g \in \mathcal{F}$ such that
\begin{equation}
    \textstyle\max_{x \in M} \lVert D^k f(x) - D^k g(x) \rVert < \epsilon, k \leq r,
\end{equation}
and $g|_M$ is almost isovariant relative to $Y$. Furthermore, if the feature space $Y$ contains a representation $\tilde{Y}^{\oplus r}$ for an integer $r > \max_j \{\dim M_j\}$, where $Y$ satisfies the condition $(p_{Y}(H)) \in \mathcal{O}_G(Y)$, then the approximating map $g|_M$ can be chosen to be isovariant relative to $Y$.
\end{restatable}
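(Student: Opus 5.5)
We reduce to an equivariant transversality statement for $C^\infty_G(X,Y)$ and then use the $C^\infty$ approximation property of $\mathcal{F}$ to transfer it to the parametrization. First we pass to $\bar G = G/\ker\rho_Y$ acting on $Y$. Relative isovariance for $G$ is then ordinary isovariance with respect to the induced action, once each input isotropy $H$ is replaced by $p_Y(H)$. So we may assume $\ker\rho_Y$ is trivial and $p_Y=\mathrm{id}$, and the hypothesis becomes $(H)\in\mathcal{O}_G(Y)$ for all $(H)\in\mathcal{O}_G(M)$.

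Fix an orbit type $(H)$ occurring in $M$. Using the slice theorem and $G$-equivariance, the failure set $\{x\in M_{(H)} : G_{f(x)}\supsetneq G_x\}$ reduces to a statement on the fixed set: for $x\in M^H_{(H)}$ it is
\[
  (f|_{M^H})^{-1}\bigl(Y^H\setminus Y_H\bigr),
  \qquad Y_H=\{y: G_y=H\}.
\]
By \cref{thm_part1_cont:uniqueness_of_minimal_type} together with the hypothesis, $Y_H$ is open and dense in the linear space $Y^H$. Its complement is a finite union of the linear subspaces $Y^{H'}$ for the finitely many relevant $H'\supsetneq H$ (orbit-type finiteness for compact $G$ on the compact set), each of codimension $c_{H'}=\dim Y^H-\dim Y^{H'}\ge1$.

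The key step is perturbation. Equivariant maps restricted to $M^H$ are exactly $N(H)$-equivariant maps $M^H\to Y^H$. Adding $\epsilon\,\varphi\, v$, with $v\in Y^H$ and $\varphi$ an $N(H)$-invariant bump function extended $G$-equivariantly, gives enough freedom to apply a parametric transversality (Sard) argument on $M^H_{(H)}$, where $N(H)/H$ acts freely. The outcome is that for generic $f$, $f|_{M^H_{(H)}}$ is transverse to every $Y^{H'}$. Hence the preimage has codimension at least $c_{H'}\ge1$, and so measure zero in $M^H_{(H)}$. Saturating by $G$, the bad set in $M_{(H)}$ has codimension $\ge1$ and therefore $\mu_{M_{(H)}}$-measure zero. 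We iterate over the finitely many types; openness follows because transversality on compact strata is an open condition, and density follows from the perturbation. This yields a $C^\infty$-open dense set of almost isovariant maps. Since $\mathcal{F}$ is $C^\infty$-dense (e.g.\ \cref{thm_part3_cont:parametric_cinf_density}), we first perturb $f$ to such a generic $f'$ and then approximate $f'$ by $g\in\mathcal{F}$ within the open neighborhood, staying $\epsilon$-close in the $C^r$ norm on $M$.

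For the full isovariance claim with $\tilde Y^{\oplus r}\subset Y$, $r>\max_j\dim M_j$, the fixed spaces scale linearly: $\dim Y^{H}-\dim Y^{H'}\ge r(\dim\tilde Y^H-\dim\tilde Y^{H'})\ge r$, because $\tilde Y$ itself realizes $(H)$, so the difference in the $\tilde Y$ factor is at least $1$. Thus $c_{H'}>\dim M^H$, and transversality forces the preimage to be empty. The main obstacle is making transversality genuinely equivariant. Perturbations must be chosen within $C^\infty_G$, and one must check that they span $Y^H$ pointwise at points of type $(H)$. I would handle this by working on the free $N(H)/H$-manifold $M^H_{(H)}$, descending to the quotient where ordinary jet transversality applies, and lifting back. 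Care is also needed at lower-dimensional strata boundaries, which I would treat by induction on orbit type, from largest to smallest.
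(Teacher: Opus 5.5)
Your architecture matches the paper's: restrict to $M_H$, make $f$ generically transverse to the bad set in $Y^H$, count codimension, scale with $r$, then combine an open dense set of good maps with the $C^\infty$-density of $\mathcal{F}$. However, your codimension count has a genuine gap whenever $G$ is positive-dimensional, which includes the paper's main case $O(3)\times S_n$.

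You write $Y^H\setminus Y_H$ as a union of linear subspaces $Y^{H'}$ over ``the finitely many relevant $H'\supsetneq H$''. Orbit-type finiteness only gives finitely many \emph{conjugacy classes} $(H')$. The isotropy subgroups of class $(H')$ that contain $H$ are the $gH'g^{-1}$ with $g\in N_G(H,H')$. This family has dimension $\alpha_G(H,H')=\dim N_G(H,H')-\dim N_G(H')$, which is typically positive: for $G=SO(3)$, $H=\{e\}$, $H'=SO(2)$ it is the $2$-parameter family of axes. Two steps then break:
\begin{itemize}
\item ``Generic $f$ is transverse to every $Y^{H'}$'' is an uncountable family of conditions, so Sard/Baire do not give it, and it usually fails. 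For instance, a closed curve in $\mathbb{R}^2\setminus\{0\}$ with winding number $0$ is non-transversally tangent to a line through the origin at a maximum of its angle function.
\item Even granting it, an uncountable union of preimages of codimension $c_{H'}$ need not be null.
\end{itemize}
The right object is the $N_G(H)$-invariant stratum $Y^H_{(H')}$. By \cref{lem_part2_app:bundle_structure_of_stratum} it has dimension $\dim Y^{H'}+\alpha_G(H,H')$, so its codimension in $Y^H$ is $\dim Y^H-\dim Y^{H'}-\alpha_G(H,H')$, not $c_{H'}$. For the almost-isovariance part this is repairable with a fact you already quote. Since $Y_H$ is open and dense in $Y^H$, each stratum $Y^H_{(H')}$ with $(H')>(H)$ has empty interior, hence codimension $\ge 1$. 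This is exactly how the paper obtains positivity.

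For the full isovariance claim the omission matters quantitatively. Your inequality $\dim\tilde Y^H-\dim\tilde Y^{H'}\ge 1$ (Michel's condition) only yields codimension $\ge r-\alpha_G(H,H')$, which need not exceed $\dim M_j$. What you need is the stronger $\dim\tilde Y^H-\dim\tilde Y^{H'}-\alpha_G(H,H')\ge 1$. This follows from the open density of $\tilde Y_H$ in $\tilde Y^H$ (the hypothesis on $\tilde Y$) by the same empty-interior argument. Then $r(\dim\tilde Y^H-\dim\tilde Y^{H'})-\alpha_G(H,H')\ge r>\dim M_j$, as in the paper. As written, your argument is valid only for finite $G$, where $\alpha_G\equiv 0$ and there really are finitely many supergroups.

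A smaller point: ``transversality on compact strata is an open condition'' is not available, because $M_{(H)}$ is not compact and $Y^H_{(H')}$ is not closed. Openness, not mere residuality, is essential here, since $\mathcal{F}$ is only dense. The paper gets it from Whitney regularity of $\{(M_{(H)}\times Y_{(H')})_{(H)}\}$ together with an induction on stratum dimension, starting from Bierstone's equivariant general position (\cref{prop_part3_app:generic_transversality}). Your planned induction over orbit types would have to supply the same Whitney-(a) stability near the frontier.
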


\section{Experiment}
\label{sec:experiment}

\rebuttal{We validate our theoretical analysis through three experiments: representation-space visualizations in \cref{sec:vis_of_representation}, a geometric graph discrimination task in \cref{sec:sym_emb_norm}, and a molecular isotropic polarizability prediction task on QM9~\citep{ramakrishnan2014quantum} in \cref{sec:qm9}. Across all experiments, we consider two equivariant architectures, TFN~\citep{thomas2018tensor} and HEGNN~\citep{cen_are_2024}: TFN is used in the first two experiments, whereas HEGNN is employed in the last two. All detailed experimental settings are provided in \cref{sec:detailed_experiment}.}

\subsection{Visualization of Representation Space}
\label{sec:vis_of_representation}

To provide a clearer illustration of our theory, we present visualizations of the representation spaces of different degrees, obtained from the $3$-fold structure.  

\textbf{Dataset. }
We first construct a $k$-fold structure lying in the $xOy$ plane (here $k = 3$), and then apply random rotations to place it on $m$ distinct planes (here $m = 6$), as illustrated in~\cref{fig:axis-half-deg}(a). Subsequently, as shown in~\cref{fig:axis-half-deg}(b), each structure is further rotated about the axis perpendicular to its plane. The rotation angle $\theta \in [0, 2\pi/k)$ is uniformly discretized into $\mathrm{res}$ candidate values, defined as $\{ 2\pi i / (k \cdot \mathrm{res}) \}_{i = 0}^{\mathrm{res} - 1}$. Unless otherwise stated, we use $\mathrm{res}=49$ to verify the half-degeneration. We also consider the doubled resolution $\mathrm{res}=98$.

\textbf{Embeddings. }
For the resulting $m\cdot \mathrm{res}$ candidate structures, we compute the graph-level features via randomly initialized single-layer TFN~\citep{thomas2018tensor} with detailed setting in \cref{sec:detailed_vis}. For $l_0=0$ the feature dimension is 1; for visualization we set the second coordinate to zero. For $l_0\geq 1$ the feature dimension is $2l_0+1>2$; we reduce dimensionality via random projection and then rescale all features to a common range so that visualizations are comparable. Data points are plotted in ascending order of plane index; for structures on the same plane they are ordered by increasing rotation angle about the axis.

\textbf{Results. }
Detailed experimental results are presented in \cref{fig:full-axis-deg,fig:axis-half-deg}. The former shows the input symmetry with $l_0 \leq 11$ increase to $(O(3)  \times S_{k+1})$ (Full degeneration, $l_0=0,1$), $(D_{\infty h}  \times S_{k+1})$ (Axial degeneration, $l_0=2,4$), or remains non-degenerate. The latter shows the symmetry increase to $(D_{2kh}  \times S_{k+1})$ at $l_0=10$. These experimental results are consistent with \cref{ex:infimum_dnh}.

\rebuttal{
\subsection{Expressivity on Symmetric Graphs}
\label{sec:sym_emb_norm}

\begin{wrapfigure}[12]{r}[0pt]{0.5\textwidth}
    \centering
    \vspace{-0.3cm}
    \includegraphics[width=\linewidth]{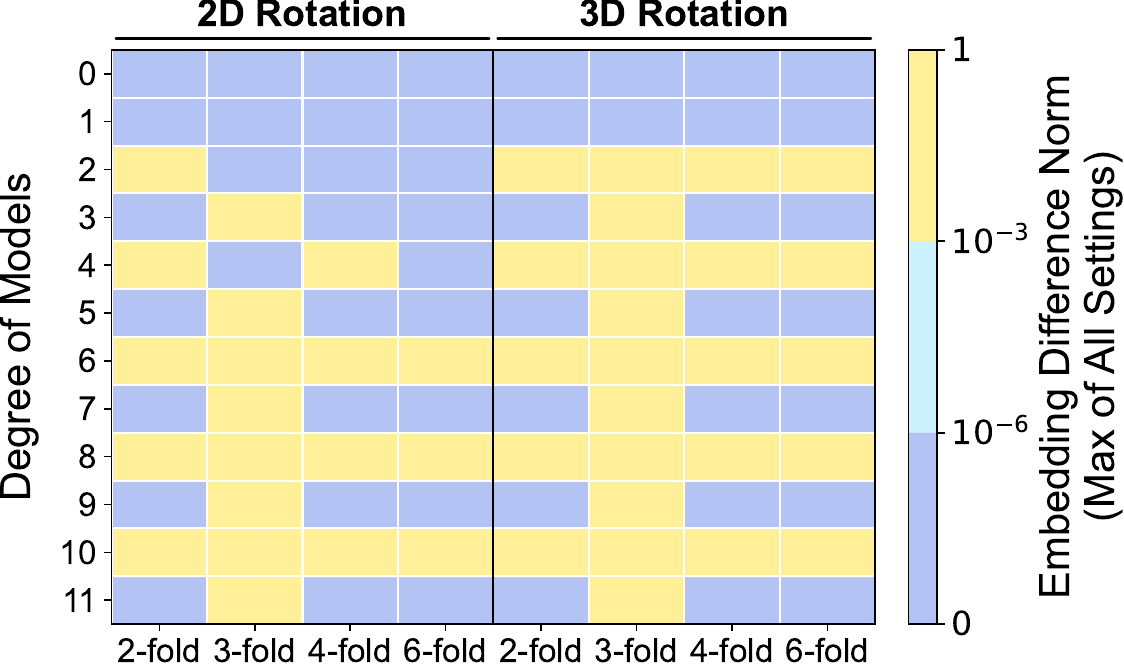}
    \vspace{-0.6cm}
    \caption{Heatmap of Emb. Diff. Norm.}
    \label{fig:sym_graph_norm_heatmap}
    \vspace{-1.2cm}
\end{wrapfigure}

To experimentally validate our theoretical conclusion established , we design a more comprehensive experiment following \citet{joshi2023expressive}.

\textbf{Dataset. }
We construct four symmetric $k$-fold structures ($k \in {2,3,4,6}$), each centered at the origin. For each structure $\mathcal{G}_0$ we apply a random rotation to obtain $\mathcal{G}_1$, ensuring that $\gG_1$ does not coincide with the original $\mathcal{G}_0$. The goal is to evaluate whether different ENNs can distinguish $\mathcal{G}_0$ from $\mathcal{G}_1$. To probe different aspects of our theory, we treat 2D and 3D rotations separately; in the 3D setting we additionally require that $\mathcal{G}_1$ is not coplanar with $\mathcal{G}_0$.

\textbf{Embeddings. }
We employed both TFN~\citep{thomas2018tensor} and HEGNN~\citep{cen_are_2024} to compute the norm of the embedding difference across 12 configurations for each, varying by the number of irrep channels (1, 4, 16) and layers (1-4). The extracted $l_0$-degree embeddings are evaluated via the norm of the difference between the embeddings of $\gG_0$ and $\gG_1$ as in \citet{cen_are_2024} with detailed setting in \cref{sec:detail_sym_emb_norm}. When this norm approaches zero, the two embeddings are numerically indistinguishable, and hence the corresponding geometric figures cannot be told apart by the model~\footnotemark. 
\footnotetext{\rebuttal{In \citet{joshi2023expressive}, the embeddings are directly fed to a vanilla classifier. To address issues such as imperfect classifier training and numerical error, we slightly modify this experimental setup. We also reproduce their original experiment in \cref{sec:gwl_test}, and the results remain consistent with our theory.}}

\textbf{Results. }
The maximum value was selected for each configuration and visualized in a heatmap, as shown in \cref{fig:sym_graph_norm_heatmap}. The results exhibit a clear binary pattern: the values are either greater than $10^{-3}$ or less than $10^{-6}$ (due to numerical error), with a difference of more than $10^3$ times. This suggests that values exceeding $10^{-3}$ indicate distinguishable structures, while those below $10^{-6}$ correspond to indistinguishable structures. These findings align precisely with our theoretical predictions. Furthermore, as the maximum value was chosen, with all norms being less than $10^{-6}$, this phenomenon is shown to be independent of the model choice, the number of channels or layers.

\subsection{Molecule Property Prediction with Pretrained Equivariant Features}
\label{sec:qm9}

\begin{figure}[!t]
    \centering
    \vspace{-0.15cm}
    \includegraphics[width=0.9\linewidth]{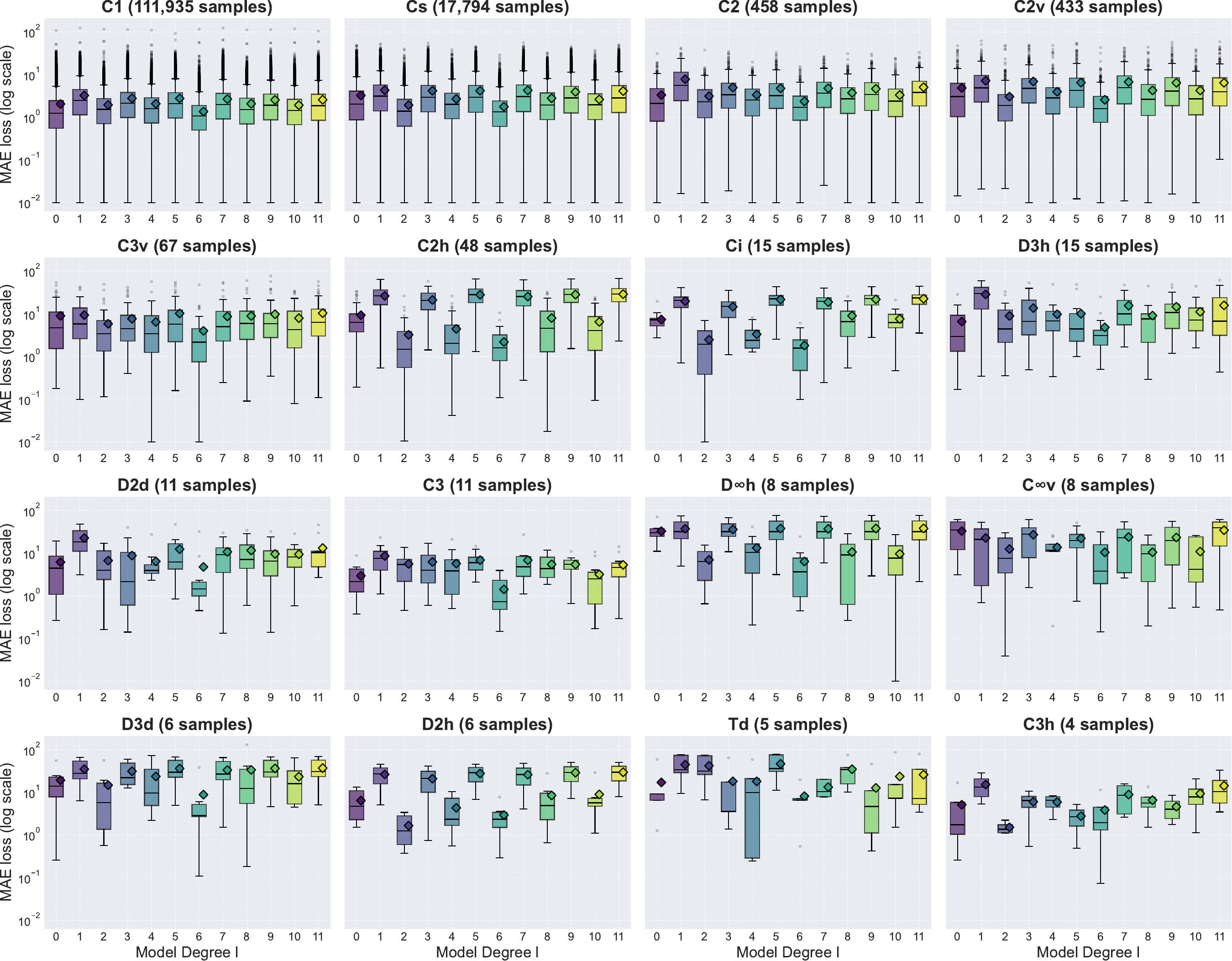}
    \vspace{-0.35cm}
    \caption{\rebuttal{MAE loss (in units of $a_0^3$) for isotropic polarizability prediction with degree $l = l_0$ across molecules from the top-$16$ point groups by molecular count. Each boxplot shows the distribution of errors at a given degree, while diamond markers denote the corresponding mean MAE.}}
    \label{fig:pg_error_curves_l}
    \vspace{-0.45cm}
\end{figure}

To illustrate the guiding significance of the theory presented in this paper for practical applications, we designed experiments on the QM9 dataset~\citep{ramakrishnan2014quantum} for verification.

\textbf{Dataset. }
We choose to predict the molecular isotropic polarizability~$\alpha$. It is worth noting that the QM9 dataset~\citep{ramakrishnan2014quantum} contains many highly symmetric structures spanning 22 molecular symmetry groups~\footnotemark, with \cref{fig:pg_error_curves_l} reporting sample counts for the 16 most frequent groups; the remaining seven are $D_3$ with 2 samples and $C_4$, $D_2$, $D_{6h}$, $O_h$, $S_4$, each appearing only once.
\footnotetext{\rebuttal{We use the QM9 dataset as provided in PyG~\citep{fey2019fast} and apply the PointGroup library~\citep{pointgroup} to pre-compute and manually post-process the point groups of all molecules. 
As a result, our statistics may differ slightly from those reported in previous works like \citet{zeng2025method}.
}}

\textbf{Embeddings. }
We adopt HEGNN~\citep{cen_are_2024} as our backbone. Specifically, we first pretrain on features of $l \leq 11$ to obtain a shared equivariant feature encoder, ensuring that all subsequent configurations operate on the same embedding space. We then consider two fine-tuning strategies: (a) using only features of $l=l_0$, and (b) using all features of $l\leq l_0$, yielding 12 distinct prediction heads for each. The detailed experimental setup is given in \cref{sec:qm9_setup}.

\textbf{Results. } 
The results are shown in \cref{fig:pg_error_curves_l,fig:pg_error_curves_0_l}, with a summary of all symmetry increases in \cref{tab:sym_QM9}. We note that for the majority of samples, different feature components contribute similarly to the prediction. However, as illustrated in \cref{fig:pg_error_curves_l}, for non-trivial feature components where molecular symmetry increase to $O(3)$, the prediction loss is substantially higher. Furthermore, the results in \cref{fig:pg_error_curves_0_l} show that for symmetries causing full degeneration in $1$-degree features, including additional $1$-degree features may not provide significant improvement to the model's prediction performance. This validates the design guidelines in \cref{subsec:sym_inc_app}. Detailed case studies are provided in \cref{sec:qm9_case_studies}.

} %

\section{Conclusion}

In this work, we presented a rigorous mathematical framework to address the critical issue of symmetry increase in ENNs. We introduced the concept of the symmetry infimum, a computable lower bound for any increase in symmetry determined by the feature space. Our central contribution is to show that this infimum can be used to precisely predict and control the expressive degradation of ENNs. The framework successfully explains phenomena in settings like those of \citet{joshi2023expressive}, which could not be fully accounted for by prior theories such as the collapse-to-zero model from \citet{cen_are_2024}. Our findings provide both a robust theoretical understanding and \rebuttal{practical guidelines} for designing more reliable ENNs.

\clearpage

\section*{Acknowledgments}

This work was supported by the National Natural Science Foundation of China (Nos. 62276269, 92270118, and 62376276), the Beijing Natural Science Foundation (No. 1232009), and the Beijing Nova Program (No. 20230484278).

\section*{Author contributions}

Ning Lin organized this project. Ning Lin led the theoretical development in \cref{preliminaries}--\cref{density_of_isovariant_maps} and was responsible for
the theoretical proofs of the corresponding part. Ning Lin and Jiacheng Cen jointly led the experimental studies in \cref{sec:experiment}. Jiacheng Cen was responsible for model implementation and code development. Anyi Li was responsible for data processing. Jiacheng Cen and Anyi Li jointly contributed to figure preparation and visualization. Wenbing Huang and Hao Sun jointly supervised and guided the project. All authors participated in writing and revising the manuscript.

\section*{Usage of Large Language Models}
We only use Large Language Models to polish our writing.

\section*{Ethics Statement}

This work is a theoretical contribution in the domain of equivariant neural networks, focusing on mathematical properties of symmetry preservation and transformation under equivariant mappings.  The research does not involve human subjects, personal data, or real-world deployments, and therefore does not raise concerns related to privacy, fairness, bias, or potential misuse.

We affirm that this work adheres to the ICLR Code of Ethics.  In particular, we have ensured honesty in representing our contributions, accuracy in reporting our findings, and proper attribution of prior work.  As this is a theoretical study, there are no conflicts of interest, sponsorship influences, or applications with foreseeable societal harms to disclose.  We support responsible stewardship of machine learning research and believe that foundational advances such as ours contribute positively to the scientific community by enhancing understanding and enabling future trustworthy systems.

\section*{Reproducibility Statement}

We are committed to ensuring the reproducibility of both the theoretical and experimental components of our work.  To support the reproducibility of our theoretical results, we provide complete and self-contained proofs for all main theorems and propositions in the appendix, including detailed derivations and necessary mathematical background.  These proofs clarify all assumptions and logical steps required to verify our claims.  For the experimental component, we have made our implementation code publicly available at \url{https://github.com/GLAD-RUC/SymInc}. The codebase contains clear documentation and scripts that fully reproduce the reported results.  All experimental settings, hyperparameters, and data generation procedures are described in detail within the supplementary materials.  Together, the comprehensive theoretical appendices and open-sourced code ensure that our findings can be rigorously verified and built upon by the research community.

\bibliography{Reference}
\bibliographystyle{utils/iclr2026}

\clearpage

\appendix
\tableofappendix

\section{Backgrounds}

\subsection{Equivariant Neural Networks}

Equivariant Neural Networks (ENNs) have emerged as a cornerstone of modern machine learning, enabling a wide range of applications across the sciences~\citep{han2025survey,wu2023equivariant,li2025powder,wu2025siamese,li2025sizegeneralizable,feng2026tetragt,cao2026beyond,wu2026dmflow,lianyi2025geometric}. Most mainstream ENNs adopt tensor product operators to design the message passing architecture~\citep{thomas2018tensor}. In particular, since scalarization operations (\emph{e.g.}, norm and inner product) can substantially reduce computational cost, Cartesian vector-based networks~\citep{satorras2021en} and spherical scalarization networks~\citep{cen_are_2024,aykent2025gotennet} have become particularly popular. More specifically for asymmetric graph structures, \citet{cen2025universally} point out that networks employing scalarization operations already possess sufficient expressive power, i.e., universal approximation. 

However, additional subtleties arise in the presence of input symmetries. 
The interaction between architectural components and symmetric structures can lead to nontrivial representational effects. 
For example, techniques such as global virtual nodes \citep{zhang2024improving,zhang2025fast} and reference frames \citep{puny2022frame,duval2023faenet}, though effective in improving model capacity, may exhibit unintended behaviors when applied to symmetric inputs. 
In particular, their use can potentially induce symmetry increase. 
In this work, we systematically analyze these phenomena under symmetric structures for general equivariant neural network architectures.

\subsection{Closed Subgroups of $SO(3)$ or $O(3)$}

Before proceeding, we briefly review the classification of the closed subgroups of $SO(3)$ or $O(3)$. 
The closed subgroups of $O(3)$, also known as point groups, are classified up to conjugacy as follows. 
Throughout this paper, we use the Schoenflies notation to denote these groups.
\begin{itemize}
    \item \textbf{Finite Subgroups}: These are divided into axial and polyhedral groups. The axial groups include the Abelian subgroups ($C_k, S_{2k}, C_{kh}$) and the non-Abelian subgroups ($C_{kv}, D_{k}, D_{kh}, D_{kd}$). The polyhedral groups are all non-Abelian and comprise seven families ($T, T_d, T_h, O, O_h, I, I_h$). Among these, the groups $C_k, D_k, T, O, I$ consist solely of pure rotations and are subgroups of $SO(3)$.
    \item \textbf{Infinite Subgroups}: These include the cylindrical groups ($C_{\infty}, C_{\infty h}, C_{\infty v}, D_{\infty}, D_{\infty h}$), which arise as limits of the axial groups, and the spherical groups, which are $K = SO(3)$ and $K_h = O(3)$.
\end{itemize}

To facilitate the identification of input symmetries, we now provide a brief overview of the elemental structure of the key finite subgroups. The identification of cylindrical symmetries, which are infinite, can be achieved by taking the limits of these finite axial groups.

\subsubsection*{Abelian Axial Groups}
\begin{itemize}
    \item \textbf{$C_k$ (Cyclic Group):} Generated by a rotation $c_k$ of angle $2\pi/k$, with $k$ elements:
    \begin{itemize}
        \item[(1)] Rotations $c_k^j$ about the principal axis.
    \end{itemize}
    
    \item \textbf{$S_{2k}$ (Rotation-Reflection Group):} Generated by adding a rotation-reflection element $c_{2k} \sigma _h$ to $C_k$, with $2k$ elements:
    \begin{itemize}
        \item[(1)] Rotations $c_k^j$ about the principal axis.
        \item[(2)] Rotation-reflections $c_{2k}^{2j+1}\sigma_h$ about the principal axis.
    \end{itemize}

    \item \textbf{$C_{kh}$ (Cyclic Groups with Horizontal Reflection):} Generated by adding a horizontal reflection plane $\sigma_h$ to $C_k$, with $2k$ elements:
    \begin{itemize}
        \item[(1)] Rotations $c_k^j$ about the principal axis.
        \item[(2)] Rotation-reflections $c_k^j\sigma_h$ about the principal axis.
    \end{itemize}
\end{itemize}

\subsubsection*{Non-Abelian Axial Groups}
\begin{itemize}
    \item \textbf{$C_{kv}$ (Cyclic Groups with Vertical Reflections):} Generated by adding a vertical reflection plane $\sigma_v$ to $C_k$, with $2k$ elements:
    \begin{itemize}
        \item[(1)] Rotations $c_k^j$ about the principal axis.
        \item[(2)] Reflections $c_k^j \sigma_v$  about the vertical plane.
    \end{itemize}
    
    \item \textbf{$D_k$ (Dihedral Groups):} Generated by adding a 2-fold rotation axis $u_2$ perpendicular to the principal axis of $C_k$, with $2k$ elements:
    \begin{itemize}
        \item[(1)] Rotations $c_k^j$ about the principal axis.
        \item[(2)] 2-fold rotations $c_k^j u_2$ about horizontal axes.
    \end{itemize}

    \item \textbf{$D_{kh}$ (Dihedral Groups with Horizontal Reflection):} Generated by adding a horizontal reflection plane $\sigma_h$ to $D_k$, with $4k$ elements:
    \begin{itemize}
        \item[(1)] Rotations $c_k^j$ about the principal axis.
        \item[(2)] 2-fold rotations $c_k^j u_2$ about horizontal axes.
        \item[(3)] Rotation-reflections $c_k^j \sigma_h$ about the principal axis.
        \item[(4)] Reflections $c_k^j \sigma_v$  about the vertical plane.
    \end{itemize}
    
    \item \textbf{$D_{kd}$ (Dihedral Groups with Dihedral Reflections):} Generated by adding a diagonal reflection plane $\sigma_d = c_{2k}\sigma_v$ to $D_k$, with $4k$ elements:
    \begin{itemize}
        \item[(1)] Rotations $c_k^j$ about the principal axis.
        \item[(2)] 2-fold rotations $c_k^j u_2$ about horizontal axes.
        \item[(3)] Rotation-reflections $c_{2k}^{2j+1}\sigma_h$ about the principal axis.
        \item[(4)] Reflections $c_{2k}^{2j+1}\sigma_v$ about the diagonal plane.
    \end{itemize}
\end{itemize}

\subsubsection*{Polyhedral Groups}
\begin{itemize}
    \item \textbf{$T$ (Tetrahedral Group):} The rotational symmetry group of a tetrahedron, with 12 elements.
    
    \item \textbf{$T_d$ (Full Tetrahedral Group):} The full symmetry group of a tetrahedron, with 24 elements.

    \item \textbf{$T_h$ (Pyritohedral Group):} Generated by adding an inversion center to $T$, with 24 elements.
    
    \item \textbf{$O$ (Octahedral Group):} The rotational symmetry group of a cube or octahedron, with 24 elements.

    \item \textbf{$O_h$ (Full Octahedral Group):} The full symmetry group of a cube or octahedron, generated by adding an inversion center to $O$, with 48 elements.
    
    \item \textbf{$I$ (Icosahedral Group):} The rotational symmetry group of an icosahedron, with 60 elements.

    \item \textbf{$I_h$ (Full Icosahedral Group):} The full symmetry group of an icosahedron, generated by adding an inversion center to $I$, with 120 elements.
\end{itemize}

The subgroup relations among point groups, as well as the dimensions of fixed-point spaces of $O(3)$ representations, can be readily determined. 
In particular, they can be derived from the minimal subgroup relations provided in \cref{sec:minimal_subgroup_relation} and from the dimension table for fixed-point spaces of irreducible $O(3)$ representations established in \cref{subsec:fp_dim_so3_o3}.

\subsection{\rebuttal{Manifolds and Basic Data Assumption}}
\label{subsec:manifold_bg}

Following the definition in \citet{milnor_topology_1990}, a $C^r$ manifold $M$ is mathematically defined as a subset of some linear space $\mathbb{R}^d$. For every point $x$ in this subset, there must exist a neighborhood $W$ in $\mathbb{R}^d$ such that $W \cap M$ is $C^r$-diffeomorphic to an open set in another linear space $\mathbb{R}^{l}$. The integer $l$ is known as the dimension of $M$. An alternative, intrinsic definition that does not depend on an embedding space can also be found in \citet{hirsch_differential_1976}.

Machine learning often assumes that the data distribution is supported on a submanifold of the input space. The dimension of this manifold characterizes the directions in which the data can vary within the input space. However, the term manifold is often used loosely in this context and does not perfectly align with its mathematical counterpart \citep{goodfellow2016deep}. First, the data manifold may have self-intersections, causing the local dimension of data variation to differ at various points. Second, data belonging to different classes or clusters may possess different structures and potentially different dimensions. Lastly, stochastic factors such as observational noise can prevent the data from forming a strict surface in the input space. We will not address this last factor, instead, we assume that the interference from noise is minimal and ignorable.

To address the first two issues, we can relax the manifold hypothesis by assuming that the data is supported on a finite union of submanifolds within the input space. Since a manifold with multiple connected components can itself be viewed as a disjoint union of connected manifolds, we can assume, without loss of generality, that each of these submanifolds is connected. For theoretical convenience, we further assume that these manifolds are bounded and closed. By the Weierstrass theorem, this is equivalent to assuming they are compact. We briefly outline the justification for these assumptions below:

\begin{itemize}
    \item \textbf{Finite Union of Manifolds:} This assumption has been partially verified experimentally on computer vision datasets \citep{brown2022verifying}. Theoretically, it encompasses classic cases of self-intersection. A data manifold may arise as the image of a map. However, the image of an immersion or submersion of a manifold can have self-intersections. The image of such a map from a compact manifold is, however, a finite union of compact manifolds.
    \item \textbf{Boundedness of Manifolds:} This assumption stems from the natural assumption that the data distribution itself is bounded.
    \item \textbf{Closedness of Manifolds:} This assumption covers scenarios where the data is defined by a set of well-behaved constraints. For example, the set of points satisfying $n$ independent, differentiable constraint equations in the input space forms a closed submanifold of codimension $n$.
\end{itemize}

In equivariant representation learning, the data possesses symmetries, meaning that data corresponding to the same physical object can be transformed by symmetry operations to represent different reference frames. These symmetry transformations typically form a group, and since the transformed data is still valid data, the data manifold must be closed under these group transformations, that is, the data manifold must be invariant under the group action. Considering the group action for any given input, we summarize the preceding discussions into the following assumptions about the data:

\begin{itemize}
    \item \textbf{Lie Group Assumption:} The transformation group $G$ is a compact Lie group.
    
    \item \textbf{Manifold Assumption:} The input space $X$ is a linear space equipped with a linear action $\rho_X$ of the group $G$. That is, there exists a map $\rho_X: G \to GL(X)$ such that $\rho_X(g_1 g_2) = \rho_X(g_1) \rho_X(g_2)$, and the identity element of the group is mapped to the identity transformation. The data manifold $M$ is a finite union of compact, connected, smooth, and $G$-invariant submanifolds of $X$.
\end{itemize}

The smoothness assumption is added for theoretical convenience and the connectedness assumption does not impose additional restrictions,  since each compact smooth submanifold admits only finitely many connected components.

\clearpage

\section{Proofs of The Infimum of Symmetry (\cref{the_infimum_of_symmetry})}

\subsection{Proof of \cref{thm_part1_cont:uniqueness_of_minimal_type}}

\begin{lemma}[\citet{azzi_rationality_2023}, Cor.~2.3]
\label{lem_part1_app:lower_symmetry_open}
For a compact Lie group $G$ and a representation $V$, let $v \in V$ be any point. There exists a neighborhood $U$ of $v$ such that for any point $u \in U$, we have $(G_{u}) \leq (G_{v})$.
\end{lemma}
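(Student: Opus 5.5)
The plan is to prove this with the slice (tubular neighbourhood) theorem for the linear $G$-action on $V$. Since $G$ is compact, fix a $G$-invariant inner product on $V$, so each $\rho_V(g)$ is orthogonal. The orbit $G(v)$ is a compact embedded submanifold. Let $N_v := (T_v G(v))^{\perp}$ be its normal space at $v$. Because $G_v$ fixes $v$ and preserves $G(v)$, it preserves $T_vG(v)$, and hence $N_v$. For $\varepsilon>0$ set $S_\varepsilon := \{v+w : w\in N_v,\ \lVert w\rVert<\varepsilon\}$, which is $G_v$-invariant. The goal is to show that for $\varepsilon$ small, the $G$-saturation $U := G\cdot S_\varepsilon$ is an open neighbourhood of $v$ on which every point is conjugate into $G_v$.

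\emph{Step 1 (openness).} Consider the smooth map $\Phi: G\times S_\varepsilon \to V$, $(g,s)\mapsto g\cdot s$. Its differential at $(e,v)$ has image $T_vG(v) + N_v = V$, so it is a submersion there. Hence $G\cdot S_\varepsilon$ contains a neighbourhood of $v$; being $G$-invariant, it is open after shrinking $\varepsilon$.

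\emph{Step 2 (slice property).} Show that for $\varepsilon$ small, $g\cdot S_\varepsilon \cap S_\varepsilon \neq \varnothing$ implies $g\in G_v$. In particular, if $s\in S_\varepsilon$ and $g\cdot s = s$, then $g\in G_v$, so $G_s\subseteq G_v$. I expect this to be the main obstacle, since it is the injectivity part of the tube theorem. I would argue by contradiction and compactness. Suppose $g_n\cdot s_n = s_n'$ with $s_n,s_n'\to v$ and $g_n\notin G_v$. Pass to a subsequence with $g_n\to g$; continuity gives $g\cdot v = v$, so $g\in G_v$. Replacing $g_n$ by $g^{-1}g_n$, we get $g_n\to e$ (using $G_v$-invariance of $S_\varepsilon$). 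Near $e$, write $g_n = \exp(\xi_n)h_n$ with $h_n\in G_v$ and $\xi_n$ in a complement $\mathfrak m$ of $\mathfrak g_v$ in $\mathfrak g$, where $\xi_n\to 0$ and $\xi_n\neq 0$. Then $\exp(\xi_n)\cdot(h_n s_n) = s_n'$, and both $h_ns_n$ and $s_n'$ lie in $S_\varepsilon$. The map $\mathfrak m\times N_v\to V$, $(\xi,w)\mapsto \exp(\xi)(v+w)$, has bijective differential at $(0,0)$ because $\mathfrak m \to T_vG(v)$ is an isomorphism. By the inverse function theorem it is injective near $(0,0)$, which forces $\xi_n=0$ for large $n$, a contradiction.

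\emph{Step 3 (conclusion).} Take any $u\in U$ and write $u = g\cdot s$ with $s\in S_\varepsilon$. Then $G_u = gG_sg^{-1}\subseteq gG_vg^{-1}$ by Step 2, i.e.\ $(G_u)\le (G_v)$. An alternative route, if the slice computation becomes messy, is to use Hausdorff limits of closed subgroups together with the Montgomery--Zippin theorem. If $u_n\to v$, any limit $K$ of $G_{u_n}$ satisfies $K\subseteq G_v$ by continuity, and closed subgroups close to $K$ are conjugate into $K$. I would treat that as a fallback.
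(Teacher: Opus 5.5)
Your proof is correct. The paper gives no proof of its own: it imports the statement from \citet{azzi_rationality_2023}, where it is a direct corollary of the slice theorem, and your argument is that same route written out in full (normal slice $S_\varepsilon$, the slice property $gS_\varepsilon\cap S_\varepsilon\neq\varnothing\Rightarrow g\in G_v$ via compactness and the inverse function theorem, then $G_u=gG_sg^{-1}\subseteq gG_vg^{-1}$). One small point: in Step~1, $G$-invariance alone does not make $G\cdot S_\varepsilon$ open, but you never need openness. You only need that $G\cdot S_\varepsilon$ contains a neighbourhood of $v$, and the submersion at $(e,v)$ already gives this.
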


\begin{lemma}[\citet{azzi_rationality_2023}, Cor.~2.20]
\label{lem_part1_app:lower_symmetry_Zariski_open}
For a reductive group $G$ and an affine algebraic variety $X$, let $x \in X$ be a point with a closed orbit $G(x)$. There exists a Zariski neighborhood $U$ of $x$ such that for any point $u \in U$, we have $(G_{u}) \leq (G_{x})$.
\end{lemma}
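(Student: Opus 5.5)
The plan is to deduce the lemma from Luna's étale slice theorem. We work over an algebraically closed field of characteristic zero, which is the setting in which \citet{azzi_rationality_2023} state it. Let $\pi: X \to X/\!\!/G$ denote the categorical quotient.

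The first step is to check the hypotheses of the slice theorem. The orbit $G(x)$ is closed and $G$ is reductive, so by Matsushima's criterion the isotropy group $G_x$ is reductive. Luna's theorem then gives a locally closed, affine, $G_x$-stable subvariety $S \ni x$ and a morphism
\begin{equation}
\varphi: G \times_{G_x} S \to X, \qquad [g,s] \mapsto g\cdot s,
\end{equation}
which is \emph{strongly étale} onto a $\pi$-saturated open affine subset $U = \pi^{-1}(U_0)$ of $X$. Here $U_0 \subseteq X/\!\!/G$ is Zariski open and contains $\pi(x)$. Strong étaleness has two parts:
\begin{itemize}
\item the induced map $(G\times_{G_x}S)/\!\!/G \cong S/\!\!/G_x \to U_0$ is étale;
\item the natural map $G\times_{G_x}S \to U \times_{U_0} S/\!\!/G_x$ is an isomorphism.
\end{itemize}
We take this $U$ as the required Zariski neighborhood of $x$.

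The second step is to compare stabilizers at an arbitrary point $u \in U$. Since $\varphi$ is surjective onto $U$, we can write $u = \varphi([g,s])$ with $s \in S$. The isomorphism $G\times_{G_x}S \cong U \times_{U_0} S/\!\!/G_x$ is $G$-equivariant, and $G$ acts trivially on the second factor $S/\!\!/G_x$. Hence the stabilizer of $[g,s]$ equals the stabilizer of its image $(u, \bar s)$, and that equals $G_u$.

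The third step computes the stabilizer on the associated bundle side. A direct computation in $G\times_{G_x}S$ gives
\begin{equation}
G_{[g,s]} = g\,(G_x)_s\,g^{-1}.
\end{equation}
Combining the two steps,
\begin{equation}
G_u = g\,(G_x)_s\,g^{-1} \subseteq g\,G_x\,g^{-1}.
\end{equation}
So $G_u$ is contained in a conjugate of $G_x$, which is exactly the statement $(G_u)\le (G_x)$ in the partial order on orbit types defined in \cref{the_infimum_of_symmetry}.

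The main obstacle is the equality of stabilizers in the second step. Mere étaleness of $\varphi$ only shows that stabilizers agree up to finite index, and only near closed orbits. The argument genuinely needs the fiber-product isomorphism from the strongly étale version of the slice theorem, and we should check that this isomorphism holds over all of $U$ rather than only at $x$.

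If that fails, the fallback is to argue only along closed orbits and then reduce the general case to them. For $u \in U$, let $u'$ be the unique closed orbit point in the fiber $\pi^{-1}(\pi(u))$. By semicontinuity of stabilizers under degeneration (Richardson), $G_u$ is contained in a conjugate of $G_{u'}$. It would then remain to show that $G_{u'}$ is contained in a conjugate of $G_x$, which is the closed-orbit case of the slice argument.
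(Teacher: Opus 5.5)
Your argument is correct and is the classical deduction of this corollary from Luna's étale slice theorem; the paper gives no proof of its own and simply cites \citet{azzi_rationality_2023}. In that deduction, the strongly étale isomorphism $G\times_{G_x}S\cong U\times_{U_0}S/\!\!/G_x$ forces $G_u=g\,(G_x)_s\,g^{-1}\subseteq g\,G_x\,g^{-1}$. The worry in your obstacle paragraph is unfounded: Luna's theorem asserts strong étaleness of $G\times_{G_x}S\to U$ over the whole saturated open set $U$, not only near $x$. So you can drop the fallback, whose ``Richardson'' step would in any case itself need the slice theorem applied at $u'$.
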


We use the definition of the complexification from Prop.~3.3 of \citet{azzi_rationality_2023}. For a real vector space $V$, its complexification is $V^{\mathbb{C}}:= \mathbb{C}\otimes V$. Considering an orthogonal action of $G$ on $V$, with $\mathfrak{g}$ being the Lie algebra of $G$, the complexification of the group is $G^{\mathbb{C}} := \{g \exp(iX) \mid g \in G, X \in \mathfrak{g}\}$. By the above complexification, we obtain the following lemma.

\begin{lemma}
\label{lem_part1_app:complexification_conjugation}
For a compact Lie group $G$, consider the inclusion map $\iota: G \hookrightarrow G^{\mathbb{C}}$ into its complexification. Let $H$ and $K$ be closed subgroups of $G$. Then
\begin{equation}
    (H)\leq(K)
    \quad\Longleftrightarrow\quad
    (H^{\mathbb C})\leq(K^{\mathbb C}),
\end{equation}
where the two partial orders are taken with respect to conjugation in $G$ and $G^{\mathbb C}$, respectively.

\end{lemma}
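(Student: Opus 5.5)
The plan is to prove the two implications separately. Throughout I use two standard facts about the complexification $G^{\mathbb C}=\{g\exp(iX)\}$. The first is the polar (Cartan) decomposition: the map $G\times\mathfrak g\to G^{\mathbb C}$, $(g,X)\mapsto g\exp(iX)$, is a diffeomorphism. In particular every element factors uniquely as $g\exp(iX)$. The second is that for a closed subgroup $H\subseteq G$ with Lie algebra $\mathfrak h$, the complexification $H^{\mathbb C}=\{h\exp(iX)\mid h\in H,\ X\in\mathfrak h\}$ is a closed subgroup of $G^{\mathbb C}$ satisfying $H^{\mathbb C}\cap G=H$. This follows from uniqueness of the polar decomposition.

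\emph{Forward direction.} Assume $gHg^{-1}\subseteq K$ for some $g\in G$. For $h\in H$ and $X\in\mathfrak h$,
\[
g\,h\exp(iX)\,g^{-1}=(ghg^{-1})\exp\bigl(i\,\mathrm{Ad}_gX\bigr).
\]
Here $ghg^{-1}\in K$, and $\mathrm{Ad}_gX\in\mathrm{Lie}(gHg^{-1})\subseteq\mathfrak k$. Hence $gH^{\mathbb C}g^{-1}\subseteq K^{\mathbb C}$, so $(H^{\mathbb C})\le(K^{\mathbb C})$. This is just functoriality of complexification.

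\emph{Backward direction, step 1 (move the conjugator into $K$).} Assume $zH^{\mathbb C}z^{-1}\subseteq K^{\mathbb C}$ for some $z\in G^{\mathbb C}$. Then $zHz^{-1}$ is a compact subgroup of $K^{\mathbb C}$. I will invoke the conjugacy theorem for maximal compact subgroups of the reductive (algebraic) group $K^{\mathbb C}$ (Cartan--Iwasawa--Malcev): $K$ is a maximal compact subgroup of $K^{\mathbb C}$, and every compact subgroup of $K^{\mathbb C}$ is conjugate into it. This gives $k\in K^{\mathbb C}$ with $kzHz^{-1}k^{-1}\subseteq K$. Set $w=kz\in G^{\mathbb C}$, so that $wHw^{-1}\subseteq K\subseteq G$.

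I expect this step to be the main obstacle. $K$ may be disconnected, so I must check that the conjugacy theorem applies to $K^{\mathbb C}$ with finitely many components. It does, because $K^{\mathbb C}$ is the complex points of a reductive algebraic group, whose component group equals that of $K$.

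\emph{Backward direction, step 2 (replace $w$ by an element of $G$).} Write $w=g\exp(iY)$ with $g\in G$ and $Y\in\mathfrak g$. Then $\exp(iY)H\exp(-iY)\subseteq g^{-1}Kg\subseteq G$. Fix $h\in H$ and set $u=\exp(iY)h\exp(-iY)\in G$. Then
\[
u\exp(iY)=\exp(iY)h=h\exp\bigl(i\,\mathrm{Ad}_{h^{-1}}Y\bigr).
\]
Both sides are polar decompositions, so uniqueness gives $u=h$. Thus $\exp(iY)$ centralizes $H$, and therefore
\[
gHg^{-1}=g\exp(iY)H\exp(-iY)g^{-1}=wHw^{-1}\subseteq K
\]
with $g\in G$, i.e.\ $(H)\le(K)$ in $G$.
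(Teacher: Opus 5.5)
Your proof is correct. The forward direction is the same as the paper's: conjugate by $g\in G$ and use $\mathrm{Ad}_g\mathfrak h\subseteq\mathfrak k$.

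For the reverse direction you take a genuinely different route. The paper gives no argument there; it simply cites Prop.~4.9 of Azzi et al. Your argument is self-contained up to classical facts:
\begin{itemize}
\item Conjugacy of maximal compact subgroups in $K^{\mathbb C}$ gives $w\in G^{\mathbb C}$ with $wHw^{-1}\subseteq K$.
\item Writing $w=g\exp(iY)$, uniqueness of the polar decomposition shows that $\exp(iY)$ centralizes $H$. So the compact factor $g\in G$ already conjugates $H$ into $K$.
\end{itemize}

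What this buys you:
\begin{itemize}
\item The reverse implication is exactly the one used in the density step of \cref{prop_part1_app:minimal_orbit_type_dense_open}. With your argument, the uniqueness of the symmetry infimum no longer depends on an external preprint.
\item You only use $zHz^{-1}\subseteq K^{\mathbb C}$, never the full inclusion $zH^{\mathbb C}z^{-1}\subseteq K^{\mathbb C}$. So you prove a slightly stronger statement: if some $G^{\mathbb C}$-conjugate of $H$ lies in $K^{\mathbb C}$, then some $G$-conjugate of $H$ lies in $K$.
\end{itemize}
The citation's only advantage is brevity.

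Three small points to tighten:
\begin{itemize}
\item Finiteness of $\pi_0(K^{\mathbb C})$ follows directly from the diffeomorphism $K\times\mathfrak k\cong K^{\mathbb C}$; you do not need the algebraic-group language.
\item You should justify that $K$ is maximal compact in $K^{\mathbb C}$. If a compact $L\supseteq K$ contains $k\exp(iX)$, then it contains $\exp(inX)$ for all $n$. Under the polar diffeomorphism this sequence corresponds to $(e,nX)$, which leaves every compact set unless $X=0$.
\item The fact $H^{\mathbb C}\cap G=H$ that you list is never used.
\end{itemize}
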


\begin{proof}
The reverse implication follows from Prop.~4.9 of
\citet{azzi2026isotropystratificationrealrepresentation}. For the forward direction, let $g_{0} \in G$ be an element such that $g_{0}Hg_{0}^{-1} \subseteq K$ by $(H)\leq(K)$. Let $\mathfrak{h}$ and $\mathfrak{k}$ be the Lie algebras of $H$ and $K$, respectively. Any element of $H^{\mathbb{C}}$ can be expressed as $h_{0} \exp(iX_{0})$ for $h_{0} \in H$ and $X_{0} \in \mathfrak{h}$. Its conjugation by $g_0$ is
\begin{equation}
    g_{0} \big( h_{0} \exp(iX_{0}) \big) g_{0}^{-1} = (g_{0}h_{0} g_{0}^{-1}) \exp(i \mathrm{Ad}_{g_{0}} X_{0}).
\end{equation}
Since $g_{0}Hg_{0}^{-1} \subseteq K$, the term $g_{0}h_{0} g_{0}^{-1}$ belongs to $K$. The expression thus belongs to $K^\mathbb{C}$ provided that $\mathrm{Ad}_{g_{0}} X_{0} \in \mathfrak{k}$.

This condition on the Lie algebra is obtained by differentiating the subgroup inclusion $g_{0}Hg_{0}^{-1} \subseteq K$ at the identity element, which gives $\mathrm{Ad}_{g_{0}}(\mathfrak{h}) \subseteq \mathfrak{k}$. As $X_{0} \in \mathfrak{h}$, it follows immediately that $\mathrm{Ad}_{g_{0}} X_{0} \in \mathfrak{k}$. Then,
\begin{equation}
\iota(g_{0})H^{\mathbb{C}}\iota(g_{0})^{-1} \subseteq K^{\mathbb{C}},
\end{equation}
which complete the proof.

\end{proof}

\begin{proposition}
\label{prop_part1_app:minimal_orbit_type_dense_open}
Let $V$ be a representation of a compact Lie group $G$. For any closed subgroup $H$ of $G$, the set of points of minimal orbit type in the fixed-point subspace $V^H$ is a dense and open subset of $V^H$.
\end{proposition}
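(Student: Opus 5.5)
The plan is to prove existence, uniqueness, openness and density of the minimal orbit type in $V^H$ together. Two kinds of openness are needed: the Euclidean openness of \cref{lem_part1_app:lower_symmetry_open}, and the Zariski openness of \cref{lem_part1_app:lower_symmetry_Zariski_open}, transported to the complexification.

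\textbf{Existence of minimal elements.} A representation of a compact Lie group has only finitely many orbit types. Hence $\mathcal{O}_G(V^H)$ is a finite poset, nonempty because $0\in V^H$, and it has minimal elements. The real content is that there is only one.

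\textbf{Transfer to the complexification.} Complexify: $G^{\mathbb C}$ is reductive and acts on $V^{\mathbb C}$, and $(V^H)^{\mathbb C}=(V^{\mathbb C})^{H}$ is a linear subspace, hence an irreducible affine variety. For a real point $v\in V$ I would use two classical facts.
\begin{itemize}
\item The orbit $G^{\mathbb C}(v)$ is closed. This follows from the Kempf--Ness / Birkes--Richardson--Slodowy theory, since $v$ minimizes the $G$-invariant norm on its $G^{\mathbb C}$-orbit.
\item The complex isotropy group of $v$ is the complexification of the real one: $(G^{\mathbb C})_v=(G_v)^{\mathbb C}$. This can be obtained from the polar decomposition $G^{\mathbb C}=G\exp(i\mathfrak g)$ together with the convexity of $t\mapsto\|\exp(itX)v\|^2$.
\end{itemize}
With these, \cref{lem_part1_app:lower_symmetry_Zariski_open} gives, for each real $v\in V^H$, a Zariski open set $U_v\subseteq V^{\mathbb C}$ containing $v$ on which every point $u$ satisfies $((G^{\mathbb C})_u)\le((G_v)^{\mathbb C})$. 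Combining this with \cref{lem_part1_app:complexification_conjugation} converts the inequality back to the real one, $(G_u)\le(G_v)$, for every real $u\in U_v\cap V^H$.

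\textbf{Uniqueness.} Suppose $v,w\in V^H$ realise two minimal types. Then $U_v\cap(V^H)^{\mathbb C}$ and $U_w\cap(V^H)^{\mathbb C}$ are nonempty Zariski open subsets of an irreducible variety, so they intersect in a nonempty Zariski open set $W$. The real points $V^H$ are Zariski dense in $(V^H)^{\mathbb C}$, so $W$ contains a real point $u$. By the previous step, $(G_u)\le(G_v)$ and $(G_u)\le(G_w)$. Minimality of both types then forces $(G_v)=(G_u)=(G_w)$. Call this unique minimal type $(K)$.

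\textbf{Density.} Fix $v$ of type $(K)$. The set $U_v\cap V^H$ is a nonempty Zariski open subset of the real vector space $V^H$: its complement is the zero set of a nonzero polynomial. It is therefore Euclidean dense in $V^H$. Every real point of it has type $\le(K)$, hence exactly $(K)$ by minimality. So the minimal-type locus contains a dense set.

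\textbf{Openness.} For any point $v$ of type $(K)$, \cref{lem_part1_app:lower_symmetry_open} gives a Euclidean neighbourhood of $v$ in which all types are $\le(K)$. Intersected with $V^H$, all these types equal $(K)$, by minimality in $V^H$.

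\textbf{Main obstacle.} I expect the hardest step to be the compatibility between real and complex isotropy, namely closedness of $G^{\mathbb C}(v)$ and $(G^{\mathbb C})_v=(G_v)^{\mathbb C}$ for real $v$. This is exactly what is needed to apply \cref{lem_part1_app:lower_symmetry_Zariski_open} and then return via \cref{lem_part1_app:complexification_conjugation}. I would cite the Kempf--Ness framework, or the corresponding statements in \citet{azzi_rationality_2023}, rather than reprove them.
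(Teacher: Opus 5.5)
Your proposal is correct and follows essentially the same route as the paper. Openness comes from \cref{lem_part1_app:lower_symmetry_open} plus minimality in $V^H$, and density comes from \cref{lem_part1_app:lower_symmetry_Zariski_open} applied in the complexification, using closedness of $G^{\mathbb C}$-orbits of real points, $(G^{\mathbb C})_v=(G_v)^{\mathbb C}$, and the reverse direction of \cref{lem_part1_app:complexification_conjugation}. The only difference is organizational: you get uniqueness by intersecting two nonempty Zariski-open subsets of the irreducible space $(V^H)^{\mathbb C}$, whereas the paper proves open-density for each minimal type and deduces uniqueness afterwards in \cref{thm_part1_cont:uniqueness_of_minimal_type}, since two open dense sets must meet.
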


\begin{proof}
The proof strategy is similar to that of Prop.~2.10 in \citet{azzi_rationality_2023}. We first show that the set of points with a minimal orbit type is open, then use complexification to show it is Zariski-open, which implies density.

First, to prove openness, let $(K)$ be a minimal orbit type in $V^H$ and let $x \in V^H$ be a point with $(G_x)=(K)$. By \cref{lem_part1_app:lower_symmetry_open}, there exists a neighborhood $U$ of $x$ such that for any $u \in U$, $(G_u) \le (K)$. Since $(K)$ is minimal in $V^H$, any point in the open set $U \cap V^H$ must have orbit type $(K)$. Thus, openness is established.

Next, we consider the complexifications $G^{\mathbb{C}}$ and $V^{\mathbb{C}}$. For a point $x \in V$ with orbit type $(K)$, by Lem.~3.13 of \citet{azzi_rationality_2023}, we have $(G^{\mathbb{C}})_{x} = (G_{x})^{\mathbb{C}} = K^{\mathbb{C}}$. By Prop.~3.14 of \citet{azzi_rationality_2023}, the orbit $G^{\mathbb{C}}(x)$ is closed. Therefore, by \cref{lem_part1_app:lower_symmetry_Zariski_open}, there exists a Zariski neighborhood $U$ of $x$ in $V^{\mathbb{C}}$ such that any point in this neighborhood has an isotropy type less than or equal to $(K^{\mathbb{C}})$.
The set $U' = U \cap (V^{\mathbb{C}})^{H^{\mathbb{C}}}$ is Zariski-open in $(V^{\mathbb{C}})^{H^{\mathbb{C}}}$. By Cor.~A.7 of \citet{azzi_rationality_2023}, its real part $U'' = U' \cap V$ contains a non-empty real Zariski-open subset of $V^H$.

We now show that for any $y \in U''$, its orbit type is $(G_{y}) = (K)$. Since $y\in U$, \cref{lem_part1_app:lower_symmetry_Zariski_open} gives
$((G^{\mathbb C})_y)\leq(K^{\mathbb C})$. By Lem.~3.13 of \citet{azzi_rationality_2023}, we have $(G^{\mathbb C})_y=(G_y)^{\mathbb C}$. Therefore, the reverse implication of \cref{lem_part1_app:complexification_conjugation} yields $(G_y)\leq(K)$.

Since $y\in V^H$, the orbit type $(G_y)$ occurs among the orbit types in $V^H$. For \cref{lem_part1_app:complexification_conjugation}, by the minimality of $(K)$, the preceding inequality must be an equality $(G_y)=(K)$. Since $U''$
contains a nonempty real Zariski-open subset of $V^H$,
the set of points of orbit type $(K)$ is dense in $V^H$. Together with the openness proved above, this completes the proof.

\end{proof}

\thmOneCntUniquenessOfMinimalType*
\begin{proof}[Proof of \cref{thm_part1_cont:uniqueness_of_minimal_type}]

The density and openness of the set of points corresponding to any minimal orbit type is established by \cref{prop_part1_app:minimal_orbit_type_dense_open}. The uniqueness then follows from the fact that two distinct open dense sets must have a non-empty intersection.

\end{proof}

\subsection{Proof of \cref{thm_part1_cont:necessary_relative_isovariant}}

\thmOneContNecessaryRelativeIsovariant*

\begin{proof}[Proof of \cref{thm_part1_cont:necessary_relative_isovariant}]

    The necessity for general $G$-sets follows directly from the definition of relative isovariance.
    Now suppose that $X$ and $Y$ are representations. For any closed subgroup $H\subseteq G$,
    \begin{equation}
        (p_Y(H)) \leq I_G(Y,H),
    \end{equation}
        
    since every isotropy subgroup occurring in $Y^H$ contains both $H$ and $\ker\rho_Y$. If $(p_Y(H))\in O_G(Y)$, then this orbit type is realized in $Y^H$, because $H\subseteq p_Y(H)$. Hence, by the minimality of $I_G(Y,H)$,
    \begin{equation}
        I_G(Y,H)\leq (p_Y(H)).
    \end{equation}
    Therefore $I_G(Y,H)=(p_Y(H))$. The converse is immediate since
    $I_G(Y,H)$ is itself an orbit type occurring in $Y$.
    
\end{proof}

\clearpage
\section{Symmetry Increase for $G = SO(3)$ or $O(3)$ in \cref{computation_of_orbit_types}}
\label{sec:syminc_for_so3o3}

\subsection{General Orbit Type Criterion}
\label{sec:general_criterion}

\rebuttal{To derive \cref{prop_part2_cont:michel_criterion_sufficiency}, we need more general results require criteria for identifying orbit types of general representation. Given the sufficiency of the Ihrig-Golubitsky Criterion (\cref{prop_part2_app:ihrig_golubitsky_criterion_sufficiency}), we demonstrate that for high-multiplicity representations, the conditions of the Michel Criterion imply the conditions of the Ihrig-Golubitsky Criterion. This, in turn, establishes the sufficiency of the Michel Criterion.}

Let $G$ be a compact Lie group and $X$ be the input space. The normalizer of a subgroup $H \subseteq G$ is $N_G(H) := \{g \in G \mid gHg^{-1} = H\}$. The normalizer of $H$ relative to a supergroup $H' \supset H$ is $N_G(H, H') :=\{g \in G \mid H \subset gH'g^{-1}\}$. Based on the above definitions, we recall the following general criterion.

\begin{proposition}[Ihrig-Golubitsky Criterion, \citet{ihrig_pattern_1984}, Prop.~5.3]
\label{prop_part2_app:ihrig_golubitsky_criterion_sufficiency}
Let $V$ be a faithful representation of a group $G$. A sufficient condition for a closed subgroup $H$ to be an isotropy subgroup in $V$ is that for every orbit type $(H')$ in $V$ with $(H') > (H)$, the following inequality holds:
\begin{equation}
\dim V^{H'} + \alpha_{G}(H, H') := \dim V^{H'} + \dim N_G(H, H') - \dim  N_G(H') < \dim V^H,
\end{equation}
where $\alpha_{G}(H, H')$ is the Ihrig-Golubitsky correction term.
\end{proposition}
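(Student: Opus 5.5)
The plan is to show that the points of $V^H$ whose isotropy subgroup is strictly larger than $H$ form a subset of dimension strictly less than $\dim V^H$. Then some point of $V^H$ must have isotropy subgroup exactly $H$.

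\emph{Step 1: the bad set.} Every $v \in V^H$ satisfies $G_v \supseteq H$. If $G_v \neq H$, then $(G_v) > (H)$ strictly. The inequality is strict because a compact Lie group cannot be conjugate to a proper closed subgroup of itself: a conjugate has the same dimension and the same number of components. A representation of a compact Lie group has only finitely many orbit types, so we fix representatives $H'_1,\dots,H'_m$ of the orbit types $(H'_i) > (H)$ in $V$. Write $G_v = g H'_i g^{-1}$. Then $H \subseteq g H'_i g^{-1}$, i.e.\ $g \in N_G(H,H'_i)$, and $v \in g V^{H'_i}$. Hence
\begin{equation}
V^H \setminus \{v : G_v = H\} \subseteq \bigcup_{i=1}^m B_i, \qquad B_i := \bigcup_{g \in N_G(H,H'_i)} g\, V^{H'_i}.
\end{equation}

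\emph{Step 2: dimension count.} The set $N_G(H,H'_i)$ is closed in $G$, hence compact. It is stable under right multiplication by $N_G(H'_i)$: if $H \subseteq gH'_ig^{-1}$ and $n \in N_G(H'_i)$, then $H \subseteq gnH'_in^{-1}g^{-1}$. Moreover $gnV^{H'_i} = gV^{H'_i}$ for such $n$. So $B_i$ is the image of the map $(g,w)\mapsto gw$ on the twisted product $N_G(H,H'_i)\times_{N_G(H'_i)} V^{H'_i}$. The action of $N_G(H'_i)$ on this product is free. Hence the product has dimension $\dim N_G(H,H'_i) - \dim N_G(H'_i) + \dim V^{H'_i}$, and the image of this set under the smooth (indeed real-algebraic) map has dimension at most that. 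By hypothesis this number is $< \dim V^H$.

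\emph{Step 3: conclusion.} A finite union of sets of dimension $< \dim V^H$ has Lebesgue measure zero in $V^H$ and so cannot cover it. Any $v \in V^H \setminus \bigcup_i B_i$ then has $G_v = H$, so $H$ is an isotropy subgroup.

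The main obstacle is making Step 2 rigorous, because $N_G(H,H'_i)$ need not be a manifold. I would first observe that $G$ is a real algebraic group and the condition $H \subseteq gH'_ig^{-1}$ is algebraic in $g$. Hence $N_G(H,H'_i)$ is a compact semialgebraic set, and it admits a finite stratification into smooth submanifolds compatible with the right $N_G(H'_i)$-action. On each stratum $S$, the map $S \times V^{H'_i} \to V$ factors through the quotient by $N_G(H'_i)$. The image of a smooth map has dimension at most that of the source, and semialgebraic dimension does not increase under semialgebraic maps. Together these give the bound stratum by stratum. If the algebraic structure is unavailable, I would instead use a Sard-type argument: a Lipschitz image of a set of Hausdorff dimension $d$ has Hausdorff dimension at most $d$. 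Faithfulness of $V$ is not needed in this argument, beyond guaranteeing that the relevant orbit types are well defined.
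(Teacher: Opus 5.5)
Your argument is correct. The paper gives no proof of this proposition; it only cites Ihrig--Golubitsky. What you wrote is essentially the classical dimension count behind that result. You cover the bad set $V^H\setminus V_H$ by the images of $N_G(H,H'_i)\times_{N_G(H'_i)}V^{H'_i}$. This is the same twisted-product structure the paper later records for $V^H_{(H')}$ in \cref{lem_part2_app:bundle_structure_of_stratum}.

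One simplification: the obstacle you flag in Step 2 does not actually arise. Note that $N_G(H,H'_i)/H'_i=(G/H'_i)^H$ is the fixed-point set of the compact group $H$ acting smoothly on $G/H'_i$. It is therefore a closed submanifold, with finitely many components, possibly of different dimensions; read $\dim$ as the maximum. Its preimage $N_G(H,H'_i)$ under the bundle projection $G\to G/H'_i$ is then a compact submanifold of $G$, stable under right multiplication by $N_G(H'_i)$. So the twisted product is an honest manifold of the dimension you state. The fact that a smooth map from a lower-dimensional manifold has measure-zero image then finishes Step 3 directly, with no semialgebraic stratification needed. This matters, because your Hausdorff-dimension fallback would not by itself produce the $-\dim N_G(H'_i)$ correction without that quotient structure.

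Your remark that faithfulness is never used is also right. Without faithfulness the hypothesis automatically fails when $H$ does not contain the kernel, so the argument stays consistent.
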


\textit{Remark}. \citet{ihrig_pattern_1984} states that the condition is also necessary for $G = SO(3), O(3)$.

Compared to \cref{prop_part2_app:ihrig_golubitsky_criterion_sufficiency}, the Linehan-Stedman Criterion \citep{linehan_little_2001} is more convenient due to its structure, which only requires checking adjacent subgroups. Since the associated theoretical results are not needed in our proofs, we do not elaborate on them here.

\subsection{Proof of \cref{prop_part2_cont:michel_criterion_sufficiency}}

\propTwoCntMichelCriterionSufficiency*

\begin{proof}[Proof of \cref{prop_part2_cont:michel_criterion_sufficiency}]
We show that for representations where each irreducible component has multiplicity $r > \dim G$, the necessary condition from Michel's Criterion (\cref{thm_part2_cont:michel_criterion_necessary}) becomes sufficient by proving it implies the condition in \cref{prop_part2_app:ihrig_golubitsky_criterion_sufficiency}. \rebuttal{We may assume without loss of generality that the representation is faithful. For unfaithful representations, we proceed by factoring out the kernel from $G$ and invoking \cref{prop_part2_app:ihrig_golubitsky_criterion_sufficiency}, as the procedure remains unchanged.}

\rebuttal{It follows from the condition that for any closed subgroup $H'$ of $G$ containing $H$, we have $\dim V^{H'} < \dim V^H$.} The condition implies that for at least one irreducible component $V_i$, we must have $\dim V_{i}^{H'} < \dim V_{i}^H$. Since dimensions are integers, this is equivalent to $\dim V_{i}^{H'} + 1 \leq \dim V_{i}^H$.
Given that the multiplicity $m(V_i, V) > \dim G$, it follows that
\begin{equation}
m(V_i, V)\dim V_{i}^{H'} + \dim G < m(V_i, V)\dim V_{i}^H.
\end{equation}
Summing this strict inequality for one such component with the non-strict inequalities for all other components yields
\begin{equation} \label{eq:michel_strengthened}
\dim V^{H'} + \dim G < \dim V^H.
\end{equation}
The term from \cref{prop_part2_app:ihrig_golubitsky_criterion_sufficiency} is bounded by
\begin{equation} \label{eq:normalizer_bound_simple}
0 \leq \alpha_G(H, H') = \dim N_G(H, H') - \dim N_G(H') \leq \dim G.
\end{equation}
The sufficiency of Michel's condition follows from combining these two results:
\begin{align}
\dim V^{H'} < \dim V^H \quad &\implies \quad \dim V^{H'} + \dim G < \dim V^H\\
&\implies \quad \dim V^{H'} + \dim N_G(H, H') - \dim  N_G(H') < \dim V^H,
\end{align}
where the final implication uses the upper bound from \cref{eq:normalizer_bound_simple}. This shows that Michel's condition implies the sufficient condition from \cref{prop_part2_app:ihrig_golubitsky_criterion_sufficiency}.
\end{proof}

\subsection{\rebuttal{Detailed Calculation in \cref{ex:infimum_dnh}}}
\label{subsec:syminc_for_dkh}

\rebuttal{We now conduct the orbit-type test and symmetry infimum calculation for the geometric symmetry $D_{kh}$ ($k > 2$) of the $k$-fold from \cref{ex:o3sn_on_k_fold}. Consider $D_{kh}$ as a closed subgroups of $O(3)$, the calculation is performed in the representation space $Y = V_{l = l_{0}}^{\oplus r}$ for $r > 3$ and $l_0 > 0$. We will use the pre-computed subgroup relations from \cref{tab:adj_o3_axial}, the dimensions of the fixed-point spaces for $D_{kh}$ and $O_h$ in $V_{l=l_0}$ from \cref{tab:dim_fp_o3}, and the orbit type test results for $(D_{\infty h})$ and $(O_h)$ in $V_{l=l_0}^{\oplus r}$.}

\rebuttal{First, we perform the orbit type test. Consider the case where $k \neq 4$. In this situation, the adjacent supergroups of $D_{kh}$ are of the form $D_{pk,h}$, where $p$ is a prime number. We classify the discussion into 12 cases based on the parity of $k$ and $l_0$, and the ranges $l_0 < k$, $k \leq l_0 < 2k$, and $l_0 \geq 2k$, and calculate the dimensions of the fixed-point spaces of $V_{l=l_0}$ for $D_{kh}$ and $D_{pk,h}$. The calculated dimensions are shown in \cref{tab:dim_summary_dnh}. For the case $k=4$, we must additionally consider the supergroup $O_h$. Here, $\dim V_{l = l_{0}}^{D_{4h}} = \dim V_{l = l_{0}}^{O_{h}}$ only when $l_0$ is odd, which aligns with the results from \cref{tab:dim_summary_dnh}. In all other cases, $O_h$ does not affect the orbit type test of $(D_{kh})$. Thus, we find that $(D_{kh})$ is an orbit type when $l_0 \geq k$ and when $l_0$ and $k$ have the same parity.}

\begin{table}[H]
\centering
\caption{\rebuttal{Dimension of fixed-point space for supergroups of $D_{kh}$ ($k>2$) in $V_{l=l_0}^{\oplus r}$ ($r>3$), organized by the parity of $k$ and $l_0$.} }
\label{tab:dim_summary_dnh}
\setlength{\tabcolsep}{6pt}
\begin{tabular}{lcccccc}
\toprule
 & \multicolumn{2}{c}{$l_0 < k$} &  \multicolumn{2}{c}{$k \leq l_0 < 2k$} &  \multicolumn{2}{c}{$l_0 \geq 2k$} \\
 & $l_0$ is even & $l_0$ is odd  & $l_0$ is even & $l_0$ is odd  & $l_0$ is even & $l_0$ is odd \\
\midrule
\addlinespace[-0.1pt]
\rowcolor{gray!30}\multicolumn{7}{c}{$k$ is even} \\
\addlinespace[-2pt]
\midrule
$D_{kh}$        & $1$ & $0$ & $2$ & $0$ & $d$ & $0$  \\
$D_{pkh}$       & $1$ & $0$ & $1$ & $0$ & $<d$ & $0$ \\
\midrule
\addlinespace[-0.1pt]
\rowcolor{gray!30}\multicolumn{7}{c}{$k$ is odd} \\
\addlinespace[-2pt]
\midrule
$D_{kh}$        & $1$ & $0$ & $1$ & $1$ & $d$ & $d$\\
$D_{2kh}$       & $1$ & $0$ & $1$ & $0$ & $d$ & $0$\\
$D_{p^{*}kh}$   & $1$ & $0$ & $1$ & $0$ & $<d$ & $<d$\\
\bottomrule
\end{tabular}
\end{table}

\rebuttal{Next, we proceed with the symmetry infimum calculation. Again, we first consider the case where $k \neq 4$. In this situation, the supergroups of $D_{kh}$ are $D_{pk,h}$, $D_{\infty h}$, and $O(3)$, where $O(3)$ is always an orbit type. Using the orbit type test results just calculated and those pre-computed, we analyze the cases based on the parity of $k$ and $l_0$ as before, with the results summarized in \cref{tab:isotropy_summary_dnh}. For $k=4$, the additional supergroup $O_h$ does not affect the final result. This is because the supergroup $O_h$ is never the minimal isotropy supergroup, as it only becomes an orbit type when $(D_{4h})$ already is. This leads us to the symmetry infimums shown  in \cref{tab:infimum_summary_dnh}.}

\begin{table}[H]
\centering
\caption{Isotropy conditions for supergroups of $D_{kh}$ ($k>2$) in $V_{l=l_0}^{\oplus r}$ ($r>3$), organized by the parity of $k$ and $l_0$. A \ding{51} indicates the condition is satisfied, and \ding{55} that it is not. In the subgroup notation, $p$ denotes a prime number and $p^*$ denotes an odd prime number.}
\label{tab:isotropy_summary_dnh}
\setlength{\tabcolsep}{6pt}
\begin{tabular}{lcccccc}
\toprule
 & \multicolumn{2}{c}{$l_0 < k$} &  \multicolumn{2}{c}{$k \leq l_0 < 2k$} &  \multicolumn{2}{c}{$l_0 \geq 2k$} \\
 & $l_0$ is even & $l_0$ is odd  & $l_0$ is even & $l_0$ is odd  & $l_0$ is even & $l_0$ is odd \\
\midrule
\addlinespace[-0.1pt]
\rowcolor{gray!30}\multicolumn{7}{c}{$k$ is even} \\
\addlinespace[-2pt]
\midrule
$D_{kh}$        & \ding{55}  & \ding{55}  & \ding{51} & \ding{55}  & \ding{51} & \ding{55}  \\
$D_{pkh}$       & \ding{55}  & \ding{55}  & -   & \ding{55}  & -   & \ding{55}  \\
$D_{\infty h}$  & \ding{51} & \ding{55}  & -   & \ding{55}  & -   & \ding{55}  \\
\midrule
\addlinespace[-0.1pt]
\rowcolor{gray!30}\multicolumn{7}{c}{$k$ is odd} \\
\addlinespace[-2pt]
\midrule
$D_{kh}$        & \ding{55}  & \ding{55}  & \ding{55}  & \ding{51} & \ding{55}  & \ding{51} \\
$D_{2kh}$       & \ding{55}  & \ding{55}  & \ding{55}  & -   & \ding{51} & -   \\
$D_{p^{*}kh}$   & \ding{55}  & \ding{55}  & \ding{55}  & -   & -   & -   \\
$D_{\infty h}$  & \ding{51} & \ding{55}  & \ding{51} & -   & -   & -   \\
\bottomrule
\end{tabular}
\end{table}

\subsection{Calculation of Symmetry Infimum}
\label{subsec:cal_sym_inf}

\rebuttal{
We calculate the orbit types for the $SO(3)$ representation $V_{l=l_0}^{\oplus r}$ and the $O(3)$ representation $V_{l=l_0^{\pm}}^{\oplus r}$. As the calculation procedure is highly similar to that in \cref{ex:infimum_dnh}, we omit the detailed steps here. For the case of $r=1$, the results can be found in Table~B.1 and Table~B.2 of \citet{linehan_little_2001}.}

The orbit types are calculated using the procedure in \cref{algo:orbit_type_test}. This calculation requires the dimensions of the fixed-point spaces for the closed subgroups of $SO(3)$ or $O(3)$ in the representations $V_{l=l_0}$ and $V_{l=l_0^{\pm}}$, respectively, which are provided in \cref{subsec:fp_dim_so3_o3}.

\rebuttal{According to Prop.~6.2 from \citet{ihrig_pattern_1984}, the Ihrig-Golubitsky correction term $\alpha(H, H') = 0$ for all subgroups except $H=C_k$ in $SO(3)$, and for all subgroups except $H=C_k, S_{2k}, C_{kh}$ in $O(3)$. Therefore, for all subgroups other than $H=C_k, S_{2k}, C_{kh}$, our results are identical to those calculated by \citet{linehan_little_2001} for $r=1$. We present only the results of whether $(C_k) \in \mathcal{O}_{SO(3)}(V_{l=l_0}^{\oplus r})$ on \cref{tab:michel_so3} and whether $(C_k), (S_{2k}), (C_{kh}) \in \mathcal{O}_{O(3)}(V_{l=l_0^{-}}^{\oplus r})$ on \cref{tab:michel_o3}. 

The reason for omitting the discussion of $V_{l=l_0^+}^{\oplus r}$ is consistent with the explanation in the header of Table~B.2 in \citet{linehan_little_2001}. This is because the corresponding conclusions can be found in the orbit type table for the $SO(3)$ representation $V_{l=l_0}^{\oplus r}$ via the mappings $D_\infty \to D_{\infty h}$, $C_\infty \to C_{\infty h}$, $I \to I_h$, $O \to O_h$, $T \to T_h$, as well as $D_k \to D_{kh}$ and $C_k \to C_{kh}$ for even $k$, and $D_k \to D_{kd}$ and $C_k \to S_{2k}$ for odd $k$.}

\begin{table}[H]
\centering
\caption{Modified isotropy subgroups for the multiple irreducible representation $V_{l = l_{0}}^{\oplus r}, r > 3$ of $SO(3)$ obtained via the Michel criterion.}
\label{tab:michel_so3}
\setlength{\tabcolsep}{6pt}
\begin{tabular}{lll}
\toprule
    $H$ & Condition of $l_0$ \\
\midrule
    $C_k$ & $l_0 \geq k$ \\
\bottomrule
\end{tabular}
\end{table}

\begin{table}[H]
\centering
\caption{Modified isotropy subgroups for the multiple irreducible representation $V_{l = l_{0}^{-}}^{\oplus r}, r > 3$ of $O(3)$ with nontrivial reflection, obtained via the Michel criterion.}
\label{tab:michel_o3}
\begin{tabular}{lll}
\toprule
    $H$                     & Condition of $l_0$ \\
\midrule
    $C_{k}$      & $l_{0} \geq k$ \\
    $S_{2k}$\ ($k$ is even) & $l_{0} \geq k$ \\
    $C_{kh}$\ ($k$ is odd)  & $l_{0} \geq k$ \\
\bottomrule
\end{tabular}
\end{table}

\rebuttal{The complete set of orbit type calculation results can also be found in the tables for the symmetry infimum. This is because $(H)$ is an orbit type of a representation $V$ if and only if the symmetry infimum of $H$ in $V$ is $(H)$ itself. These non-degenerate cases are highlighted in \colorbox{green!30}{green} in the tables.}

\rebuttal{
Using the calculated orbit types for the $SO(3)$ representation $V_{l=l_0}^{\oplus r}$ and the $O(3)$ representation $V_{l=l_0^{\pm}}^{\oplus r}$, we can now compute the symmetry infimum for all subgroups of $SO(3)$ or $O(3)$ in these representations. When calculating the symmetry infimum for all subgroups, the algorithm differs slightly from that in \cref{ex:infimum_dnh}. We reduce the need to enumerate all supergroups by leveraging the symmetry infimums of adjacent supergroups. An improved version of \cref{algo:symmetry_inf} is detailed in \cref{algo:symmetry_inf_precomp}.
}

\begin{algorithm}[!t]
\caption{Symmetry Infimum Calculation With Precomputed Results}
\label{algo:symmetry_inf_precomp}
\KwData{A symmetry group $G$;\newline a closed subgroup $H \subset G$;\newline a Rep. $V$ of $G$;\newline a map $M$ of previously computed symmetry infs.}
\KwResult{Symmetry inf. $I_G(V, H)$.}

\If{$\texttt{\upshape is\_in}((H), \mathcal{O}_{G}(V))$}{
    \Return $(H)$\;
}

Let $S_0=\{H_i\} \subset G$ to be all adjacent closed supergroups of $H$ in $G$\;
$\mathcal{O} \leftarrow \varnothing$\;
\For{$H_i$ in $S_0$}{
    \If{$\texttt{\upshape is\_in}((H_i), \mathcal{O}_{G}(V))$}{
        Add $(H_i)$ to the set $\mathcal{O}$\;
    }
    \ElseIf{$H_i$ is a key in the map $M$}{
        Add $M[H_i] = I_G(V, H_i)$ to the set $\mathcal{O}$\;
    }
    \Else{
        Let $S_i=\{K_j\}$ to be all adjacent closed supergroups of $H_i$ in $G$\;
        \For{$K_j$ in $S_i$}{
            \If{$(K_j) \in \mathcal{O}_{G}(V)$}{
                Add $(K_j)$ to the set $\mathcal{O}$\;
            }
        }
    }
}

\Return $\min(\mathcal{O})$\;
\end{algorithm}

\rebuttal{
To reduce the computational load, this algorithm employs a top-down calculation strategy. First, we compute the results for the infinite groups, followed by the polyhedral groups. Finally, we calculate the results for the axial groups in the order $D_{kh}, D_{kd}, D_{k}, C_{kv}, C_{kh}, S_{2k}, C_{k}$. Due to the exceptional subgroup relations shown in \cref{tab:adj_o3_axial}, special consideration for additional supergroups is required for certain cases where $k \in \{1,2,3,4,5\}$. These cases are handled last.

The results for $SO(3)$ are presented in \cref{subsec:sym_inf_so3}, and the results for $O(3)$ are in \cref{subsec:sym_inf_o3}. Here, we provide a general classification for the observed symmetry increases. For a closed subgroup $H$, an increase to the full group $O(3)$ or $SO(3)$ is termed \textbf{full degeneration} and marked in \colorbox{red!30}{red} in the tables. An increase to a supergroup of higher dimension than $H$ is termed \textbf{continuous degeneration} and marked in \colorbox{blue!30}{blue}. An increase to a supergroup of the same dimension as $H$ is termed \textbf{discrete degeneration} and marked in \colorbox{yellow!30}{yellow} or \colorbox{green!10}{light green}. No symmetry increase is \textbf{no degeneration} and marked in \colorbox{green!30}{green}.

For the subgroup $D_{kh} \subset O(3)$, the classification of degeneration behaviors given after \cref{ex:infimum_dnh} is a special case of this general framework. Specifically, full degeneration is identical to the definition here, axial degeneration corresponds to continuous degeneration, and half degeneration corresponds to discrete degeneration.

We note that for the representation $Y = V_{l = l_{0}^+}^{\oplus r}$, the action of $G$ has a non-trivial kernel, and thus symmetry increase is inevitable. Let $\pi_Y$ be the natural projection to the quotient $G/\mathrm{ker}\rho_{Y}$ In this context, the \colorbox{green!10}{light green} marks an increase to $\pi_Y^{-1}(\pi_Y(H))$, which, as explained in \cref{subsec:eq_nontri_ker}, is the lowest possible symmetry that $H$ can be reduced to when a non-trivial kernel is present. This is therefore a predictable behavior. The \colorbox{yellow!30}{yellow} marks other exceptional cases within discrete degeneration.}

\subsection{\rebuttal{Composition Property of High-Multiplicity Representation}}
\label{subsec:comp_prop}

\rebuttal{

In the tables of \cref{subsec:sym_inf_so3} and \cref{subsec:sym_inf_o3}, there exists a special class of subgroups $H$ for which any non-trivial symmetry increase always results in an infimum $(H_0)$ where $H_0$ is an adjacent supergroup. We call $(H_0)$ the \textbf{bottleneck} of $H$, denoted by $B_G(H)$, and we say that a subgroup $H$ that possesses a bottleneck satisfies the \textbf{bottleneck condition}. These groups that act as bottlenecks satisfy some elegant properties. To demonstrate this, we first prove a structure theorem for the symmetry infimum of high-multiplicity representations.

\begin{proposition}
\label{prop_thm_app:high_mult_sym_inf}
For a high-multiplicity representation $V$, the lowest orbit type in the fixed-point subspace $V^H$ is $(G_{V^H})$, where $G_{V^H} = \bigcap_{x \in V^H} G_{x}$ is the largest subgroup that leaves $V^H$ invariant. This shows that $I_G(V, H) = (G_{V^H})$.
\end{proposition}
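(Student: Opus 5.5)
Set $K := G_{V^H} = \bigcap_{x \in V^H} G_x$, the pointwise stabilizer of $V^H$; it is the largest subgroup fixing $V^H$ pointwise. The plan is to show two things. First, $V^K = V^H$. Second, $(K)$ is itself an orbit type of $V$, realized by a point of $V^H$. Given these, the statement follows: any $x \in V^H$ has $G_x \supseteq K$, so $(G_x) \ge (K)$, and minimality gives $I_G(V,H) = (K)$ by \cref{thm_part1_cont:uniqueness_of_minimal_type}.

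\textbf{Step 1.} We have $H \subseteq K$ because $H$ fixes $V^H$ pointwise. Hence $V^K \subseteq V^H$. Conversely, every point of $V^H$ is fixed by $K$ by definition, so $V^H \subseteq V^K$. Thus $V^K = V^H$.

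\textbf{Step 2.} We show $(K) \in \mathcal{O}_G(V)$ using \cref{prop_part2_cont:michel_criterion_sufficiency}, which applies because $V$ is high-multiplicity and $K$ is closed as an intersection of closed subgroups. Take any closed $H' \supsetneq K$. If $\dim V^{H'} = \dim V^K$, then $V^{H'} = V^K = V^H$, since $V^{H'} \subseteq V^K$ and both are linear subspaces of the same dimension. Then $H'$ fixes $V^H$ pointwise, so $H' \subseteq K$, a contradiction. Therefore $\dim V^{H'} < \dim V^K$ for every proper closed supergroup, in particular every adjacent one. Michel's condition holds, and $K$ is an isotropy subgroup of some $y \in V$.

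\textbf{Step 3.} We must place the realizing point inside $V^H$. Since $G_y = K \supseteq H$, we have $y \in V^K = V^H$. So $(K)$ occurs in $V^H$ and is a lower bound for all orbit types there, which gives $I_G(V,H) = (K)$.

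\textbf{Expected obstacle.} The main point needing care is Step 2: applying the sufficiency result requires checking only adjacent supergroups. The argument above works for all proper closed supergroups, so it covers whatever notion of adjacency the criterion uses. The faithfulness reduction is already handled inside the proof of \cref{prop_part2_cont:michel_criterion_sufficiency}.

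If that sufficiency is unavailable for $K$ of positive dimension, the fallback is a density argument. Use \cref{prop_part1_app:minimal_orbit_type_dense_open} to pick a generic $x \in V^H$ of the minimal type $(L)$, with $L = G_x \supseteq K$. Then show $L = K$, since the high-multiplicity hypothesis should let points of $V^H$ with $G_x \supsetneq K$ lie in finitely many proper subspaces $V^{H'}$.
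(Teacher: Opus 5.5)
Your proof is correct and follows the paper's argument. Both apply the sufficiency of Michel's criterion (\cref{prop_part2_cont:michel_criterion_sufficiency}) to $G_{V^H}$, noting that any proper closed supergroup with the same fixed-point dimension would fix $V^H$ pointwise and hence lie in $G_{V^H}$. Your Step 3, which shows that a point with isotropy $G_{V^H}$ automatically lies in $V^{G_{V^H}} = V^H$, makes explicit a detail the paper leaves implicit, and the density fallback is not needed.
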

\begin{proof}
    Since any element in $V^H$ has at least the symmetry $G_{V^H}$, we only need to prove that $G_{V^H}$ is an isotropy subgroup. Assume, for the sake of contradiction, that $G_{V^H}$ is not an isotropy subgroup. By the sufficiency of the Michel Criterion, there exists a supergroup $K$ such that $G_{V^H} \subsetneq K$ and
\begin{equation}
\dim V^H = \dim V^{G_{V^H}} = \dim V^{K} \implies V^H = V^{G_{V^H}} = V^{K}.    
\end{equation}

This means that $K$ leaves all elements of $V^H$ fixed. From this, we derive a contradiction:
\begin{equation}
K \subset \bigcap_{x \in V^H} G_{x} = G_{V^H}.    
\end{equation}

\end{proof}

\textit{Remark}. This result implies that for a high-multiplicity representation, the symmetry increase from $(H)$ to $I_G(V,H)$ does not alter the dimension of the fixed-point space. Therefore, for high-multiplicity representations, the dimension of the fixed-point subspace corresponding to the input symmetry group equals that corresponding to the symmetry infimum. Since the fixed-point subspace dimensions for certain closed subgroups exhibit distinct regularities, the behavior of symmetry increase toward these subgroups serves as an indicator of the underlying subspace dimension. For instance, in the cases of $SO(3)$ or $O(3)$, for non-trivial representations, full degeneration corresponds to a $0$-dimensional fixed-point subspace, whereas for finite input subgroups, continuous degeneration corresponds to a $1$-dimensional subspace. However, we must emphasize that when a quantitative assessment of the equivariant feature's expressive capacity is required, relying solely on the symmetry infimum is insufficient, as it yields only coarse, qualitative insights. In such cases, one should directly compute the fixed-point subspace dimension. For calculations related to $SO(3)$ or $O(3)$, see \cref{subsec:fp_dim_so3_o3}.

We can prove that these groups acting as bottlenecks satisfy the property that for any irreducible representation $V_0$, if $\dim V_{0}^{H} > \dim V_{0}^{H_{0}}$, then $\dim V_{0}^{H} > \dim V_{0}^{H'}$ holds for all adjacent supergroups $H'$ of $H$. This is because if we assume there exists an $H'$ such that equality holds, then by the Michel Criterion, $(H)$ must undergo a non-trivial symmetry increase to $(G_{V^H})$ in the high-multiplicity representation $V_{0}^{\oplus r}$. Since these increases always have the bottleneck $(H_0)$ as their infimum, the inequalities
\begin{equation}
\dim V_{0}^H \geq \dim V_{0}^{H_{0}} \geq \dim V_{0}^{G_{V^H}}
\end{equation}
must in fact collapse to $\dim V_{0}^H = \dim V_{0}^{H_{0}}$,
which leads to a contradiction.

For any high-multiplicity representation $V$ that satisfies $\dim V^{H} > \dim V^{H_{0}}$, there must exist a component corresponding to an irreducible representation $V_0$ in $V$ such that $\dim V_{0}^{H} > \dim V_{0}^{H_{0}}$. It follows that $\dim V_{0}^{H} > \dim V_{0}^{H'}$ for all adjacent supergroups $H'$ of $H$, which in turn shows that $\dim V^{H} > \dim V^{H'}$ holds for all such $H'$. Therefore, for high-multiplicity representations, $H_0$ controls the dimension gap of the fixed-point spaces between $H$ and its other adjacent supergroups.

This property allows us to prove a theorem regarding the direct sum of high-multiplicity representations, which in turn establishes a property for the orbit types of such direct sums.

\begin{theorem}
\label{prop_thm_app:comp_high_mult_orbit_type)}
Let a subgroup $H$ satisfy the bottleneck condition. For two high-multiplicity representations $V_1$ and $V_2$, we have $(H) \in \mathcal{O}_{G}(V_{1} \oplus V_{2})$ if and only if $(H) \in \mathcal{O}_{G}(V_{1})$ or $(H) \in \mathcal{O}_{G}(V_{2})$.
\end{theorem}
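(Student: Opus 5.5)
The plan is to reduce both directions to Michel's criterion, which is sufficient for high-multiplicity representations by \cref{prop_part2_cont:michel_criterion_sufficiency}. First we note that $V_1\oplus V_2$ is again high-multiplicity: each isotypic component of the sum has multiplicity equal to the sum of the multiplicities in $V_1$ and $V_2$, so it still exceeds $\dim G$. Fixed-point dimensions are additive, $\dim (V_1\oplus V_2)^K=\dim V_1^K+\dim V_2^K$ for every closed $K$. Hence $(H)$ is an orbit type of any of the three spaces exactly when $\dim V^{H'}<\dim V^H$ for every adjacent closed supergroup $H'\supsetneq H$.

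For the ``if'' direction, suppose $(H)\in\mathcal{O}_G(V_1)$. Then for every adjacent $H'$ we have $\dim V_1^{H'}<\dim V_1^H$. We also always have $\dim V_2^{H'}\le \dim V_2^H$, because $V_2^{H'}\subseteq V_2^H$. Summing gives the strict inequality for $V_1\oplus V_2$, so Michel's criterion yields $(H)\in\mathcal{O}_G(V_1\oplus V_2)$. This direction does not use the bottleneck condition. It is just the general inclusion $\mathcal{O}_G(V_1)\cup\mathcal{O}_G(V_2)\subseteq\mathcal{O}_G(V_1\oplus V_2)$.

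For the ``only if'' direction, suppose $(H)\in\mathcal{O}_G(V_1\oplus V_2)$ and let $H_0$ be the bottleneck, so $B_G(H)=(H_0)$. Since $H_0$ is an adjacent supergroup of $H$, Michel's necessary condition gives $\dim(V_1\oplus V_2)^{H_0}<\dim(V_1\oplus V_2)^H$. By additivity, at least one summand, say $V_1$, satisfies $\dim V_1^{H_0}<\dim V_1^H$.

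Now we invoke the property established just before the theorem. For a high-multiplicity $V$ with $\dim V^H>\dim V^{H_0}$, some irreducible component $V_0$ has $\dim V_0^H>\dim V_0^{H_0}$. The bottleneck argument then gives $\dim V_0^H>\dim V_0^{H'}$ for every adjacent $H'$, and therefore $\dim V^H>\dim V^{H'}$ for every adjacent $H'$. Applying this with $V=V_1$, Michel's condition holds for $H$ in $V_1$, and sufficiency gives $(H)\in\mathcal{O}_G(V_1)$.

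The main obstacle is making the preceding bottleneck property airtight. It rests on the fact that in $V_0^{\oplus r}$ any nontrivial increase of $(H)$ lands exactly on $(H_0)$. Combined with \cref{prop_thm_app:high_mult_sym_inf}, this forces $\dim V_0^H=\dim V_0^{G_{V_0^H}}\le\dim V_0^{H_0}$. That step needs $H_0\subseteq$ a conjugate of $G_{V_0^H}$ to be compatible with the fixed-point inclusions, so the conjugation has to be tracked: we should use $\dim V_0^{gH_0g^{-1}}=\dim V_0^{H_0}$, and we have $H\subseteq G_{V_0^H}$. I would make this explicit. Since $(H_0)\le(G_{V_0^H})$, we have $V_0^{G_{V_0^H}}\subseteq V_0^{gH_0g^{-1}}$ for suitable $g$. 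Since $G_{V_0^H}$ fixes $V_0^H$ pointwise, $V_0^{G_{V_0^H}}=V_0^H$. Together these give $\dim V_0^H\le\dim V_0^{H_0}$, which contradicts the strict inequality.
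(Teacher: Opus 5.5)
Your proposal is correct and follows the paper's proof essentially verbatim. You apply Michel's necessary condition at the bottleneck $H_0$ in $V_1\oplus V_2$, split the strict inequality by additivity of fixed-point dimensions, and then use the previously established bottleneck property together with Michel sufficiency to conclude in one summand; your explicit conjugation bookkeeping for that property tightens the paper's sketch. For the ``if'' direction the paper simply uses the general inclusion $\mathcal{O}_G(V_1)\cup\mathcal{O}_G(V_2)\subseteq\mathcal{O}_G(V_1\oplus V_2)$, which is more immediate than your Michel-based argument: if $G_{v_1}=H$ then $(v_1,0)$ already has isotropy $H$.
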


\begin{proof}
We have already shown that for any high-multiplicity representation $V$, if $\dim V^{H} > \dim V^{H_{0}}$, then $\dim V^{H} > \dim V^{H'}$ holds for all adjacent supergroups of $H$. Therefore, assuming $(H) \in \mathcal{O}_{G}(V_{1} \oplus V_{2})$, the Michel Criterion gives us
\begin{equation}
\dim V_{1}^{H} + \dim V_{2}^{H} > \dim V_{1}^{H'} + \dim V_{2}^{H'}.    
\end{equation}

By taking $H' = H_0 = B_G(H)$, we see that at least one of the following two conditions must be true:
\begin{equation}
\dim V_{i}^{H} > \dim V_{i}^{H_{0}}, \quad i=1,2.    
\end{equation}

This implies that for all adjacent supergroups of $H$, at least one of the following two conditions must hold:
\begin{equation}
\dim V_{i}^{H} > \dim V_{i}^{H'}, \quad i=1,2.
\end{equation}

Therefore, $(H)$ is an orbit type of either $V_1$ or $V_2$.
\end{proof}

The above theorem shows that when we construct a high-multiplicity representation for which $(H)$ is an orbit type using high-multiplicity components, the only way is to find a high-multiplicity component that already contains $(H)$ as an orbit type.

For $SO(3)$, all closed subgroups satisfy the bottleneck condition. Consequently, for high-multiplicity representations, we have $\mathcal{O}_{G}(V_{1}) \cup \mathcal{O}_{G}(V_{2}) = \mathcal{O}_{G}(V_{1} \oplus V_{2})$. This means that $I_{G}(V_{1} \oplus V_{2}, H)$ will be the minimum of $I_{G}(V_{1}, H)$ and $I_{G}(V_{2}, H)$. Furthermore, the set of representations $\{V_{l = l_{0}}^{\oplus r}\}$ is sufficient to generate all closed subgroups as orbit types, because for any closed subgroup, there always exists a high-multiplicity representation for which it is an orbit type.

For $O(3)$, the bottleneck condition does not necessarily hold. For example, for $H=C_{\infty}$, a symmetry increase can result in either $D_{\infty}$ or $C_{\infty v}$, which shows that no bottleneck group exists. The fact that $C_{\infty}$ never appears as an orbit type in any representation $V_{l = l_{0}^{\pm}}^{\oplus r}$ demonstrates this point precisely.

This introduces a subtle issue when we apply the guideline on \cref{subsec:sym_inc_app}: for certain orbit types, a representation exhibiting the target orbit type cannot be constructed simply by including the component $V_{l = l_{0}^{\pm}}^{\oplus r}$ associated with it. Fortunately, $C_{\infty}$ is the sole instance of this phenomenon encountered when $G=O(3)$. Regarding the construction of a $O(3)$ representation containing $C_{\infty}$, following \cref{prop_part2_cont:michel_criterion_sufficiency}, it suffices to simultaneously select components $V_{l = l_{0}^{-}}^{\oplus r}$ with both odd and even degrees $l_0 > 0$.
}

\clearpage

\section{Proofs of Density of (Almost) Isovariant Maps (\cref{density_of_isovariant_maps})}

\subsection{Counterexample in \cref{subsec:the_manifold_hypothesis}}

\begin{lemma}[Borsuk-Ulam Theorem, \citet{guillemin_differential_1974}, Chap.~2, Sec.~6]
\label{lem_part3_app:borsuk_ulam}
For any continuous odd function $g: S^n \to \mathbb{R}^n$, there exists a point $x \in S^n$ such that $g(x) = 0$.
\end{lemma}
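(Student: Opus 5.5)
The plan is to argue by contradiction and reduce to a degree statement about odd self-maps of spheres. Suppose $g: S^n \to \mathbb{R}^n$ is continuous, odd, and nowhere zero. Then $h := g/\lVert g\rVert : S^n \to S^{n-1}$ is a continuous odd map. First we make $h$ smooth without losing oddness. By compactness there is a smooth $\tilde h_0$ with $\lVert \tilde h_0 - h\rVert < 1/2$. Its odd part $\tilde h_1(x) := \tfrac12(\tilde h_0(x) - \tilde h_0(-x))$ still satisfies $\lVert \tilde h_1 - h\rVert < 1/2$, since $h$ is odd. In particular $\tilde h_1$ never vanishes, so $\tilde h := \tilde h_1/\lVert \tilde h_1\rVert$ is a smooth odd map $S^n \to S^{n-1}$. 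Hence we may assume $h$ is smooth.

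Next, restrict $h$ to the equator $S^{n-1} = S^n \cap \{x_{n+1}=0\}$, which gives an odd smooth map $f := h|_{S^{n-1}} : S^{n-1} \to S^{n-1}$. The map $f$ extends smoothly over the closed upper hemisphere, which is diffeomorphic to the disk $D^n$. It is therefore homotopic to a constant, so its mod-$2$ degree is $0$; equivalently, by the boundary theorem, for a regular value $y$ the number $\#f^{-1}(y)$ is even. The contradiction will come from the following key lemma.

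\emph{Lemma: every smooth odd map $f: S^m \to S^m$ has odd mod-$2$ degree.}

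I would prove the lemma by induction on $m$. For $m=0$ it is immediate: an odd map of $S^0=\{\pm1\}$ is a bijection. For the inductive step, I expect the main difficulty to be choosing the right transversality setup so that the odd symmetry pairs up preimages. The approach I would try first follows Guillemin--Pollack. Let $S^{m-1}\subset S^m$ be the equator. Replace $f$ by an odd map homotopic to it through odd maps, so that $f|_{S^{m-1}}$ is transversal to a suitable great subsphere, and pick a regular value $y$ of both $f$ and the relevant restriction. Then count the points of $f^{-1}(\{y,-y\})$ lying in the open upper hemisphere, using the $1$-manifold $f^{-1}$ of a great half-circle from $y$ to $-y$. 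Its boundary points lie either over $\pm y$ or on the equator. By oddness, the equatorial boundary points come in antipodal pairs, and their number is governed by the degree of an odd map $S^{m-1}\to S^{m-1}$ obtained by composing $f|_{S^{m-1}}$ with the projection from $\pm y$. By the induction hypothesis this degree is odd. Since the boundary of a compact $1$-manifold has an even number of points, $\#f^{-1}(y)$ must then be odd.

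Applying the lemma with $m=n-1$ to $f = h|_{S^{n-1}}$ shows its degree is odd, contradicting the evenness from the extension over the hemisphere. Hence $g$ must have a zero.
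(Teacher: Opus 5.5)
Your strategy matches the Guillemin--Pollack route that the paper cites without giving a proof: smooth and symmetrize, reduce to the claim that an odd smooth map $S^m\to S^m$ has odd mod-$2$ degree (equivalent to their statement $W_2(f,0)=1$ for symmetric maps $S^k\to\mathbb{R}^{k+1}\setminus\{0\}$), then induct over the equator. The smoothing and the reduction are fine.

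\textbf{The gap.} The inductive step as written rests on a false claim: ``by oddness, the equatorial boundary points come in antipodal pairs.'' Suppose $x\in S^{m-1}$ and $f(x)\in\gamma$. Then $f(-x)=-f(x)$ lies on the complementary half-circle $-\gamma$. Since $\gamma\cap(-\gamma)=\{\pm y\}$ and $\pm y$ are not hit on the equator, no equatorial boundary point has its antipode among the others. Worse, if they were paired, their number would be even, and your parity count would force $\#f^{-1}(y)$ to be even, which is the opposite of what you need. You have placed the use of oddness at the wrong end of the $1$-manifold.

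\textbf{The repair.} Write $\gamma=\{\cos t\,y+\sin t\,v: 0\le t\le\pi\}$ with $v\perp y$ a unit vector. Let $\pi_y(z)=(z-\langle z,y\rangle y)/\lVert z-\langle z,y\rangle y\rVert$ on $S^m\setminus\{\pm y\}$, and set $\phi=\pi_y\circ f|_{S^{m-1}}$, an odd map into the unit sphere of $y^\perp$.
\begin{itemize}
\item The equatorial boundary points of $f^{-1}(\gamma)\cap H_+$ are exactly $\phi^{-1}(v)$. Their number is odd directly by the induction hypothesis; no pairing is involved.
\item Oddness of $f$ is needed only for the interior end. Since $f^{-1}(-y)=-f^{-1}(y)$ and none of these points lie on the equator, each antipodal pair contributes exactly one point to the open upper hemisphere.
\end{itemize}
Hence
\[
\#f^{-1}(y)=\#\bigl(f^{-1}(\{\pm y\})\cap H_+^{\circ}\bigr)\equiv\#\phi^{-1}(v)\equiv 1 \pmod 2 .
\]

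\textbf{Transversality.} No homotopy through odd maps is needed; Sard's theorem applied to the choice of $y$ and $v$ suffices for $f$ itself.
\begin{itemize}
\item Choose $y$ a regular value of $f$ with $y\notin f(S^{m-1})$. This is possible because $f(S^{m-1})$ has measure zero. By oddness $-y$ has the same properties, so $\phi$ is defined and smooth.
\item Choose $v$ a common regular value of $\pi_y\circ f$ on $S^m\setminus f^{-1}(\{\pm y\})$ and of $\phi$.
\end{itemize}
With these choices, $f^{-1}(\gamma)$ is a compact $1$-manifold with boundary $f^{-1}(\{\pm y\})$. It meets the equator transversally: $d\phi_x$ is an isomorphism, so the kernel line of $d(\pi_y\circ f)_x$ is not tangent to $S^{m-1}$. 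Therefore $f^{-1}(\gamma)\cap H_+$ is a compact $1$-manifold with boundary $\bigl(f^{-1}(\{\pm y\})\cap H_+^{\circ}\bigr)\sqcup\phi^{-1}(v)$, which is exactly what the parity count requires. With these corrections your argument is complete.
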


\begin{lemma}[Weak Borsuk-Ulam Theorem, \citet{nagasaki_weak_2003}, Thm.~A]
\label{lem_part3_app:weak_borsuk_ulam}
Let $M$ and $N$ be $G$-spheres in a representation space for a compact group $G$. If there exists an equivariant map from $M$ to $N$, then the following dimensional inequality holds:
\begin{equation}
\varphi_{G}(\dim M - \dim M^G) \leq \dim N - \dim N^G,
\end{equation}
where $\varphi_{G}: \mathbb{N} \to \mathbb{N}$ is a non-decreasing function that diverges to infinity.
\end{lemma}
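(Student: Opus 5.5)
The statement above is the weak Borsuk--Ulam theorem of \citet{nagasaki_weak_2003}. The plan is to reduce it to the classical Borsuk--Ulam-type inequality for \emph{elementary} groups. Write $M=S(V)$ and $N=S(W)$ for unit spheres of representations $V,W$ of $G$, so that $\dim M-\dim M^G=\dim V-\dim V^G$, and similarly for $N$.

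The first ingredient is the linear inequality for elementary groups. Let $K$ be a torus $T^s$ or a finite $p$-group. If there is a $K$-equivariant map $S(V)\to S(W)$, then
\begin{equation}
\dim V-\dim V^K\le \dim W-\dim W^K .
\end{equation}
I would prove this by induction along a chain of subgroups, reducing to $K=\mathbb{Z}_p$ and to $K=S^1$. In these two cases one uses Smith theory: the fixed sets are mod-$p$ homology spheres. One then compares the equivariant Euler class, or the degree on the fixed sets, in $H^*(BK;\mathbb{F}_p)$; nonvanishing of the Euler class of $V/V^K$ forces the dimension inequality. This is the same mechanism that proves \cref{lem_part3_app:borsuk_ulam} for $K=\mathbb{Z}_2$.

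The second step is to pass from $G$ to a single elementary subgroup. Restricting the given $G$-map to any elementary $K\subseteq G$ gives
\begin{equation}
\dim V-\dim V^K\le \dim W-\dim W^K\le \dim W-\dim W^G,
\end{equation}
where the last inequality uses $W^G\subseteq W^K$. So it suffices to construct a non-decreasing, unbounded $\varphi_G$ that depends only on $G$ and satisfies
\begin{equation}
\max_{K}\bigl(\dim V-\dim V^K\bigr)\ \ge\ \varphi_G\bigl(\dim V-\dim V^G\bigr),
\end{equation}
where the maximum ranges over a fixed family of elementary subgroups $K$.

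For this family I would take a maximal torus $T$ of the identity component $G_0$, together with Sylow $p$-subgroups (and the toral parts) of lifts of the finite group $G/G_0$. Up to conjugacy this family is finite. First split $V$ into $V^{G_0}$ and its $G_0$-complement.
\begin{itemize}
\item On the $G_0$-complement, every nontrivial irreducible $G_0$-module has a nonzero weight. Its Weyl orbit contains at least two nonzero weights, and the zero-weight space occupies at most a bounded fraction $c_G<1$ of the module. Hence $\dim V-\dim V^T$ is at least $(1-c_G)$ times the dimension of the $G_0$-complement.
\item On $V^{G_0}$, the finite group $G/G_0$ acts. A nontrivial module of a finite group is moved by some element of prime-power order, so it is moved by one of finitely many $p$-subgroups. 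A pigeonhole argument over these finitely many subgroups then yields a linear lower bound.
\end{itemize}
Combining the two pieces, one elementary subgroup $K$ in the family captures at least a fixed positive fraction of $\dim V-\dim V^G$. This gives $\varphi_G(d)\ge c\,d$ for some constant $c=c(G)>0$, which is more than the statement requires.

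I expect the main obstacle to be this uniform selection step. The first point to check is the Weyl-orbit bound on the zero-weight fraction, uniformly over all irreducibles of $G_0$; a uniform constant may fail, since the zero-weight fraction of irreducibles of large highest weight can come close to the full dimension. If that happens, I would fall back to the weaker statement that only needs \emph{some} unbounded $\varphi_G$. For that I would use that there are only finitely many irreducibles below any given dimension, and that a large nontrivial part must either contain many copies of small irreducibles or one large irreducible. In both cases the count of nonzero weights grows with $d$, though only sublinearly in the second case. This is exactly why the theorem only asserts a function $\varphi_G$ rather than a linear bound.
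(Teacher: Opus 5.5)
\textbf{There is a genuine gap, and it is in your first ingredient.}

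For maps that are merely \emph{equivariant}, the inequality $\dim V-\dim V^K\le \dim W-\dim W^K$ is false whenever $W^K\neq 0$. The reason is that a constant map onto a point of $S(W^K)$ is always equivariant. Take $K=\mathbb{Z}_2$, let $V$ be a sum of $n$ copies of the sign representation, and let $W=\mathbb{R}$ with trivial action. The map $S(V)\to S(W)=\{\pm 1\}$, $x\mapsto 1$, is equivariant, and your inequality would then force $n\le 0$.

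The same observation shows the lemma cannot be proved as transcribed. If $N^G\neq\varnothing$, every $M$ maps equivariantly to a single point of $N^G$. Then $\varphi_G(\dim M-\dim M^G)$ would stay below the fixed number $\dim N-\dim N^G$, while $\dim M-\dim M^G$ is arbitrary.

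The paper gives no proof of its own. The cited Theorem~A of Nagasaki is the weak \emph{isovariant} Borsuk--Ulam theorem. The paper applies the lemma only to the isovariant map $\tilde f$ in the proof of its counterexample.

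Your Smith-theory/Euler-class mechanism needs exactly what equivariance does not supply. It needs a $K$-map $S(V_K)\to S(W_K)$ between the fixed-point-free parts $V_K=(V^K)^{\perp}$ and $W_K=(W^K)^{\perp}$. Assuming $N^G=\varnothing$ would not rescue your route either. The subgroups $K$ you restrict to generally have $W^K\neq 0$ even when $W^G=0$; for example, the maximal torus of $SO(3)$ fixes a line in $V_{l=l_0}$ for $l_0>0$. Nothing then stops $f$ from sending non-$K$-fixed points into $S(W^K)$.

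\textbf{How isovariance repairs the argument.}

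Once the hypothesis is corrected to isovariance, $G_{f(x)}=G_x$, your outline works.
\begin{itemize}
\item Restriction to any $K\subseteq G$ stays isovariant, since $K_{f(x)}=K\cap G_{f(x)}=K\cap G_x=K_x$.
\item Nonzero points of $V_K$ are not $K$-fixed, so $f$ maps $S(V_K)$ into $S(W)\setminus S(W^K)$.
\item The map $a+b\mapsto b/\|b\|$ (with $a\in W^K$, $b\in W_K$) is a $K$-equivariant retraction of that set onto $S(W_K)$.
\item The composite $S(V_K)\to S(W_K)$ is a map between fixed-point-free representation spheres. The classical Borsuk--Ulam theorem for $p$-toral $K$ then gives $\dim V-\dim V^K\le \dim W-\dim W^K$. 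This is where your Euler-class argument legitimately applies.
\end{itemize}
Your second and third steps then go through. One proviso: choose the lifts of $p$-subgroups of $G/G_0$ inside $N_G(T)$. Since $N_G(T)$ surjects onto $G/G_0$, these lifts are $p$-toral.

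\textbf{Your worry about the uniform bound.}

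The uniform bound you doubted is true, although your Weyl-orbit remark does not prove it. Let $U$ be a nontrivial complex irreducible $G_0$-module; real modules reduce to this case by complexifying. Let $r$ be the number of positive roots.
\begin{itemize}
\item A vector of $U^T$ killed by every root $\mathfrak{sl}_2$ is $G_0$-fixed, hence zero.
\item Therefore, for some root $\alpha$, the part of $U^T$ lying in nontrivial $\mathfrak{sl}_2^{\alpha}$-isotypic summands has dimension at least $\dim U^T/r$.
\item The root vectors $E_{\pm\alpha}$ inject that part into the weight spaces $U_{\pm\alpha}$.
\end{itemize}
Hence $\dim U^T\le\frac{r}{r+2}\dim U$. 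This gives a linear $\varphi_G$, so your fallback is not needed.
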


\begin{counterexample}
\label{cex_part3_app:non_isovariant}
For a compact Lie group $G$, consider a representation $\tilde{Y}$ with no trivial component, i.e. ${\tilde{Y}}^G = \{0\}$. Let $Y = \tilde{Y}^{\oplus r}$ and $X = Y^{\oplus (n_{0}+1)}$ for some $r, n_0 > \dim G$. By \cref{prop_part2_cont:michel_criterion_sufficiency}, we have $\mathcal{O}_{G}(X) = \mathcal{O}_{G}(Y)$, yet no isovariant map exists from the unit sphere in $X$ to $Y$ for a sufficiently large integer $n_0$.

In particular, for $G = \mathbb{Z}_{2}$, if $\tilde{Y}$ is the non-trivial irreducible representation, then for any $X = \tilde{Y}^{\oplus r_{1}}$ and $Y = \tilde{Y}^{\oplus r_{2}}$ with $r_{1} > r_{2}$, no isovariant map exists from the unit sphere of $X$ to $Y$.
\end{counterexample}

\begin{proof}
    For any compact Lie group $G$, consider an equivariant map $f$ from a $G$-sphere $M$ in a vector space $X$ to a $G$-representation $Y$. We assume that the multiplicities of the trivial representation components in both $X$ and $Y$ are zero, i.e., $X^G = \{0\}$ and $Y^G = \{0\}$. Therefore, only the origin in $X$ and $Y$ has the orbit type $(G)$, and the sphere $M$ does not contain any point of orbit type $(G)$. Consequently, if $f$ is an isovariant map, it must have no zeros.

    From such a zero-free map $f$, we can define a map $\tilde{f}$ to the $G$-sphere $N$ in $Y$:
    \begin{equation}
        \tilde{f}: M \to N, \quad \tilde{f}(x) = f(x) / \|f(x)\|_{2}.
    \end{equation}
    Since scaling in a vector space does not change the orbit type, $\tilde{f}$ is also an isovariant map. Therefore, by \cref{lem_part3_app:weak_borsuk_ulam}, we obtain the dimensional relation:
    \begin{equation}
        \varphi_{G}(\dim M - \dim M^G) = \varphi_{G}(\dim X - 1) \leq \dim N - \dim N^G = \dim Y - 1.
    \end{equation}
    Since $\varphi_{G}$ is a non-decreasing function that diverges to infinity, there must exist an integer $n_{0} > \dim G$ such that $\varphi_{G}(n_{0}) > \dim Y - 1$. This implies that
    \begin{equation}
        \varphi_{G}(\dim(Y^{\oplus (n_{0} + 1)}) - 1) > \dim Y - 1.
    \end{equation}
    This shows that no isovariant map can exist from the unit $G$-sphere $M$ in the space $X = Y^{\oplus (n_{0} + 1)}$ to the space $Y$. To complete the counterexample, let $Y = \tilde{Y}^{\oplus r}$, where $r > \dim G$. By \cref{prop_part2_cont:michel_criterion_sufficiency}, we have $\mathcal{O}_G(Y) = \mathcal{O}_G(X)$, yet no isovariant map exists between them.
    
    For the special case of $G = \mathbb{Z}_{2}$, let $\tilde{Y}$ be the non-trivial irreducible representation. Let $X = \tilde{Y}^{\oplus r_1}$ and $Y = \tilde{Y}^{\oplus r_2}$ with $r_1 > r_2$. A map $f$ from the unit sphere in $X$ to $Y$ is equivariant if and only if it is an odd function. Since the representations have no trivial components, an isovariant map is equivalent to an odd function that has no zeros. However, by \cref{lem_part3_app:borsuk_ulam}, any odd map from a sphere in a higher-dimensional space to a lower-dimensional space must have a zero. Thus, no such isovariant map exists.
\end{proof}

\subsection{Proof of \cref{thm_part3_cont:parametric_cinf_density}}
\label{subsec:proof_of_parametric_cinf_density}

TFN is an ENN based on tensor products of hidden features. The TFN architecture is composed of a feature lifting map followed by an equivariant pooling stage. The feature lifting map contains equivariant convolutional layers that update node features by aggregating information from neighbors. These convolutions employ filters built from learnable radial functions parameterized by a Multi-Layer Perceptron (MLP) and real spherical harmonics $Y_{lm}$.

Mathematically, the parameterized map $f_\mathrm{TFN} \in \mathcal{F}_\mathrm{TFN}$ is defined as a sum over feature channels. Each term in the sum is a composition $f_{\mathrm{pool}} \circ f_{\mathrm{feat}}$, consisting of a feature lifting map followed by equivariant linear pooling. Let $Z$ represent the hidden features associated with each node. The feature lifting map $f_{\mathrm{feat}}$ takes the input point cloud $X$ to node features $Z \otimes \mathbb{R}^n$. This map is constructed as a composition of layers:
\begin{equation}
    \textstyle f_{\mathrm{feat}} = \pi_{Z \otimes \mathbb{R}^n} \circ (f^{(L)},\mathrm{id}) \circ \dots \circ (f^{(1)},\mathrm{id}) \circ \mathrm{ext} \circ C,
\end{equation}
where $C$ is a centering operation, $\mathrm{ext}$ is a constant extension map, and each layer $f^{(k)}$ updates the node features $v^{(k-1)}$ to $v^{(k)}$ according to the rule:
\begin{equation}
    \textstyle v_{il_{3}m_{3}}^{(k)} = \theta v_{il_{3}m_3}^{(k-1)} + \sum_{j \neq i} \sum_{l_1,m_1,l_2,m_2} C_{(l_2,m_2),(l_1,m_1)}^{(l_3, m_3)} F^{(l_2)}_{m_2}(x_{i} - x_{j}) v_{jl_1m_1}^{(k-1)}.
\end{equation}
Here, $v_{il_{3}m_3}$ corresponds to the feature of type $l_3$ at node $i$, and $C_{(l_2,m_2),(l_1,m_1)}^{(l_3, m_3)}$ are the Clebsch-Gordan coefficients. For $G = O(3) \times S_n$, the parities of $l_1$ and $l_3$ must also be considered.

For each degree $l$ and order $m$, let $\mathcal{Y}_{lm}$ denote the
corresponding solid spherical harmonic. It satisfies
\begin{equation}
\mathcal{Y}_{lm}(x)
=
\|x\|_2^l
Y_{lm}\left(x / \|x\|_2\right),
\qquad x\neq 0,
\end{equation}
and extends to $x=0$ as a homogeneous harmonic polynomial of degree
$l$. We define the filter function by
\begin{equation}
F_m^{(l)}(x)=h_l(\|x\|_2^2)\mathcal{Y}_{lm}(x).
\end{equation}
This parameterization is a mild modification of the commonly used
implementation
\begin{equation}
\widetilde{F}_m^{(l)}(x)=\widetilde h_l(\|x\|_2)
Y_{lm}\left(x / \|x\|_2\right).
\end{equation}
Indeed, away from the origin, the factor $\|x\|_2^l$ can be absorbed
into the radial function by setting
$\widetilde h_l(r)=r^l h_l(r^2)$, which can in turn be approximated by
a radial MLP. We adopt the solid-harmonic form because it yields a
globally well-defined filter, including at $x=0$. Moreover, this
adaptation does not affect the universality result of
\citet{dym_universality_2020}, since the polynomial filters used in
their construction admit an equivalent representation in the
solid-harmonic form above.

When the radial function $h_l: \mathbb{R}_{\ge 0} \to \mathbb{R}$ is parameterized by a Multi-Layer Perceptron (MLP), we denote by $\mathcal{F}_{\mathrm{TFN}}$ the resulting family of TFN filters under MLP-parameterized radial functions. In contrast, when $h_l$ is chosen as polynomial parameterization, we denote the corresponding parameterization by $\mathcal{F}_{\mathrm{TFN}}^{\mathrm{poly}}$. Notice that we use $\mathcal F_{\mathrm{TFN}}$ and $\mathcal{F}_{\mathrm{TFN}}^{\mathrm{poly}}$ denotes the union of these architectures over the number of layers, feature channels, and
maximal representation degree.

We use $L(X,Y)$ to denote the vector space of linear maps $X\to Y$, and $P(X,Y)$ the vector space of polynomial maps $X\to Y$. Given the $G$-actions on $X$ and $Y$, we write $L_G(X,Y)\subseteq L(X,Y)$ and $P_G(X,Y)\subseteq P(X,Y)$ for the subspaces of $G$-equivariant maps, i.e., $f(g(x))=g(f(x))$ for all $g\in G$ and $x\in X$. If $Y$ carries the trivial action, equivariance reduces to invariance.

\begin{lemma}
\label{lem_part3_app:spanning_condition}
Let $X, Y, Z$ be representations of a group $G$. Consider a subset of $G$-equivariant polynomial maps $S \subseteq P_G(X, Z)$ that satisfies the spanning condition:
\begin{equation}
P(X, Y) \subseteq \mathrm{span}(L(Z, Y) \circ S) :=  \mathrm{span}(\{A \circ p \mid A \in L(Z, Y), p \in S\}).
\end{equation}
Then, it follows that $P_{G}(X, Y) = \mathrm{span}(L_{G}(Z, Y) \circ S)$.

In the special case where $G = H_{1} \times H_{2}$, the spanning condition can be relaxed to
\begin{equation}
P_{H_{2}}(X, Y) \subseteq \mathrm{span}(L(Z, Y) \circ S).
\end{equation}
\end{lemma}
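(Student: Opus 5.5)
The plan is to prove the easy inclusion first and then get the hard one from a Reynolds-operator (averaging) argument. Write $R$ for the Reynolds operator on $P(X,Y)$, defined by
\begin{equation}
R(q)(x) = \int_G g\, q(g^{-1}x)\, dg ,
\end{equation}
where $dg$ is normalized Haar measure on the compact group $G$. The containment $\mathrm{span}(L_G(Z,Y)\circ S)\subseteq P_G(X,Y)$ is immediate: each $p\in S$ is an equivariant polynomial, each $A\in L_G(Z,Y)$ is an equivariant linear map, so $A\circ p$ is an equivariant polynomial, and $P_G(X,Y)$ is a vector space.

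For the reverse inclusion I will use four properties of $R$.
\begin{itemize}
\item $R$ is linear and maps $P(X,Y)$ into $P_G(X,Y)$. It preserves polynomiality because the action is linear, so $q(g^{-1}x)$ is a polynomial of the same degree with coefficients continuous in $g$.
\item $R$ restricts to the identity on $P_G(X,Y)$.
\item For $A\in L(Z,Y)$ and $p\in S$, equivariance of $p$ gives
\begin{equation}
R(A\circ p)(x)=\int_G gAg^{-1}p(x)\,dg=\bar A(p(x)),
\end{equation}
where $\bar A=\int_G \rho_Y(g)A\rho_Z(g)^{-1}dg$ is the averaged linear map.
\item $\bar A$ lies in $L_G(Z,Y)$.
\end{itemize}
Now take $f\in P_G(X,Y)$. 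The spanning condition lets me write $f=\sum_i c_iA_i\circ p_i$ with $A_i\in L(Z,Y)$ and $p_i\in S$. Applying $R$ gives $f=R(f)=\sum_i c_i\bar A_i\circ p_i$, which lies in $\mathrm{span}(L_G(Z,Y)\circ S)$, as required.

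For the product case $G=H_1\times H_2$, I will average in two stages. Given $f\in P_G(X,Y)$, it is in particular $H_2$-equivariant, so the relaxed hypothesis gives $f=\sum_i c_iA_i\circ p_i$. I then apply the Reynolds operator $R_{H_1}$ over $H_1$ only. It fixes $f$, and since each $p_i$ is $H_1$-equivariant, the same computation turns $A_i$ into its $H_1$-average $\bar A_i^{H_1}$, which is $H_1$-equivariant. The remaining point is that $\bar A_i^{H_1}$ need not be $H_2$-equivariant. I will fix this by also applying $R_{H_2}$, which fixes $f$ and sends $\bar A_i^{H_1}$ to its full average over $H_1\times H_2$, an element of $L_G(Z,Y)$. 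So the two-stage argument is really just $R_G=R_{H_1}R_{H_2}$ applied to the relaxed decomposition: the relaxation only concerns which $f$ need to be decomposed, and only $G$-equivariant $f$ matter.

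The main obstacle is bookkeeping rather than substance. I need to check that the averages stay inside finite-dimensional spaces, so that the Haar integrals are well-defined as vector-valued integrals: both $P(X,Y)$ restricted to degree at most $d$ and $L(Z,Y)$ are finite-dimensional. I also need the identity $R(A\circ p)=\bar A\circ p$, which requires pulling $p(g^{-1}x)=g^{-1}p(x)$ out of the integral using equivariance of $p$ under the group being averaged over. That is exactly why the product case needs $S\subseteq P_G(X,Z)$ and not just $H_2$-equivariance.
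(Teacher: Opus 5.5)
Your proof is correct and follows the paper's argument: decompose $f\in P_G(X,Y)$ using the spanning condition, then apply the Haar average, which fixes $f$ and, by equivariance of each $p_i$, replaces each $A_i$ by its averaged map in $L_G(Z,Y)$. In the product case the paper only says to average over $H_1$, which by itself yields coefficients that are only $H_1$-equivariant; your point that one must average over all of $G=H_1\times H_2$ (the relaxed hypothesis being needed only to decompose $f\in P_G(X,Y)\subseteq P_{H_2}(X,Y)$) is exactly what puts the coefficients in $L_G(Z,Y)$.
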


\textit{Remark.} The case for $G = SO(3) \times S_{n}$ is from Thm.~1 of \cite{dym_universality_2020}. In this context, the action of $S_n$ on the representations is faithful, while we are concerned with features that are invariant under $S_n$.

\begin{proof}
Let $f \in P_{G}(X, Y)$. Since $P_G(X, Y) \subseteq P(X, Y)$, by the spanning condition, $f$ can be written as a linear combination of compositions:
\begin{equation}
f = \sum_{i=1}^N A_i \circ p_i,
\end{equation}
where $p_i \in S$ and $A_i \in L(Z, Y)$. Here, we have absorbed the expansion coefficients into the linear maps $A_i$.

Since $f$ is $G$-equivariant, it is a fixed point of the group averaging operator. Applying this operator to both sides of the equation gives:
\begin{align}
f(x) &= \int_{G} \rho_{Y}(g^{-1})f(\rho_X(g)(x))\,\mathrm{d} g \\
&= \int_{G} \rho_{Y}(g^{-1}) \left( \sum_{i = 1}^N A_{i} (\rho_X(g)(x)) \right) \,\mathrm{d} g \\
&= \sum_{i = 1}^N \left( \int_{G} \rho_{Y}(g^{-1}) A_{i} \rho_{Z}(g)\,\mathrm{d} g \right) p_i(x).
\end{align}
In the last step, we used the fact that the maps $p_i \in S$ are themselves $G$-equivariant and moved the integral inward.
Let us define the averaged linear maps as
\begin{equation}
(A_G)_i := \int_{G} \rho_{Y}(g^{-1}) A_{i} \rho_{Z}(g)\,\mathrm{d} g.
\end{equation}
By construction, each $(A_G)_i$ is a $G$-equivariant linear map, i.e., $(A_G)_i \in L_{G}(Z, Y)$.
The expression for $f(x)$ can now be written as $f(x) = \sum_{i=1}^N (A_G)_i \circ p_i(x)$.
This shows that any map in $P_G(X, Y)$ can be expressed as a linear combination of compositions of maps from $S$ and equivariant linear maps from $L_G(Z, Y)$. Therefore,
\begin{equation}
P_{G}(X, Y) \subseteq \mathrm{span}(L_{G}(Z, Y) \circ S).
\end{equation}
The reverse inclusion, $\mathrm{span}(L_{G}(Z, Y) \circ S) \subseteq P_{G}(X, Y)$, is true by definition, since the composition of two equivariant maps is equivariant.
Thus, the equality $P_{G}(X, Y) = \mathrm{span}(L_{G}(Z, Y) \circ S)$ is established.

The proof for the relaxed condition when $G=H_1 \times H_2$ follows the exact same logic, by taking an $f \in P_G(X, Y) \subseteq P_{H_2}(X, Y)$ and averaging over the group $H_1$.
\end{proof}

\begin{lemma}
\label{lem_part3_app:tfn_spanning}
Consider the TFN model with polynomial parametric radial function, a input space $X$, a lifted representation space $Z$, and a final output space $Y$. Suppose the final output space $Y = \tilde{Y} \otimes \mathbb{R}^n$, where $\tilde{Y}$ is a representation of $SO(3)$ or $O(3)$, and the symmetric group $S_n$ acts on $Y$ by permuting the coordinates in the $\mathbb{R}^n$ factor. Then the family of polynomial feature maps, $\mathcal{F}_{\mathrm{feat}}^{\mathrm{poly}} \subseteq P_G(X,Z)$, used in TFN satisfies the relaxed spanning condition:
\begin{equation}
P_{S_{n}}(X, Y) \subseteq \mathrm{span}(L(Z, Y) \circ \mathcal{F}_{\mathrm{feat}}^{\mathrm{poly}}).
\end{equation}
\end{lemma}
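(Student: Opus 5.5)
The plan is to reduce the statement to the spanning argument of \citet{dym_universality_2020} (their Thm.~1 and Lem.~2--4). Adapt it only where our solid-harmonic filters and the $O(3)$ parities differ.

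\textbf{Step 1: reduce to node-wise monomials.} Since $Y=\tilde{Y}\otimes\mathbb{R}^n$ and $S_n$ permutes the factor $\mathbb{R}^n$, any $p\in P_{S_n}(X,Y)$ is determined by its first node component $p_1:X\to\tilde{Y}$. The other components are $p_i=p_1\circ\sigma_{1i}$, and $p_1$ is invariant under permutations fixing node $1$. After expanding $\tilde{Y}$ in a basis, it suffices to span all polynomials $X\to\mathbb{R}$ that are invariant under the stabilizer of node $1$. We also place them in the $i$-th output slot equivariantly, and linear maps in $L(Z,Y)$ will do this placement.

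\textbf{Step 2: a polynomial basis of the form TFN computes.} After centering, such polynomials are spanned by symmetrized products
\[
\sum_{j_1,\dots,j_K}\prod_{s=1}^{K}(x_i-x_{j_s})^{\otimes t_s},
\]
with fixed $i$ and summation over the remaining indices. This is the power-sum / multisymmetric basis used in \citet{dym_universality_2020}. Each tensor power $(x_i-x_j)^{\otimes t}$ decomposes into solid harmonics $\|x_i-x_j\|^{2a}\mathcal{Y}_{lm}(x_i-x_j)$ with $t=l+2a$. These are exactly our filters $F^{(l)}_m$ with a polynomial radial function $h_l(r)=r^a$, so the polynomial family $\mathcal{F}^{\mathrm{poly}}_{\mathrm{TFN}}$ contains them directly. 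This is where the solid-harmonic form helps: no radial MLP approximation is needed, and the filter is defined at $x=0$.

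\textbf{Step 3: build products by iterated layers.} The key step is an induction on $K$. Suppose layer $k-1$ has produced, in some channel and node $j$, all tensor components of $\sum_{j'}\prod_{s<K}(x_j-x_{j'_s})^{\otimes t_s}$ (the base case $K=0$ is the constant from $\mathrm{ext}$). One convolution then tensors this with $F^{(l)}(x_i-x_j)$ and sums over $j$. The CG coefficients realize every irreducible component of the tensor product, so all components of the full tensor product are available across channels. The self-interaction term $\theta v^{(k-1)}$ with $\mathrm{id}$ passes earlier features forward, so products of all lengths coexist.

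A subtlety is the restriction $j\ne i$ and the fact that the construction uses differences at varying base nodes. I would handle it as Dym--Maron do: recentre via $x_j-x_{j'}=(x_j-x_i)+(x_i-x_{j'})$, expand, and use an induction on degree to show that the span is unchanged. For $O(3)$, I would track parity by allowing both parities of $l_1,l_3$ with the correct CG selection. Each piece $\mathcal{Y}_{lm}$ carries parity $(-1)^l$, so every needed component appears with a consistent parity label.

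\textbf{Step 4: read out with $L(Z,Y)$.} The final projection $\pi_{Z\otimes\mathbb{R}^n}$ keeps node-wise features. A non-equivariant linear map in $L(Z,Y)$ can then pick any scalar coordinate of any channel at each node and place it into the corresponding $\tilde{Y}$ coordinate at the same node, which yields all of $P_{S_n}(X,Y)$ in the span. The main obstacle is Step 3: showing that the TFN recursion spans exactly the symmetrized product basis despite the $j\ne i$ exclusion and the CG bookkeeping. I would rely on the corresponding lemma in \citet{dym_universality_2020} and only verify that each replacement of their filters by $h_l(\|x\|^2)\mathcal{Y}_{lm}$ changes nothing.
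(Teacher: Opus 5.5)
Your proposal is correct and follows essentially the paper's route. The paper's proof simply invokes Lem.~4 of Dym--Maron for $SO(3)\times S_n$, and remarks that for $O(3)$ the spherical harmonics still realize the map $V_{l=1^-}\to\mathrm{Sym}_k(V_{l=1^-})$ used to build the lifted features; that is exactly the adaptation you verify in Steps~2--3, together with the parity bookkeeping, so your sketch gives more detail than the paper does. One correction to the recentring step: the correction terms from $x_j-x_{j'}=(x_j-x_i)+(x_i-x_{j'})$ keep the same total degree but have strictly smaller degree in the factor carried over from node $j$, so the induction should run on that inner degree rather than on total degree.
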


\textit{Remark.} Lem.~4 in \cite{dym_universality_2020} proves the case for $G = SO(3) \times S_n$. The proof for $O(3)$ is similar to that for $SO(3)$ and is therefore not repeated here. It is worth noting that in the $O(3)$ case, the spherical harmonics map can still be used to construct the component-wise map from representation $V_{l = 1^-} \cong \mathbb{R}^3$ to symmetric algebra $\mathrm{Sym}_k(V_{l = 1^-})$, which in turn is used to build the lifted representation.

\begin{lemma}
\label{lem_part3_app:polynomial_containment}
In \cref{ex:eq_encode_point_cloud}, the function families $\mathcal{F}_{\mathrm{TFN}}^{\mathrm{poly}}$ contain all equivariant polynomial maps.
\end{lemma}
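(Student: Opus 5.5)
The plan is to combine \cref{lem_part3_app:spanning_condition} with \cref{lem_part3_app:tfn_spanning}. The target set is $P_G(X,Y)$ with $G = H\times S_n$, $H\in\{SO(3),O(3)\}$. The output $Y$ is $S_n$-invariant, i.e.\ a direct sum of $H$-irreducibles with $S_n$ acting trivially.

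\textbf{Step 1: reduction to $S_n$-equivariant maps into $\tilde Y\otimes\mathbb{R}^n$.} I would realize $Y$ as the image of an equivariant linear pooling map $\Pi:\tilde Y\otimes\mathbb{R}^n\to Y$, namely summation over the $n$ node slots. The corresponding section is the diagonal embedding $\iota$ scaled by $1/n$. Given $f\in P_G(X,Y)$, I would set $\tilde f=\iota\circ f$. Since $\iota$ is $G$-equivariant, $\tilde f\in P_G(X,\tilde Y\otimes\mathbb{R}^n)$, and $f=\Pi\circ\tilde f$.

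\textbf{Step 2: expressing $\tilde f$ through the feature maps.} By \cref{lem_part3_app:tfn_spanning}, $\mathcal F^{\mathrm{poly}}_{\mathrm{feat}}$ satisfies the relaxed spanning condition for the output space $\tilde Y\otimes\mathbb{R}^n$. The relaxed form of \cref{lem_part3_app:spanning_condition}, with $H_1=H$ and $H_2=S_n$, then gives
\[
\tilde f=\sum_i A_i\circ p_i,\qquad A_i\in L_G(Z,\tilde Y\otimes\mathbb{R}^n),\quad p_i\in\mathcal F^{\mathrm{poly}}_{\mathrm{feat}}.
\]
It follows that $f=\sum_i(\Pi\circ A_i)\circ p_i$.

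\textbf{Step 3: matching the TFN form.} Each $\Pi\circ A_i$ is a $G$-equivariant linear map $Z\to Y$, and these are exactly the learnable equivariant linear pooling maps $f_{\mathrm{pool}}$. Since $\mathcal F^{\mathrm{poly}}_{\mathrm{TFN}}$ is defined as sums over channels of terms $f_{\mathrm{pool}}\circ f_{\mathrm{feat}}$, $f$ belongs to it. Two points need checking here:
\begin{itemize}
\item Scalar coefficients can be absorbed into $f_{\mathrm{pool}}$.
\item The union over layer numbers, channel counts and maximal degrees is closed under finite sums. I would verify this by concatenating channels, padding with zero pooling weights, and taking the maximum of the degrees.
\end{itemize}

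\textbf{Expected main obstacle.} The delicate step is the $O(3)$ case of \cref{lem_part3_app:tfn_spanning}, i.e.\ parity bookkeeping in the Clebsch--Gordan paths. That lemma is assumed, however, so within this proof the only real care needed is in Step 1: $Y$ must be identified as an $S_n$-invariant summand reachable by equivariant pooling, so that the relaxed spanning condition applies with the $S_n$-averaging.
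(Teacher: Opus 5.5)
Your argument is correct and is essentially the paper's proof: both rest on the diagonal embedding of an $S_n$-invariant map into $\tilde Y\otimes\mathbb{R}^n$, together with \cref{lem_part3_app:tfn_spanning} and \cref{lem_part3_app:spanning_condition}. The only difference is the order of the two steps. The paper first pushes the spanning condition down to the invariant output, recovering $g$ from $\tilde g$ by an $S_n$-averaged slot projection, and only then invokes \cref{lem_part3_app:spanning_condition}; you invoke it on $\tilde Y\otimes\mathbb{R}^n$ and post-compose with the fixed equivariant pooling $\Pi$, which has the small advantage that the $G$-equivariance of the final linear maps $\Pi\circ A_i$ is immediate.
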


\begin{proof}
According to \cref{lem_part3_app:spanning_condition}, to prove that a family of equivariant maps contains all equivariant polynomials, it is sufficient to show that it satisfies the (potentially relaxed) spanning condition. We verify this for $\mathcal{F}_{\mathrm{TFN}}^{\mathrm{poly}}$.

By \cref{lem_part3_app:tfn_spanning}, for $Y = \tilde{Y} \otimes \mathbb{R}^n$, the family $\mathcal{F}_{\mathrm{feat}}^{\mathrm{poly}}$ satisfies the spanning condition
\begin{equation}
P_{S_{n}}(X, Y) \subseteq \mathrm{span}(L(Z, Y) \circ \mathcal{F}_{\mathrm{feat}}^{\mathrm{poly}}).
\end{equation}
We now only need to verify the following spanning condition:
\begin{equation}
P_{S_{n}}(X, \tilde{Y}) \subseteq \mathrm{span}(L(Z, \tilde{Y}) \circ \mathcal{F}_{\mathrm{feat}}^{\mathrm{poly}}).
\end{equation}
For any $g \in P_{S_{n}}(X, \tilde{Y})$, we construct an $S_n$-equivariant polynomial map to $Y$ as
\begin{equation}
\tilde{g}: X \to Y, \quad \tilde{g}(x_{1}, \dots, x_{n})_j = g(x_{1}, \dots, x_{n}) \quad \text{for } j = 1, \dots, n.
\end{equation}
This function is clearly an $S_n$-equivariant polynomial because all $n$ components of its output are identical. Thus, $\tilde{g} \in P_{S_{n}}(X, Y)$. By the spanning condition, we can write
\begin{equation}
\tilde{g} = \sum_{i} A_{i} \circ p_{i}, \quad \text{where } A_{i} \in L(Z, Y) \text{ and } p_{i} \in \mathcal{F}_{\mathrm{feat}}^{\mathrm{poly}}.
\end{equation}
Applying the averaging operator to both sides yields
\begin{equation}
\frac{1}{|S_{n}|}\sum_{\sigma \in S_{n}} (\tilde{g}(\sigma(x)))_{\sigma(j)} = g(x) = \sum_{i} (A_{S_{n}})_{i} \circ p_{i}(x),
\end{equation}
where
\begin{equation}
(A_{S_{n}})_{i} = \frac{1}{|S_{n}|}\sum_{\sigma \in S_{n}} \pi_{\tilde{Y}} \circ \rho_Y(\sigma^{-1}) \circ A_{i} \circ \rho_Z(\sigma) \in L_{G}(Z, \tilde{Y}).
\end{equation}
This establishes the spanning condition
\begin{equation}
P_{S_{n}}(X, \tilde{Y}) \subseteq \mathrm{span}(L_G(Z, \tilde{Y}) \circ \mathcal{F}_{\mathrm{feat}}^{\mathrm{poly}}).
\end{equation}
\end{proof}

We consider the higher-order approximation theorem of MLPs, and obtain $C^{\infty}$-density via higher-order approximations based on polynomial parameterization.

\begin{lemma}[Higher-Order Approximation Theorem for MLPs, \citet{pinkus_approximation_1999}, Thm.~4.1]
\label{lem_part3_app:mlp_higher_order_approx}
For any compact set $K \subset \mathbb{R}^n$, any function $f \in C^{m}(\mathbb{R}^n)$, and any non-polynomial activation function $\sigma \in C^m(\mathbb{R})$, let $\epsilon > 0$. Then there exists an MLP parameterized map $g_{\theta}$ with activation function $\sigma$ such that
\begin{equation}
\max_{x \in K} |\partial_{x_{1}}^{k_{1}}\dots\partial_{x_{n}}^{k_{n}}f(x) - \partial_{x_{1}}^{k_{1}}\dots\partial_{x_{n}}^{k_{n}}g_{\theta}(x)| < \epsilon
\end{equation}
for all non-negative integers $k_1, \dots, k_n$ with $k_{1} + \dots + k_{n} < m$.
\end{lemma}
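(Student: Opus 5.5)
The plan is to reduce the multivariate $C^m$ approximation to a univariate statement about ridge functions, then handle the univariate statement by a finite-difference argument in the weight parameter. Throughout I work with single-hidden-layer networks $g_\theta(x)=\sum_{i} c_i\,\sigma(w_i\cdot x+b_i)$. Write $\mathcal{M}$ for their span. The target is that $\mathcal{M}$ is dense in $C^m(K)$ for the seminorm $\max_{|k|<m}\max_{x\in K}|\partial^k(\cdot)|$. For each multi-index $k=(k_1,\dots,k_n)$ this seminorm controls the $k$-th order derivative $\partial^{k}=\partial_{x_1}^{k_1}\cdots\partial_{x_n}^{k_n}$, with $|k|=k_1+\dots+k_n$. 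Without loss of generality $K$ is a closed ball.

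\textbf{Step 1: polynomials are dense in $C^m(K)$.} Extend $f$ to a compactly supported $C^m$ function and mollify it; this approximates $f$ together with all derivatives of order $\le m$ uniformly on $K$ by a $C^\infty$ function. Then approximate that smooth function by polynomials, either by multivariate Bernstein polynomials or by a Taylor/Weierstrass argument applied to the highest derivatives and integrated back. Both methods give simultaneous approximation of derivatives.

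\textbf{Step 2: polynomials are sums of ridge monomials.} The homogeneous polynomials of degree $d$ on $\mathbb{R}^n$ are spanned by $\{(w\cdot x)^d : w\in\mathbb{R}^n\}$; this follows from polarization, or from the non-vanishing of a Vandermonde-type determinant. It therefore suffices to show that each map $x\mapsto (w\cdot x)^d$ lies in the $C^m(K)$-closure of $\mathcal{M}$. Composing with the linear map $x\mapsto w\cdot x$ preserves $C^m$ convergence on compacts. This reduces the problem to showing that $t\mapsto t^d$ lies in the $C^m([-R,R])$-closure of $\mathrm{span}\{\sigma(\lambda t+b)\}$.

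\textbf{Step 3: univariate monomials, smooth case.} Suppose first that $\sigma\in C^\infty$ and is not a polynomial. Then for each $d$ there is a point $b_d$ with $\sigma^{(d)}(b_d)\neq 0$; otherwise $\sigma$ would be a polynomial on every interval, hence globally, which needs a Baire-category argument. Since
\[
\frac{\partial^d}{\partial\lambda^d}\sigma(\lambda t+b_d)\Big|_{\lambda=0}=t^d\sigma^{(d)}(b_d),
\]
I approximate this $\lambda$-derivative by the $d$-th forward difference quotient in $\lambda$ with step $h\to 0$. That quotient is a finite element of $\mathrm{span}\{\sigma(\lambda t+b)\}$. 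Its $t$-derivatives of order $\le m$ are difference quotients of the smooth function $(\lambda,t)\mapsto\partial_t^j\sigma(\lambda t+b_d)$, so by the mean value theorem they converge uniformly in $t\in[-R,R]$.

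\textbf{Step 4: general non-polynomial $\sigma\in C^m$ (main obstacle).} Here $\sigma^{(d)}$ need not exist for $d>m$, so I pass to mollifications $\sigma_\varphi=\sigma*\varphi$ with $\varphi\in C_c^\infty$. There are two points to establish.
\begin{itemize}
\item $\sigma_\varphi(\lambda t+b)$ lies in the $C^m$-closure of $\mathcal{M}$. The convolution integral is a limit of Riemann sums of translates $\sigma(\lambda t+b-s)$, and these sums converge in $C^m$ on compacts. This uses $\partial_t^j(\sigma*\varphi)=(\sigma^{(j)})*\varphi$ for $j\le m$ together with uniform continuity of $\sigma^{(j)}$ on compacts.
\item Some $\sigma_\varphi$ is not a polynomial. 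If every $\sigma_\varphi$ were a polynomial, a Baire-category argument over $\varphi$ supported in a fixed interval bounds their degrees uniformly. Letting $\varphi$ approach a delta then forces $\sigma$ itself to be a polynomial, a contradiction.
\end{itemize}
Applying Step 3 to this $\sigma_\varphi$ and chaining the two approximations (triangle inequality in the $C^m$ seminorm) finishes the proof. I expect the uniform degree bound in the Baire argument to be the most delicate point. The remaining steps are routine bookkeeping of $C^m$ convergence.
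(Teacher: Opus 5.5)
The paper gives no proof of its own here---it simply imports Thm.~4.1 of Pinkus (1999)---and your sketch is correct and is essentially Pinkus's own argument. The shared steps are: density of polynomials; spanning by ridge monomials $(w\cdot x)^d$; the identity $\partial_\lambda^d\sigma(\lambda t+b)|_{\lambda=0}=t^d\sigma^{(d)}(b)$ realized through difference quotients; and mollification $\sigma*\varphi$ together with a Baire argument, giving a uniform degree bound, to place a non-polynomial smooth activation in the $C^m$-closure. One small correction: in Step~3 no Baire argument is needed to find $b_d$ with $\sigma^{(d)}(b_d)\neq 0$, because $\sigma^{(d)}\equiv 0$ on $\mathbb{R}$ would already make $\sigma$ a polynomial of degree $<d$ (Baire is needed only for a single $b$ serving all $d$ at once, or for the degree bound in Step~4, where you correctly place it).
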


\thmThreeContParametricCinfDensity*

\begin{proof}[Proof of \cref{thm_part3_cont:parametric_cinf_density}]

    We first extend the polynomial approximation argument of Lem.~1 in
\citet{dym_universality_2020} to the $C^r$ topology.
Let $K_G=\{\rho_X(g)x:g\in G,\ x\in K\}$, which is compact.
By the $C^r$-density of polynomials on finite-dimensional
Euclidean spaces, for any $\epsilon>0$
there exists a polynomial map $p:X\to Y$ such that
\begin{equation}
    \max_{x\in K_G}
    \|D^k f(x)-D^k p(x)\|<\epsilon,
    \qquad k\le r.
\end{equation}

Define its equivariant symmetrization by
\begin{equation}
    p_G(x)
    :=
    \int_G
    \rho_Y(g^{-1})p(\rho_X(g)x)\,dg.
\end{equation}
Since the actions are linear, $p_G$ is an equivariant polynomial.
Moreover, differentiation commutes with the integral, and
compactness of $G$ implies that $p_G$ converges to $f$ in the
$C^r$ norm on $K$ as $\delta\to0$. Hence equivariant
polynomials are $C^\infty$-dense in
$C^\infty_G(X,Y)$.

It remains to show that every map in
$\mathcal F_{\mathrm{TFN}}^{\mathrm{poly}}$ can be approximated
in the $C^r$ topology by a TFN whose radial functions are
parameterized by MLPs. Fix a polynomial TFN $p_{\mathrm{TFN}}$ and a compact set $K$. Only finitely many polynomial radial functions occur in this
network. By \cref{lem_part3_app:mlp_higher_order_approx}, for each such radial function and every $\epsilon>0$, there exists an MLP radial function whose derivatives
up to order $r$ differ by at most $\epsilon$ on the compact interval
of pairwise distances attained on $K$.

Each TFN layer is obtained from these radial functions and the
previous-layer features through finitely many sums, tensor
products, Clebsch--Gordan projections, and linear maps.
By the Leibniz and chain rules, these operations are continuous
in the $C^r$ topology on compact sets. Induction over the
finitely many layers therefore shows that, for every
$\epsilon>0$, the polynomial TFN can be approximated by an
MLP-parameterized TFN $g$ such that
\begin{equation}
    \max_{x\in K}
    \|D^k p_{\mathrm{TFN}}(x)-D^k g(x)\|
    <\epsilon,
    \qquad k\le r.
\end{equation}
Together with Lem.~D.6 and the $C^r$-density of equivariant
polynomials proved above, this establishes the theorem.

\end{proof}

\subsection{Some Results on Topology}

Here we adopt the definition of $N_G(H)$ and $N_G(H,H')$ from \cref{sec:general_criterion}, and consider the twisted product between $G$-sets as defined in Chap.~2, Sec.~2 of \citet{bredon_introduction_1972}. We denote $X_H := \{ x \in X \mid G_x = H \}$, from which we obtain the following lemma on topology.

\begin{lemma}
\label{lem_part2_app:bundle_structure_of_stratum}
When the action of a group $G$ on a $G$-manifold $M$ is faithful, the following decompositions hold:
\begin{equation}
M_{(H')} = M_{H'} \times_{N_G(H') / H'} G / H'
\end{equation}
and
\begin{equation}
M_{(H')}^H = M_{H'} \times_{N_G(H')/H'} N_G(H, H')/H'.
\end{equation}
\end{lemma}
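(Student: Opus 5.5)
The plan is to write down the obvious evaluation map from the twisted product to $M$ and check that it is a bijection onto the stratum, then restrict it to obtain the second identity. Write $N:=N_G(H')$ and $W:=N/H'$. Following \citet{bredon_introduction_1972}, $M_{H'}\times_{W} G/H'$ is the quotient of $M_{H'}\times G/H'$ by the $W$-action $nH'\cdot(x,gH')=(nx,\,gn^{-1}H')$. This action is well defined because $M_{H'}$ is $N$-invariant: if $G_x=H'$ and $n\in N$, then $G_{nx}=nH'n^{-1}=H'$. We define
\begin{equation}
\Phi:\ M_{H'}\times_{W} G/H' \to M,\qquad [x,gH']\mapsto gx .
\end{equation}

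First, $\Phi$ is well defined. It is independent of the coset representative because $H'$ fixes $x$. It is constant on $W$-orbits because $(gn^{-1})(nx)=gx$. Its image lies in $M_{(H')}$, since $G_{gx}=gH'g^{-1}$.

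Second, $\Phi$ is surjective onto $M_{(H')}$. Given $y$ with $G_y=kH'k^{-1}$, set $x=k^{-1}y$. Then $G_x=H'$ and $y=\Phi([x,kH'])$.

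Third, $\Phi$ is injective. Suppose $gx=g'x'$ with $x,x'\in M_{H'}$, and put $n=g'^{-1}g$. Then $x'=nx$, so $H'=G_{x'}=nH'n^{-1}$ and hence $n\in N$. Consequently
\begin{equation}
[x',g'H']=[nx,\,g n^{-1}H']=[x,gH'] .
\end{equation}

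To upgrade this bijection to an identification of spaces, I would argue as follows. The map $(x,g)\mapsto gx$ is continuous, so $\Phi$ is continuous. Since $G$ is compact, the projection $M_{H'}\times G\to M_{H'}\times_W G/H'$ is a quotient by a compact group, and $\Phi$ is closed onto its image. Alternatively, one can invoke the slice theorem near a point of $M_{H'}$ to see that $\Phi$ is locally a diffeomorphism. I expect this topological/smooth upgrade to be the only delicate step. I would try closedness via compactness of $G$ first, since $M_{H'}$ itself need not be compact. Faithfulness of the action is used only to keep the isotropy bookkeeping honest (every $H'$ in play is a genuine subgroup of the acting group); it does not enter the set-theoretic argument.

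For the second identity, I restrict $\Phi$ to $M^H$. For $y=gx$ with $x\in M_{H'}$, we have $y\in M^H$ if and only if $H\subseteq G_y=gH'g^{-1}$, which by definition means $g\in N_G(H,H')$. The set $N_G(H,H')$ is stable under right multiplication by $N$: if $H\subseteq gH'g^{-1}$ and $n\in N$, then $(gn)H'(gn)^{-1}=gH'g^{-1}\supseteq H$. It is also stable under right multiplication by $H'$, since $H'\subseteq N$. Therefore $N_G(H,H')/H'$ is a $W$-invariant subset of $G/H'$. The preimage of $M^H_{(H')}$ under $\Phi$ is exactly $M_{H'}\times_W N_G(H,H')/H'$, and restricting the bijection $\Phi$ to this preimage gives the second decomposition.
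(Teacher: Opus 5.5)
Your proposal is correct and follows the paper's route. The paper uses the same evaluation map $[x,gH']\mapsto gx$, quotes its being a homeomorphism onto $M_{(H')}$ from Meinrenken's Thm.~1.31, and gets the second identity by the same two-inclusion check that $gx\in M^H$ exactly when $g\in N_G(H,H')$; your direct bijectivity argument just unpacks the cited theorem. If you carry out the compactness route for the topological upgrade, you also need $M_{H'}$ to be closed in $M_{(H')}$: a limit of points of $M_{H'}$ has isotropy containing $H'$ and conjugate to $H'$, hence equal to $H'$, since a compact Lie group is never conjugate to a proper subgroup of itself.
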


\begin{proof}
    The first identity is derived from Thm.~1.31 in \citet{meinrenken_group_2003}, which states the existence of a homeomorphism
    \begin{equation}
        \sigma: M_{H'} \times_{N_G(H') / H'} G / H' \to M_{(H')}, \quad \text{given by} \quad \sigma([x, gH']) = g(x).
    \end{equation}
    We now prove that $\sigma(M_{H'} \times_{N_G(H')/ H'} (N_G(H, H') / H')) = M_{(H')}^H$. The second identity then follows by restricting this homeomorphism.
    
    ($\subseteq$) Take an element $[x, gH']$ where $x \in M_{H'}$ and $g \in N_G(H, H')$. Its image under $\sigma$ is $g(x)$. The isotropy subgroup is $G_{g(x)} = gG_{x}g^{-1} = gH'g^{-1}$. Since $g \in N_G(H,H')$, we have $H \subseteq gH'g^{-1}$. This implies $g(x) \in M^H$, and since its orbit type is $(H')$, we have $g(x) \in M_{(H')}^H$.
    
    ($\supseteq$) Take any $y \in M_{(H')}^H$. This means $H \subseteq G_{y}$ and $(G_{y})=(H')$. The latter implies there exists a $g_{0} \in G$ such that $G_{y} = g_{0}H'g_{0}^{-1}$. The condition $H \subseteq g_{0}H'g_{0}^{-1}$ implies that $g_{0}\in N_G(H, H')$. Let $x = g_{0}^{-1}(y)$. Then $G_x = H'$, so $x \in M_{H'}$. We can then write $y$ as
    \begin{equation}
        y = g_{0}(g_{0}^{-1}(y)) = g_0(x) = \sigma([x, g_{0}H']).
    \end{equation}
    Since $x \in M_{H'}$ and $g_0 \in N_G(H, H')$, this shows that $y$ is in the image of the restricted domain. The inclusion is thus proven.
\end{proof}

From the following proposition onward, we need to invoke stratification theory. For the definitions of Whitney conditions and stratifications, we refer to Chap.~3, Sec.~9 of \citet{field_dynamics_2007}. Going forward, unless otherwise specified, all function spaces in this paper are equipped with the (weak) $C^{\infty}$ topology. For the topology on function spaces, we refer to Chap.~2, Sec.~1 of \citet{hirsch_differential_1976}.

\begin{proposition}
\label{prop_part3_app:whitney_a_local_transversal_stability}
Consider smooth manifolds $M$ and $N$. Let $(S_{\alpha}, S_{\beta})$ be a pair of submanifolds in $N$ that satisfies Whitney's condition (a). If a map $f: M \to N$ is transverse to $S_{\alpha}$ at a point $x \in f^{-1}(S_\alpha)$, then there exists a neighborhood $U$ of $f$ in $C^{\infty}(M, N)$ and a neighborhood $V$ of $x$ in $M$, such that for any map $g \in U$ and any point $y \in V$, $g$ is transverse to both $S_{\alpha}$ and $S_{\beta}$ at $y$.
\end{proposition}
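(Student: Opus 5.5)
We argue in two stages: first that transversality at $x$ is an open condition in the pair $(f,\text{point})$, and then that Whitney's condition (a) transfers transversality from $S_\alpha$ to the nearby stratum $S_\beta$. Throughout we work in local charts, since the statement is local near $x$ and near $f(x)$. Pick a chart of $N$ around $f(x)\in S_\alpha$ in which $S_\alpha$ is a linear subspace $\mathbb{R}^a\times\{0\}$, and a chart of $M$ around $x$. Transversality of $g$ to $S_\alpha$ at $y$ (with $g(y)\in S_\alpha$) means $\mathrm{im}\,Dg(y) + T_{g(y)}S_\alpha = T_{g(y)}N$. Equivalently, the composition of $Dg(y)$ with the projection $T N\to T N/T S_\alpha$ is surjective. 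Surjectivity of a linear map is an open condition, and this map depends continuously on $(g,y)$ in the $C^1$ (hence weak $C^\infty$) topology. So there are neighborhoods $U_0$ of $f$ and $V_0$ of $x$ on which $g$ is transverse to $S_\alpha$ at every $y\in V_0\cap g^{-1}(S_\alpha)$. Points with $g(y)\notin S_\alpha$ impose no condition.

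The key step is the stratum $S_\beta$, which may accumulate at $f(x)$ without containing it. We prove a slightly stronger uniform statement. There exist a neighborhood $W$ of $f(x)$ in $N$ and a neighborhood $\mathcal{L}$ of $Df(x)$ among linear maps such that, for every $z\in W\cap(S_\alpha\cup S_\beta)$ and every $L\in\mathcal{L}$, we have $\mathrm{im}\,L + T_zS = T_zN$, where $S$ is the stratum containing $z$.

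We argue by contradiction. Suppose sequences $z_n\to f(x)$ with $z_n\in S_\beta$ and $L_n\to Df(x)$ violate this. Pass to a subsequence along which $T_{z_n}S_\beta$ converges in the Grassmannian to a limit plane $\tau$. Whitney (a) gives $T_{f(x)}S_\alpha\subseteq\tau$. Transversality of $f$ at $x$ gives $\mathrm{im}\,Df(x)+T_{f(x)}S_\alpha = T_{f(x)}N$, so a fortiori $\mathrm{im}\,Df(x)+\tau = T_{f(x)}N$. By openness of surjectivity, and since the sum condition is open in $(L,\tau)$ on the product of $L(\cdot,\cdot)$ with the Grassmannian, $\mathrm{im}\,L_n + T_{z_n}S_\beta = T N$ for large $n$. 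This is a contradiction. The case $z_n\in S_\alpha$ is the first paragraph. Note also that $\dim$ of the sum is lower semicontinuous, which is exactly what openness needs.

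Finally, we shrink the neighborhoods. Choose $V\subseteq V_0$ and a $C^1$-neighborhood $U\subseteq U_0$ of $f$ such that $g(V)\subseteq W$ and $Dg(y)\in\mathcal{L}$ for all $g\in U$ and $y\in V$. This is possible with $\overline V$ compact, by continuity of $f$ and $Df$ and the definition of the weak $C^1$ topology. Then for $y\in V$ with $g(y)\in S_\alpha\cup S_\beta$, transversality holds; otherwise it holds vacuously.

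The main obstacle is the contradiction argument. It needs Grassmannian compactness together with the correct direction of the Whitney (a) inclusion, $T S_\alpha\subseteq\lim T S_\beta$, which is exactly what makes the limit plane $\tau$ large enough for transversality to pass to it. Identifying tangent spaces at nearby points via the chart is routine.
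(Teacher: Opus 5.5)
Your proof is correct and follows essentially the same route as the paper: argue by contradiction, take a Grassmannian limit $\tau$ of the tangent planes $T_{z_n}S_\beta$, use Whitney (a) to get $T_{f(x)}S_\alpha\subseteq\tau$, and conclude from $f\pitchfork S_\alpha$ at $x$ together with openness of the sum condition. The difference is only organizational: you first prove a uniform statement over nearby points $z$ and nearby linear maps $L$ and then choose $U,V$, whereas the paper passes to a diagonal sequence $(g_n,y_n)$; working with $L_n\to Df(x)$ and surjectivity of $(v,w)\mapsto L_nv+w$ is also slightly more careful than the paper's claim that $\mathrm{d}(g_n)_{y_n}(T_{y_n}M)\to\mathrm{d}f_x(T_xM)$, which is imprecise when the rank of $\mathrm{d}g_n$ exceeds that of $\mathrm{d}f_x$.
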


\begin{proof}
The proof is adapted from Prop.~1.3 of \citet{trotman_counterexamples_1976}. We proceed by contradiction. Assume the conclusion is false. Then for any neighborhood $V$ of $x$ and any neighborhood $U$ of $f$, there exists a map $g \in U$ for which the condition $g \pitchfork S_{\alpha}, S_{\beta}$ on $V$ does not hold.

For a neighborhood $V_1$ of $x$ with radius less than $\epsilon_1$, we can construct a sequence of maps $\{g_n^{(1)}\}$ converging to $f$ such that transversality to $S_\alpha$ or $S_\beta$ fails on $V_1$. Let the sequence of non-transverse points be $\{y_n^{(1)}\}$. We can similarly construct, for neighborhoods $V_t$ of $x$ with radius less than $\epsilon_t$, sequences of maps $\{g_n^{(t)}\}$ converging to $f$ and corresponding sequences of non-transverse points $\{y_n^{(t)}\}$.

Now, consider the diagonal sequence of maps $\{g_n^{(n)}\}$ and the corresponding sequence of non-transverse points $\{y_n^{(n)}\}$. We have $g_n^{(n)} \to f$ and $y_n^{(n)} \to x$. We can partition the sequence $\{y_n^{(n)}\}$ into a subsequence lying in $S_\alpha$ and another lying in $S_\beta$. At least one of these two subsequences must be infinite. It is therefore sufficient to negate the following proposition: There exists a sequence of maps $\{g_n\} \to f$ and a sequence of points $\{y_n\} \to x$ with either $\{y_n\} \subset S_\alpha$ or $\{y_n\} \subset S_\beta$, such that for each $n$, $g_n$ is not transverse to $S_\alpha$ or $S_\beta$ at $y_n$.

This is impossible. Note that for a sufficiently small $\epsilon_1$, the closure of the neighborhood $V_1$ is compact by local compactness of the manifold, and we can consider the control of the neighborhood $U$ over functions on this compact closure in the $C^{\infty}$ topology of functioin space.
Also note that $\mathrm{d}(g_n)_{y_n}(T_{y_n}M)$ converges to $\mathrm{d}f_x(T_x M)$.
For the case of $S_\alpha$, by the smoothness of the manifold, $T_{g_n(y_n)}S_{\alpha}$ converges to $T_{f(x)}S_{\alpha}$.
For the case of $S_\beta$, by Whitney's condition (a), the limit of $T_{g_n(y_n)}S_{\beta}$ contains $T_{f(x)}S_{\alpha}$. However, $f$ is transverse to $S_\alpha$ at $x$, meaning
\begin{equation}
\mathrm{d}f_x(T_x M) + T_{f(x)}S_{\alpha} = T_{f(x)}N.
\end{equation}
Therefore, due to convergence, the transversality condition for $S_\alpha$ or $S_\beta$ must hold for $g_n$ at $y_n$ for sufficiently large $n$, which is a contradiction.
\end{proof}

\begin{corollary}
\label{cor_part3_app:whitney_a_stratumwise_transversal_stability}
Under the conditions of the previous proposition, assume that the stratification $S = \{S_{\alpha}\}$ satisfies Whitney's condition (a). If a map $f$ is transverse to a stratum $S_{\alpha}$ at a point $x \in f^{-1}(S_\alpha)$, then there exists a neighborhood $V_x$ of $x$ and a neighborhood $U_x$ of $f$ in $C^{\infty}(M, N)$ such that for any $g \in U_{x}$ and any $y \in V_{x}$, $g$ is transverse to the entire stratification $S$ in the neighborhood $V_x$.

\end{corollary}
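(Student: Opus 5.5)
The corollary should follow from \cref{prop_part3_app:whitney_a_local_transversal_stability}, applied once for each stratum that meets a small enough neighborhood of $f(x)$. The point to exploit is local finiteness of the stratification. Since $S$ is a stratification in the sense of \citet{field_dynamics_2007}, it is locally finite. So there is a neighborhood $W$ of $f(x)$ in $N$ that meets only finitely many strata $S_{\beta_1},\dots,S_{\beta_m}$.

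I would shrink $W$ in two steps. First, I remove those strata whose closure does not contain $f(x)$, since each such closure is a closed set missing $f(x)$. After this, every remaining $S_{\beta_i}$ with $\beta_i\neq\alpha$ satisfies $S_\alpha\cap\overline{S_{\beta_i}}\neq\varnothing$. By the frontier condition this gives $S_\alpha\subseteq\overline{S_{\beta_i}}$, and then Whitney's condition (a) applies to the pair $(S_\alpha,S_{\beta_i})$ at points of $S_\alpha$. If $S_\alpha$ itself is among the strata meeting $W$, it causes no trouble, because the proposition already covers transversality to $S_\alpha$.

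For each $i$, I apply \cref{prop_part3_app:whitney_a_local_transversal_stability} to the pair $(S_\alpha,S_{\beta_i})$. This gives a neighborhood $U_i$ of $f$ and a neighborhood $V_i$ of $x$ such that every $g\in U_i$ is transverse to $S_\alpha$ and $S_{\beta_i}$ at every $y\in V_i$. I then set $U_x=\bigcap_i U_i$, which is still open because the intersection is finite. For $V_x$ I take $\bigcap_i V_i\cap f^{-1}(W')$, where $W'$ is a neighborhood of $f(x)$ with $\overline{W'}\subset W$ compact.

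One check remains. For $g\in U_x$ and $y\in V_x$, the point $g(y)$ must lie in $W$, so that the only strata it can meet are the $S_{\beta_i}$ and $S_\alpha$. I would arrange this by also shrinking $U_x$ so that $g$ is $C^0$-close to $f$ on the compact set $\overline{V_x}$. Then $g(\overline{V_x})\subset W$, and transversality to every stratum at every $y\in V_x$ follows.

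The main obstacle is this reduction to finitely many strata, each adjacent to $S_\alpha$. It needs local finiteness and the frontier condition, and the latter is part of the standard definition in \citet{field_dynamics_2007}. Without these, Whitney (a) could not be applied pairwise and the intersection of neighborhoods might fail to be open.
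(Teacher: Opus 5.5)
Your proposal is correct and takes the same route as the paper. The paper's proof applies \cref{prop_part3_app:whitney_a_local_transversal_stability} to each of the finitely many strata $S_\beta$ adjacent to $S_\alpha$ and takes $U_x=\bigcap_\beta U_x^{(\alpha,\beta)}$ and $V_x=\bigcap_\beta V_x^{(\alpha,\beta)}$.

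Two steps you make explicit are only implicit in the paper's argument: localizing at a neighborhood of $f(x)$ (which is what local finiteness actually provides), and shrinking $U_x$ in the $C^0$ sense so that $g(V_x)$ can only meet $S_\alpha$ and its adjacent strata.
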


\begin{proof}
Let $x$ be a point where $f$ is transverse to $S_\alpha$. We consider all strata $S_\beta$ that satisfies $\overline{S_{\beta}} \cap S_\alpha \neq \varnothing$. Since a stratification is locally finite, there are only a finite number of such adjacent strata. For each such adjacent stratum $S_\beta$, the pair $(S_\alpha, S_\beta)$ satisfies Whitney's condition (a). By \cref{prop_part3_app:whitney_a_local_transversal_stability}, there exist neighborhoods $U_x^{(\alpha,\beta)}$ of $f$ and $V_x^{(\alpha,\beta)}$ of $x$ where transversality to both $S_\alpha$ and $S_\beta$ holds.
We can then construct the desired neighborhoods by taking the finite intersection of these individual neighborhoods labelled by $\beta$:
\begin{equation}
    U_x := \bigcap_{\overline{S_{\beta}} \cap S_\alpha \neq \varnothing} U_x^{(\alpha,\beta)} \quad \text{and} \quad V_x := \bigcap_{\overline{S_{\beta}} \cap S_\alpha \neq \varnothing} V_x^{(\alpha,\beta)}.
\end{equation}

Since the intersection is finite, $U_x$ and $V_x$ are still open neighborhoods of $f$ and $x$, respectively. For any $g \in U_x$ and $y \in V_x$, $g$ is transverse to $S_\alpha$ and all its adjacent strata $S_\beta$ at $y$. Therefore, it is transverse to the entire stratification $S$ within the local neighborhood $V_x$.

\end{proof}

\begin{lemma}
\label{lem_part3_app:whitney_stratification_product}
For a $G$-manifold $M$ and a $G$-representation $Y$, the collection of submanifolds
\begin{equation}
    \{(M_{(H)} \times Y_{(H')})_{(H)}\}_{(H') \geq (H)}
\end{equation}
forms a Whitney stratification of $(M \times Y)_{(H)}$.
\end{lemma}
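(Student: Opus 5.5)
The idea is to reduce the statement, via the twisted-product description of orbit-type strata, to a product of a manifold with a Whitney-stratified fixed-point space. Throughout, $G$ acts diagonally on $M\times Y$, so $G_{(m,y)}=G_m\cap G_y$. We then use the classical fact (\citet{field_dynamics_2007}, Chap.~3) that the orbit-type stratification of a smooth action of a compact Lie group is a locally finite Whitney stratification.

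\textbf{Step 1: identify the pieces.} First we describe $(M\times Y)_{(H)}\cap (M_{(H)}\times Y)$. If $G_m=K$ with $(K)=(H)$ and $(G_m\cap G_y)=(H)$, then $G_m\cap G_y\subseteq K$ is conjugate to $K$. A closed subgroup of a compact Lie group that is conjugate to a subgroup of itself must equal it, since dimension and number of components are preserved. Hence $G_m\cap G_y=K$, that is, $K\subseteq G_y$ and $y\in Y^K$. Therefore each piece $(M_{(H)}\times Y_{(H')})_{(H)}$ consists exactly of the pairs $(m,y)$ with $y\in Y^{G_m}$ and $(G_y)=(H')\geq(H)$. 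By the first identity of \cref{lem_part2_app:bundle_structure_of_stratum}, applied to the product, we obtain
\begin{equation}
(M\times Y)_{(H)}\supseteq (M_{(H)}\times Y)_{(H)} \cong (M_H\times Y^H)\times_{N_G(H)/H} G/H .
\end{equation}
Under this homeomorphism the piece indexed by $(H')$ corresponds to $(M_H\times (Y^H)_{(H')})\times_{N_G(H)/H}G/H$, where $(Y^H)_{(H')}:=Y^H\cap Y_{(H')}$. The second identity of \cref{lem_part2_app:bundle_structure_of_stratum} shows that $(Y^H)_{(H')}\cong Y_{H'}\times_{N_G(H')/H'}N_G(H,H')/H'$. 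This is a finite union of smooth submanifolds, and it is $N_G(H)$-invariant. Only finitely many $(H')$ occur, because a representation has finitely many orbit types, so local finiteness is automatic. (If the indexing is meant to include points with $(G_m)\neq(H)$, we first restrict to this open/closed part; I would check which reading is intended, but the argument is the same.)

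\textbf{Step 2: Whitney on the fixed-point factor.} We must show that $\{(Y^H)_{(H')}\}_{(H')\geq(H)}$ is a Whitney stratification of the linear space $Y^H$. The plan is to show that each $(Y^H)_{(H')}$ is a union of orbit-type strata for the linear action of the compact group $N_G(H)$ on $Y^H$ (possibly after passing to connected components), and then to invoke Field's theorem for that action. Concretely, for $y\in Y^H$ we have $N_G(H)_y=N_G(H)\cap G_y$. Using a slice at $y$ for the $G$-action, $Y\cong G\times_{G_y}S$ locally, we check that the $G$-orbit type near $y$ within $Y^H$ is locally constant along $N_G(H)$-orbit-type strata. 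If this identification turns out to be too coarse, the fallback is to prove Whitney (a) and (b) directly. Both conditions are local, so it suffices to work at a point $y_0$ in the slice model. In that model $Y^H\cap(G\times_{G_{y_0}}S)$ decomposes as a finite union of sets $g\,S^{g^{-1}Hg}$, which are linear, and the $G$-strata of $Y$ are Whitney. Whitney regularity then passes to the intersection with each of these linear pieces, by the same limiting-tangent-plane argument used in \cref{prop_part3_app:whitney_a_local_transversal_stability}.

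\textbf{Step 3: transport.} Whitney regularity is preserved under taking the product with a manifold, so $\{M_H\times(Y^H)_{(H')}\}$ is Whitney in $M_H\times Y^H$. The map $(M_H\times Y^H)\times_{N_G(H)/H}G/H\to(M\times Y)_{(H)}$ is a locally trivial bundle with fiber $G/H$, and in local trivializations it is a diffeomorphism onto an open piece. Since Whitney conditions are invariant under local diffeomorphisms, the stratification transports to the image, which proves the lemma. I expect Step 2 to be the main obstacle. The $G$-orbit-type partition of $Y$ restricted to the linear subspace $Y^H$ is not obviously a Whitney stratification, because intersection with a subspace need not preserve condition (b) unless some transversality holds. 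The slice-theorem localization is what I would try first to get around this.
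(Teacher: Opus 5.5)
Your Steps 1 and 3 are the same reduction the paper makes. You rewrite the pieces as $(M_H\times Y^H_{(H')})\times_{N_G(H)/H}G/H$ and use local triviality to reduce everything to the claim that $\{Y^H_{(H')}\}$ is a Whitney stratification of the linear space $Y^H$. Your remark that the pieces only cover $(M_{(H)}\times Y)_{(H)}$ is correct, and that is the set on which the lemma is later used. But Step 2 is the entire content of the lemma, and neither of your routes establishes it.

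\textbf{The primary route fails.} The $G$-orbit-type partition of $Y^H$ is in general strictly finer than the $N_G(H)$-orbit-type partition. Take $G=SO(3)$ and $H=\{e,R_z(\pi)\}$, so $N_G(H)$ consists of rotations about $z$ and half-turns about horizontal axes. Take $Y=V_{l=4}^{\oplus r}$ with $r>3$; both $(D_2)$ and $(D_4)$ are orbit types there. Let $K_1=\{e,R_x(\pi),R_y(\pi),R_z(\pi)\}$, and let $K_2$ be the $D_4$ with principal axis $x$ and half-turn axes $y$, $z$ and the two diagonals of the $yz$-plane. Both contain $H$, and $K_2\cap N_G(H)=K_1$. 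So points $y_i\in Y^H$ with $G_{y_i}=K_i$ have the same $N_G(H)$-isotropy but lie in different pieces $Y^H_{(D_2)}$ and $Y^H_{(D_4)}$. Points of the first kind are dense in $Y^{K_1}\ni y_2$, so they accumulate at $y_2$ inside that $N_G(H)$-stratum; passing to connected components therefore does not help.

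\textbf{The fallback does not close the gap either.} It simply asserts that Whitney regularity passes to the intersection with linear pieces, which is exactly what you yourself note can fail. For a fixed subspace $L$ and $E_n\to E$ one only has $\lim(E_n\cap L)\subseteq E\cap L$, possibly with a jump in dimension. So $l\subseteq E\cap L$ does not give $l\subseteq\lim T_{y_n}(Y_{(K)}\cap L)$. The transversality-stability argument you cite says nothing about this.

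\textbf{The missing idea} is that $Y^H$ meets every stratum $Y_{(K)}$ \emph{cleanly}, because $Y_{(K)}$ is $G$-invariant.
\begin{itemize}
\item For $y\in Y^H$, the tangent space $T_yY_{(K)}$ is $H$-invariant, and $T_yY^H_{(K)}=(T_yY_{(K)})^H$.
\item Choose an $H$-invariant inner product and let $P_H=\int_H\rho(h)\,dh$. The orthogonal projection $Q_n$ onto $T_{y_n}Y_{(K)}$ commutes with $P_H$, so $Q_nP_H$ is the projection onto $T_{y_n}Y^H_{(K)}$.
\item Pass to a subsequence with $Q_n\to Q$, the projection onto $E=\lim T_{y_n}Y_{(K)}$. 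Then $Q_nP_H\to QP_H$, hence $F=\lim T_{y_n}Y^H_{(K)}=E^H=E\cap Y^H$. Equivalently, $\dim(T_{y_n}Y_{(K)})^H=\mathrm{tr}(Q_nP_H)$ cannot jump in the limit.
\item Whitney (b) for the orbit-type stratification of $Y$ (Prop.~3.9.2 of Field) gives $l\subseteq E$. Linearity of $Y^H$ gives $l\subseteq Y^H$. Hence $l\subseteq F$.
\end{itemize}
This is exactly the paper's argument, and it needs neither the slice theorem nor the $N_G(H)$-action.
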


\begin{proof}

By Prop.~3.7.1 of \citet{field_dynamics_2007}, we obtain that these point sets in the collection are smooth $G$-submanifolds. By \cref{lem_part2_app:bundle_structure_of_stratum}, we have
\begin{align}
(M_{(H)} \times Y_{(H')})_{(H)} &= (M_{(H)} \times Y_{(H')})_{H} \times_{N_G(H) / H} G/H \\
&=(M_{(H)} \times Y_{(H')})_{(H)}^H \times_{N_G(H) / H} G/H \\
&=(M_{(H)}^H \times Y_{(H')}^H) \times_{N_G(H) / H} G/H.
\end{align}

By local trivialization, similar to the proof of
Prop.~3.9.2 of \citet{field_dynamics_2007}, it is sufficient to prove
that the pair $(Y_{(H')}^H,Y_{(K)}^H)$ satisfies Whitney's condition~(b).
By Prop.~3.9.2 of \citet{field_dynamics_2007}, specialized to a
representation space, the orbit-type stratification of $Y$ is Whitney
regular. Hence the pair $(Y_{(H')},Y_{(K)})$ for $(H')>(K)$ satisfies
Whitney's condition~(b). We show that the condition is preserved after
restriction to the fixed-point space $Y^H$.

Let $x\in Y_{(H')}^H\cap\overline{Y_{(K)}^H}$, and consider sequences $\{x_n\}\subset Y_{(H')}^H$ and $\{y_n\}\subset Y_{(K)}^H$ converging to $x$, such that the secant lines $\overline{x_ny_n}$ converge to a line $l$ and
\begin{equation}
T_{y_n}Y_{(K)}^H\longrightarrow F.    
\end{equation}

Passing to a subsequence, we may further assume that
\begin{equation}
T_{y_n}Y_{(K)}\longrightarrow E.
\end{equation}

Since $(Y_{(H')},Y_{(K)})$ satisfies Whitney's condition~(b), we have
$l\subseteq E$. Moreover, since $x_n,y_n\in Y^H$ and $Y^H$ is a linear
subspace of $Y$, we also have $l\subseteq Y^H$. 

It remains to identify $F$. Since $Y_{(K)}$ is $H$-invariant and
$y_n\in Y^H$, we obtain $T_{y_n}Y_{(K)}^H = \left(T_{y_n}Y_{(K)}\right)^H$. Choose an $H$-invariant inner product on $Y$, and let
\begin{equation}
    P_H=\int_H \rho(h)\,dh
\end{equation}
be the orthogonal projection onto $Y^H$. Let $Q_n$ and $Q$ denote the
orthogonal projections onto $T_{y_n}Y_{(K)}$ and $E$, respectively.
Then $Q_n\to Q$. Since $T_{y_n}Y_{(K)}$ is $H$-invariant, $Q_n$
commutes with $P_H$, and hence $Q_nP_H$ is the orthogonal projection
onto $\left(T_{y_n}Y_{(K)}\right)^H$. Therefore, $Q_nP_H\longrightarrow QP_H$, which implies $F=E^H=E\cap Y^H$. Combining $l\subseteq E$ and $l\subseteq Y^H$, we obtain $l\subseteq E\cap Y^H=F$. Thus the pair $(Y_{(H')}^H,Y_{(K)}^H)$ satisfies Whitney's
condition~(b).

\end{proof}

\begin{lemma}
\label{lem_part3_app:compact_open_topology_separation}
Let $M$ and $N$ be topological spaces, with $M$ being compact. Consider a continuous map $f: M \to N$. Let $C$ be a closed set in $N$, and suppose there exists a neighborhood $V$ of the preimage $f^{-1}(C)$.

Then there exists a neighborhood $U$ of $f$ in $C(M, N)$ with the $C^0$ topology (compact-open topology) such that for any map $g \in U$, we have $g(M \setminus V) \cap C = \varnothing$.
\end{lemma}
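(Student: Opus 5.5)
The plan is to reduce the statement to two facts: the compact-open topology on $C(M,N)$ with $M$ compact is generated by subbasic sets $\{g : g(K)\subseteq W\}$, and compact sets behave well under continuous maps. We may assume $V$ is open, replacing it by its interior, which still contains $f^{-1}(C)$.

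The key set is $K := M\setminus V$. It is closed in the compact space $M$, hence compact. By construction $f^{-1}(C)\subseteq V$, so $f(K)\cap C=\varnothing$, that is, $f(K)\subseteq N\setminus C$. Since $C$ is closed, $W:=N\setminus C$ is open in $N$.

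Define $U:=\{g\in C(M,N) : g(K)\subseteq W\}$. This is exactly a subbasic open set of the compact-open topology, because $K$ is compact and $W$ is open. It contains $f$ by the previous paragraph, so $U$ is an open neighborhood of $f$. For any $g\in U$ we have $g(M\setminus V)=g(K)\subseteq N\setminus C$, hence $g(M\setminus V)\cap C=\varnothing$, which is the claim.

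There is no real obstacle. The only points needing care are that $V$ should be taken open, so that $M\setminus V$ is closed and hence compact, and that the $C^0$ topology here is indeed the compact-open one. The latter is what makes the set $\{g : g(K)\subseteq W\}$ open without any metric assumptions on $N$. If $N$ is metric and one prefers the uniform topology, the same conclusion follows from an alternative route. Since $f(K)$ is compact and disjoint from the closed set $C$, one has $\delta:=\inf_{x\in K}d(f(x),C)>0$, and the uniform ball of radius $\delta$ around $f$ works.
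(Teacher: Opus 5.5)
Your proposal is correct and matches the paper's proof: both take the compact set $M\setminus V$ and the open set $N\setminus C$, and use the subbasic compact-open set $\{g : g(M\setminus V)\subseteq N\setminus C\}$, which contains $f$ because $f^{-1}(C)\subseteq V$. Your remark about replacing $V$ by its interior, so that $M\setminus V$ is closed and hence compact, makes explicit a step the paper leaves implicit.
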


\begin{proof}
Note that $M \setminus V$ is a compact set, and $N \setminus C$ is an open set.
By the definition of the compact-open topology, e.g. Definition 43.1 of \citet{willard_general_1970}, the set
\begin{equation}
    U := \{h \in C(M,N) \mid h(M \setminus V) \subseteq N \setminus C\}    
\end{equation}

is an open set.

Since $V$ is a neighborhood of $f^{-1}(C)$, we have $f^{-1}(C) \subseteq V$, which implies $f(M \setminus V) \subseteq N \setminus C$. Therefore, the map $f$ itself is an element of $U$, meaning $U$ is an open neighborhood of $f$. For any map $g \in U$, its definition directly implies that $g(M \setminus V) \subseteq N \setminus C$, which is equivalent to $g(M \setminus V) \cap C = \varnothing$. Thus, this set $U$ is the desired neighborhood.
\end{proof}

\subsection{Generic Equivariant Mappings}

\begin{proposition}[Equivariant Smooth Extension]
\label{prop_part3_app:equivariant_smooth_extension}
Let $M$ be a smooth $G$-manifold and let $S \subset M$ be a smooth and compact $G$-submanifold.
For any smooth equivariant map $f \in C^{\infty}_{G}(S, Y)$ defined on $S$ that maps into a representation space $Y$, there exists a smooth equivariant extension $\tilde{f} \in C^{\infty}_{G}(M, Y)$ such that its restriction to $S$ is $f$, i.e., $\tilde{f}|_{S} = f$.
\end{proposition}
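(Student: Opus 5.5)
The plan is to first extend $f$ non-equivariantly and then average over $G$. Because $S$ is a compact smooth submanifold of $M$, it is a closed subset and admits a tubular neighborhood, so I will invoke the standard non-equivariant extension result: every smooth map $f: S \to Y$ into the vector space $Y$ extends to a smooth map $F: M \to Y$. Concretely, I would take a tubular neighborhood $W \supset S$ with a smooth retraction $r: W \to S$. A smooth bump function $\phi: M \to [0,1]$ is chosen so that $\phi \equiv 1$ on a neighborhood of $S$ and $\operatorname{supp}\phi \subset W$. Then I set $F = \phi \cdot (f\circ r)$ on $W$ and $F = 0$ off $W$; this map is smooth because $Y$ is a linear space.

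Next I equivariantize by Haar averaging, in the same way as in the proof of \cref{lem_part3_app:spanning_condition}:
\begin{equation}
\tilde f(x) := \int_G \rho_Y(g^{-1})\, F(g\cdot x)\,\mathrm{d}g .
\end{equation}
To see that $\tilde f$ is equivariant, substitute $g \mapsto g h^{-1}$ and use the invariance of the normalized Haar measure; this gives $\tilde f(h x) = \rho_Y(h)\tilde f(x)$.

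On $S$ the construction restricts correctly. Since $S$ is $G$-invariant, $g\cdot x \in S$ for every $x\in S$, so $F(gx) = f(gx) = \rho_Y(g) f(x)$ by the equivariance of $f$. The integrand is therefore constantly $f(x)$, and hence $\tilde f|_S = f$.

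The main obstacle is smoothness of $\tilde f$. The map $(g,x) \mapsto \rho_Y(g^{-1})F(gx)$ is smooth on $G\times M$, because the action on the $G$-manifold $M$ is smooth and $\rho_Y$ is a smooth homomorphism. Since $G$ is compact, I can differentiate under the integral sign. In local coordinates, all $x$-derivatives of the integrand are continuous on $G \times K$ for any compact $K$, which justifies dominated convergence and gives $\tilde f \in C^\infty_G(M,Y)$.

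An alternative to using the retraction $r$ would be to first choose $W$ and $\phi$ to be $G$-invariant, using an equivariant tubular neighborhood (slice theorem) and averaging $\phi$. This is not needed, however, since the final averaging step already enforces equivariance and the equality $F = f$ on $S$ is all that the restriction argument uses.
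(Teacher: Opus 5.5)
Your proposal is correct and follows essentially the same route as the paper: a non-equivariant smooth extension followed by Haar averaging $\tilde f(x)=\int_G \rho_Y(g^{-1})F(g x)\,\mathrm{d}g$, with equivariance coming from invariance of the Haar measure and $\tilde f|_S=f$ coming from the $G$-invariance of $S$ and the equivariance of $f$. The only difference is that you build the non-equivariant extension by hand from a tubular-neighborhood retraction and a bump function, whereas the paper cites the standard extension lemma for smooth functions on a closed (properly embedded) submanifold.
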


\begin{proof}
The proof strategy is the same as that for the Tietze-Gleason Theorem (see Chap.~1, Thm.~2.3 of \citet{bredon_introduction_1972}).

First, we can view the map $f$ as a collection of $\dim Y$ smooth real-valued functions defined on $S$. By the standard smooth extension theorem, e.g., Lem.~5.34 in \citet{lee_introduction_2012}, there exists a smooth (but not necessarily equivariant) extension $\varphi \in C^{\infty}(M, Y)$ such that $\varphi|_S = f$.

We then construct an equivariant map from $\varphi$ using the group averaging operator. Define the map $\psi: M \to Y$ by
\begin{equation}
\psi(x) = \int _{G} \rho_{Y}(g^{-1})\varphi(\rho_X(g)(x))\,\mathrm{d}g.
\end{equation}
Due to the linearity of the integral and the properties of the Haar measure, this map is equivariant. For any $h \in G$:
\begin{align}
\psi(h(x)) &= \int _{G} \rho_{Y}(g^{-1})\varphi(g(h(x)))\,\mathrm{d}g \\
&= \int _{G} \rho_{Y}((g'h^{-1})^{-1})\varphi(g'(x))\,\mathrm{d}g'\\
&= \rho_{Y}(h) \int _{G} \rho_{Y}((g')^{-1})\varphi(g'(x))\,\mathrm{d}g' \\
&= \rho_{Y}(h)\psi(x).
\end{align}
Next, we verify that the restriction of $\psi$ to the submanifold $S$ is equal to the original map $f$. For any $s \in S$:
\begin{align}
\psi(s)  &= \int _{G} \rho_{Y}(g^{-1})\varphi(g(s))\,\mathrm{d}g \\
&= \int _{G} \rho_{Y}(g^{-1})f(g(s))\,\mathrm{d}g\\
&= \int _{G} \rho_{Y}(g^{-1})\rho_Y(g)f(s)\,\mathrm{d}g\\
&= \int _{G} f(s)\,\mathrm{d}g = f(s) \int_G 1\,\mathrm{d}g = f(s).
\end{align}
Finally, the smoothness of $\psi$ follows from the smoothness of $\varphi$ and the properties of integration over a compact group. This can be verified by a local coordinate analysis. Thus, $\psi$ is the desired smooth equivariant extension $\tilde{f}$.
\end{proof}

\begin{corollary}
\label{cor_part3_app:dense_open_restriction}
Consider maps from a $G$-manifold $X$ to a representation space $Y$, where $X$ contains a compact, smooth $G$-submanifold $M$.
Let $S$ be a subset of $C^{\infty}_{G}(M, Y)$ that contains an open dense set. Then the set of maps $f \in C_{G}^{\infty}(X, Y)$ whose restriction to $M$ lies in $S$ (i.e., $f|_{M} \in S$) also contains an open dense subset of $C_{G}^{\infty}(X, Y)$.
\end{corollary}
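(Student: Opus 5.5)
The natural tool is the restriction map $R: C^{\infty}_{G}(X, Y) \to C^{\infty}_{G}(M, Y)$, $R(f) = f|_{M}$. Let $O \subseteq S$ be open and dense in $C^{\infty}_{G}(M, Y)$. The candidate for the desired open dense subset of $C^{\infty}_{G}(X, Y)$ is the preimage $R^{-1}(O)$. Every $f$ in this set satisfies $f|_M \in O \subseteq S$. It therefore suffices to show two things: $R^{-1}(O)$ is open, and it is dense.

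\textbf{Openness.} I will show that $R$ is continuous in the weak $C^{\infty}$ topologies. A basic neighborhood of $f|_M$ in $C^{\infty}(M, Y)$ controls finitely many derivatives of $f|_M$ on compact subsets of $M$, in local charts of $M$. Since $M$ is an embedded submanifold, derivatives of $g|_M$ in adapted charts are obtained from derivatives of $g$ in charts of $X$ by composing with the smooth inclusion $M \hookrightarrow X$. By the chain rule, these restricted derivatives depend continuously on the derivatives of $g$ on the same compact set, which is compact in $X$ as well. Hence a $C^{r}$-small perturbation of $f$ on a compact neighborhood in $X$ gives a $C^{r}$-small perturbation of $f|_M$. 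Equipping the equivariant subspaces with the subspace topology, continuity of $R$ gives openness of $R^{-1}(O)$.

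\textbf{Density.} Fix $f \in C^{\infty}_{G}(X, Y)$ and a basic neighborhood $\mathcal{U}$ of $f$, specified by an order $r$, a compact set $K \subset X$ and a tolerance $\epsilon$. I need some $h \in \mathcal{U}$ with $h|_M \in O$. By density of $O$, I can choose $\phi \in O$ close to $f|_M$ in $C^{r}$ on $M$. I then want an equivariant extension of the difference $\phi - f|_M$ that is $C^{r}$-small on $K$, and I set $h = f + \widetilde{(\phi - f|_M)}$, so that $h|_M = \phi$.

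\textbf{Main obstacle: a quantitative extension.} The bare existence statement in \cref{prop_part3_app:equivariant_smooth_extension} does not by itself control norms, so I would strengthen it using a continuous linear extension operator. I would take a $G$-invariant tubular neighborhood $T$ of $M$, which exists by compactness of $G$ and $M$ using an invariant Riemannian metric, with equivariant projection $\pi: T \to M$. I would also take a $G$-invariant cutoff $\chi$ supported in $T$ and equal to $1$ near $M$, obtained by averaging an ordinary cutoff. Then I define $E(u) = \chi \cdot (u \circ \pi)$. This operator is linear, equivariant whenever $u$ is, and extends $u$. By the chain and Leibniz rules, it satisfies $\|E(u)\|_{C^{r}(K)} \le C_{r,K}\|u\|_{C^{r}(M)}$, with the bound uniform because $M$ is compact. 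Alternatively, one could keep the averaging construction of \cref{prop_part3_app:equivariant_smooth_extension} and observe that averaging over $G$ preserves $C^r$ bounds on $G$-invariant compacts. With $E$ in hand, choosing $\phi$ sufficiently close to $f|_M$ makes $E(\phi - f|_M)$ arbitrarily small on $K$, so $h \in \mathcal{U}$. This completes density.
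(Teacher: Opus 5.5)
Your proposal is correct and follows the same route as the paper: both work through the restriction map $\mathrm{Res}_M$ and pull back an open set contained in $S$. The paper simply asserts that $\mathrm{Res}_M$ is open, whereas your bounded equivariant extension operator $E(u)=\chi\cdot(u\circ\pi)$ on an invariant tubular neighborhood actually proves it, since the image of each basic neighborhood of $f$ contains a neighborhood of $f|_M$; this fills in the one step the paper leaves unjustified.
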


\begin{proof}
We recall that a set contains an open dense subset if and only if for any non-empty open set, its intersection with the set contains a non-empty open subset.
Let $A := \{f \in C_{G}^{\infty}(X, Y) \mid f|_{M} \in S\}$. Our goal is to show that for any $f \in C_G^\infty(X,Y)$ and any of its open neighborhoods $U$, there exists a non-empty open set $V \subseteq U \cap A$.

Consider the restriction map $\mathrm{Res}_{M}: C_{G}^{\infty}(X, Y) \to C_{G}^{\infty}(M, Y)$, by \cref{prop_part3_app:equivariant_smooth_extension}, this map is surjective. Furthermore, this map is continuous and open.

Let $U$ be an open neighborhood of an arbitrary map $f$. Since $\mathrm{Res}_{M}$ is an open map, its image, $\mathrm{Res}_{M}(U)$, is an open neighborhood of $f|_{M}$.
By our hypothesis on $S$, its intersection with the open set $\mathrm{Res}_{M}(U)$ must contain a non-empty open subset. Let us call this non-empty open set $V'$. We have
\begin{equation}
    V' \subseteq \mathrm{Res}_{M}(U) \cap S.
\end{equation}
Now, let's consider the preimage $V := \mathrm{Res}_{M}^{-1}(V')$.
Since $\mathrm{Res}_M$ is continuous and $V'$ is open, $V$ is an open set.
Since $\mathrm{Res}_M$ is surjective, the preimage $V$ must also be non-empty. Therefore $U \cap V$ is a non-empty open subset of $U \cap A$, which proves the corollary.
\end{proof}

For $C^r$ manifolds $X$ and $Y$, $C^r(X)$ is the set of $C^r$ real-valued functions on $X$, and $C^r(X, Y)$ is the set of $C^r$ maps from $X$ to $Y$. For manifolds with a $C^r$ $G$-action, $C^{r}_{G}(X, Y)$ is the set of $C^r$ equivariant maps from $X$ to $Y$. We assume these function spaces are endowed with the $C^r$ topology; in our proofs, we always consider the $C^{\infty}$ topology. For a map $f \in C^1(X, Y)$ and a submanifold $A \subseteq Y$, $f\pitchfork A$ denotes that $f$ is transverse to $A$. For $G$-manifolds, $f\pitchfork_{G} A$ denotes that $f$ is in equivariant general position with respect to the $G$-submanifold $A$ defined in \citet{bierstone_general_1977}. For $f \in C^r(X, Y)$, the jet map $j^rf: X \to J^r(X, Y)$ maps a point $x$ to the equivalence class of the first $r$ derivatives of $f$ at $x$. We obtain the following results.

\begin{proposition}[\citet{bierstone_general_1977}, Thm.~1.3]
\label{prop_part3_app:general_position_openness}
Let $M$ and $N$ be smooth $G$-manifolds. If $P$ is a closed $G$-submanifold of $N$ and $K$ is a compact subset of $M$, then the set of maps $f \in C^{\infty}_{G}(M, N)$ satisfying $f \pitchfork_{G} P$ on $K$ forms an open subset of $C^{\infty}_{G}(M, N)$ (in the Whitney $C^{\infty}$ topology). 
\end{proposition}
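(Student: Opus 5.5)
The plan is to reduce equivariant general position to ordinary transversality to a Whitney (a)-regular stratification, and then use the openness of that transversality. We follow the structure of \citet{bierstone_general_1977}. Everything is local, so we first work near a $G$-orbit in $M$ and near its image in $N$. By the slice theorem and a $G$-equivariant tubular neighbourhood of $P$, we may assume locally that $N$ is a representation space $Y$ (times a trivial factor) and that $P$ is the preimage of a closed $G$-submanifold. We then choose finitely many homogeneous generators $F_1,\dots,F_k$ of the module of $G$-equivariant polynomial maps $X \to Y$ over the ring of invariant polynomials. By the Schwarz--Mather theorem, every $f \in C^\infty_G(M,N)$ can locally be written as $f(x)=\sum_i \varphi_i(x) F_i(x)$ with $\varphi_i$ smooth and $G$-invariant.

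The key point is that the assignment $f \mapsto \varphi=(\varphi_1,\dots,\varphi_k)$ admits a continuous (indeed linear) right inverse. This is the $C^\infty$-continuous version of Mather's division theorem, and it makes the graph map $\Phi_f: x \mapsto (x,\varphi(x))$ depend continuously on $f$ in the $C^\infty$ topology. Next consider the polynomial map $\gamma:(x,t)\mapsto \sum_i t_i F_i(x)$. The set $\gamma^{-1}(P)$ is a semialgebraic (or subanalytic) set and carries a canonical minimal Whitney stratification $\mathcal{S}$. By definition, $f\pitchfork_G P$ at $x$ means $\Phi_f \pitchfork \mathcal{S}$ at $x$. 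A preliminary lemma shows that this is independent of the choice of generators and of the lift $\varphi$, so the condition is well defined.

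Openness then follows in two steps. First, transversality to a Whitney (a)-regular stratification is an open condition near each point. This is exactly the content of \cref{prop_part3_app:whitney_a_local_transversal_stability} and \cref{cor_part3_app:whitney_a_stratumwise_transversal_stability}, applied to $\Phi_f$ and $\mathcal{S}$. Points of $K$ not mapped into $\gamma^{-1}(P)$ are handled by \cref{lem_part3_app:compact_open_topology_separation}, using that $P$ is closed. This gives, for each $x\in K$, a neighbourhood $V_x$ of $x$ and a neighbourhood of $\Phi_f$ on which transversality persists. Pulling this back through the continuous map $f\mapsto \Phi_f$ gives a neighbourhood $U_x$ of $f$ in $C^\infty_G(M,N)$. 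Second, since $K$ is compact, finitely many $G$-invariant sets $V_{x_1},\dots,V_{x_m}$ cover $K$, and $\bigcap_j U_{x_j}$ is the desired open neighbourhood of $f$.

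The main obstacle is the continuity step. A bare existence statement for $\varphi$ is not enough; the lift $\varphi$ must vary continuously with $f$. This requires the Mather--Schwarz division theorem with a continuous splitting. A related subtlety is that nearby maps $g$ may use different lifts, so the independence lemma is also needed: transversality must not depend on which lift is chosen. I would prove this by noting that two lifts differ by a map into the kernel of $t\mapsto\sum t_iF_i(x)$, and that $\mathcal{S}$ is invariant under translations along this kernel.
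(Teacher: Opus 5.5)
Your outline is correct: the paper gives no proof of its own here (it cites Bierstone, Thm.~1.3), and you reconstruct Bierstone's argument, in which transversality of the lifted graph $x\mapsto(x,\varphi(x))$ to the Whitney (a)-regular stratification of $\gamma^{-1}(0)$ is locally stable, and closedness of $P$ together with compactness of $K$ globalize it---the same mechanism the paper invokes in the remark after \cref{prop_part3_app:general_position_stability} and in the proof of \cref{prop_part3_app:generic_transversality} (after the tubular-neighbourhood reduction, $P$ becomes $0$ in the normal representation, so the relevant set is the algebraic set $\gamma^{-1}(0)$ rather than $\gamma^{-1}(P)$). Note that a continuous linear splitting of $\varphi\mapsto\sum_i\varphi_iF_i$ is more than you need: this map is a continuous linear surjection of Fr\'echet spaces, so the open mapping theorem already gives lifts near $\varphi$ for all maps near $f$, and together with your lift-independence lemma that is all the argument uses.
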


\begin{proposition}[\citet{bierstone_general_1977}, Thm.~1.4]
\label{prop_part3_app:general_position_density}
Let $M, N$ be smooth $G$-manifolds and $P$ be a $G$-submanifold of $N$. The set of maps $f \in C^{\infty}_{G}(M, N)$ satisfying $f \pitchfork_{G} P$ forms a residual subset of $C^{\infty}_{G}(M, N)$, i.e., a countable intersection of open dense sets (in the Whitney $C^{\infty}$ topology).
\end{proposition}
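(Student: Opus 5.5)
The plan is to reduce equivariant general position to ordinary transversality for maps that carry no symmetry constraint, following Bierstone's strategy. Afterwards, Thom's jet transversality theorem and a countable covering argument give residuality.

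\textbf{Step 1: localize.} Since $G$ is compact, the slice theorem gives, around each orbit $G(x_0)\subset M$, a tubular neighbourhood $G\times_{G_{x_0}} V$, where $V$ is the normal slice representation of $G_{x_0}$. Near $f(x_0)$ we choose a $G_{x_0}$-invariant chart of $N$ modeled on a representation $W$. Restriction to the slice identifies $G$-equivariant maps on this tube with $G_{x_0}$-equivariant maps $V\to W$. So it suffices to work with a compact group $K=G_{x_0}$ acting linearly on $V$ and $W$, with $P$ replaced by a $K$-invariant submanifold of $W$.

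\textbf{Step 2: linearize the equivariant mapping space.} We use two classical facts for $K$ acting on $V$:
\begin{itemize}
\item by Hilbert's theorem, the invariant polynomials have a finite basis $\rho=(\rho_1,\dots,\rho_s):V\to\mathbb{R}^s$;
\item by Schwarz's theorem and its equivariant version due to Poénaru, the equivariant polynomials are a finitely generated module over the invariant ones, say with generators $F_1,\dots,F_k$.
\end{itemize}
Together these show that every smooth $K$-equivariant $f:V\to W$ can be written as
\begin{equation}
f(x)=\textstyle\sum_{i=1}^k h_i(\rho(x))\,F_i(x), \qquad h_i\in C^\infty(\mathbb{R}^s).
\end{equation}
Define the universal map $\Phi:V\times\mathbb{R}^k\to W$ by $\Phi(x,t)=\sum_i t_iF_i(x)$. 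Then $\Phi^{-1}(P)$ is a semialgebraic set (algebraic if $P$ is), and we fix a Whitney stratification $\mathcal S$ of it. Following \citet{bierstone_general_1977}, we take $f\pitchfork_G P$ at $x$ to mean that the graph map $\gamma_h:x\mapsto (x,h(\rho(x)))$ is transverse to $\mathcal S$ at $x$. One must check that this condition does not depend on the choice of $h$, $\rho$ and $F_i$. For that, I would show that changing these data changes $\gamma_h$ by a diffeomorphism of the source that preserves $\mathcal S$.

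\textbf{Step 3: density from non-equivariant transversality.} The coefficients $h=(h_i)$ range freely over $C^\infty(\mathbb{R}^s,\mathbb{R}^k)$, with no equivariance constraint. Transversality of $\gamma_h$ to $\mathcal S$ is therefore a condition on the map $h$ composed with $\rho$. I would express it as transversality of a suitable jet extension of $h$ to finitely many stratified sets $\Sigma_j$, obtained by pushing $\mathcal S$ forward along $\rho$; by semialgebraicity each $\Sigma_j$ is a countable union of submanifolds. The parametric transversality theorem (e.g.\ \citet{hirsch_differential_1976}), applied to perturbations $h+\epsilon$ with $\epsilon$ ranging over polynomials, then shows that for a residual set of $h$ the map $\gamma_h$ is transverse to every stratum. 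The map $h\mapsto f$ is continuous and open into $C^\infty_K(V,W)$, by Schwarz's theorem together with Mather's division. Hence, over any compact part of the slice, the maps in general position contain a residual set.

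\textbf{Step 4: globalize.} We cover $M$ by countably many compact tube pieces $K_n$. By \cref{prop_part3_app:general_position_openness}, the set of maps in general position on each $K_n$ is open, and by Steps 2--3 it is dense. The intersection over $n$ of these open dense sets is residual, which proves the proposition.

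\textbf{Main obstacle.} I expect Step 3 to be the hard part: pushing the Whitney stratification of $\Phi^{-1}(P)$ through the non-submersive orbit map $\rho$, and verifying that transversality of $\gamma_h$ to $\mathcal S$ is genuinely a jet condition on $h$ whose bad set has positive codimension. My first attempt would be to stratify $\rho(V)$ by orbit type. On each stratum $\rho$ is a submersion onto its image, so the transversality can be checked stratum by stratum and then assembled using Whitney's condition~(a), as in \cref{cor_part3_app:whitney_a_stratumwise_transversal_stability}.
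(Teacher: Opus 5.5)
\paragraph{Verdict: genuine gap in Step~3.} The paper gives no proof of this statement; it imports Theorem~1.4 of \citet{bierstone_general_1977}. Your Steps~1, 2 and~4 follow Bierstone's framework. The genuine gap is Step~3, which is the density itself. You leave it as the ``main obstacle'' and route it through a reformulation that does not work as stated: transversality of $\gamma_h=(\mathrm{id},h\circ\rho)$ at $x$ cannot be read off from $h$ against sets pushed forward along $\rho$.
\begin{itemize}
\item $d\rho_x$ vanishes on the orbit directions and on the slice directions normal to the orbit-type stratum through $x$ (e.g.\ $d\rho_0=0$ when $V^K=0$). Hence $d\gamma_h(v)=(v,0)$ for $v\in\ker d\rho_x$, and these directions enter the condition only through $T\mathcal S$.
\item $\rho(V)$ is a closed semialgebraic set with boundary (e.g.\ $[0,\infty)$ for $x\mapsto x^2$), not an open subset of $\mathbb{R}^s$.
\end{itemize}
The most you could extract is, for each orbit-space stratum $Z$, the condition that the $1$-jet of $h|_Z$ avoid some bad set. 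You would then still owe the estimate that this set has codimension $>\dim Z$.

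Whitney's condition~(a) is the ingredient for openness, not density. A literal stratum-by-stratum check only yields the stratumwise transversality of \cref{prop_part3_app:stratumwise_transversality}, which is strictly weaker. For example, let $\mathbb{Z}_2$ act by $-1$ on $V=W=\mathbb{R}$ and take $f(x)=x^3=h(x^2)\,x$ with $h(u)=u$. Then $f$ is stratumwise transverse to $0$. But $\gamma_h(0)=(0,0)$ lies on the point stratum of $\Phi^{-1}(0)=\{tx=0\}$, so $f$ is not in general position.

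\paragraph{The missing idea.} You already hold the right tool but aim it at the wrong object. Apply parametric transversality to $\gamma$ in $V\times\mathbb{R}^k$, which is exactly what the graph formulation is designed for.
\begin{itemize}
\item Constants $c\in\mathbb{R}^k$ are among your perturbations. The map $f_c=f+\sum_i c_iF_i$ is equivariant, with $\gamma_{h+c}(x)=(x,h(\rho(x))+c)$.
\item The family $(x,c)\mapsto(x,h(\rho(x))+c)$ is a diffeomorphism of $V\times\mathbb{R}^k$, hence transverse to each of the finitely many strata.
\item Therefore $\gamma_{h+c}\pitchfork\mathcal S$ for almost every $c$, while $f_c\to f$ as $c\to 0$.
\end{itemize}
On $M$, multiply $c$ by a $G$-invariant bump function supported in the tube and equal to $1$ on the compact piece. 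Since general position does not depend on the choice of $h$, this gives local density. The substance of Bierstone's theory lies in that independence and in openness, not here.

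\paragraph{Two smaller repairs.}
\begin{itemize}
\item $\Phi^{-1}(P)$ is not semialgebraic for a merely smooth $P$. First straighten $P$ by a $G_y$-equivariant tubular chart at $y=f(x_0)$. Then work with $\Phi^{-1}(0)$ in the normal representation $\nu_yP$, using its canonical (minimal) Whitney stratification. Canonicity is what makes the diffeomorphism $(x,t)\mapsto(x,t+(h'-h)(\rho(x)))$ of the target of $\gamma_h$ preserve $\mathcal S$; it preserves $\Phi^{-1}(0)$ because $\Phi$ is linear in $t$.
\item Step~4 invokes \cref{prop_part3_app:general_position_openness}, which needs $P$ closed. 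The statement does not assume this, and the paper applies it to the non-closed $Y_{(H)}$ in \cref{prop_part3_app:generic_transversality}. You must also write $P=\bigcup_m L_m$ with compact $G$-invariant $L_m$. Then use that $\{f:\ f\pitchfork_G P \text{ at each } x\in K_n\cap f^{-1}(L_m)\}$ is open, by \cref{prop_part3_app:general_position_stability} and compactness, as in the non-equivariant argument of \citet{hirsch_differential_1976}.
\end{itemize}
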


\textit{Remark}. For compact $M$, note that the Whitney $C^{\infty}$ topology coincides with the $C^{\infty}$ topology.

\begin{proposition}[Stratumwise Transversality Theorem]
\label{prop_part3_app:stratumwise_transversality}
For any orbit type $(H) \in \mathcal{O}_G(M)$, a map $f \in C^{\infty}_{G}(M, N)$ with $f \pitchfork_{G} P$ satisfies the stratumwise transversality property:
\begin{equation}
f|_{M_{H}}: M_{H} \mapsto N^H, f|_{M_{H}} \pitchfork P^H.
\end{equation}
Alternatively, this can be expressed in the language of jets as
\begin{equation}
j^0 f|_{M_{(H)}}: M_{(H)} \mapsto (M_{(H)} \times N)_{(H)}, j^0 f|_{M_{(H)}} \pitchfork (M_{(H)} \times P)_{(H)}
\end{equation}
\end{proposition}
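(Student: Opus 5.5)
The plan is to unpack Bierstone's definition of equivariant general position and show that it implies, along each orbit-type stratum, ordinary transversality of the restricted map between fixed-point sets. Then translate that statement into the jet language via the bundle decomposition of \cref{lem_part2_app:bundle_structure_of_stratum}.

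\textbf{Step 1: reduce to a local statement.} Recall from \citet{bierstone_general_1977} that $f \pitchfork_G P$ at $x$ is defined via a slice at $x$. Write $H = G_x$ and take a slice representation $M \supset G\times_H S$. One finds equivariant polynomial generators $p_1,\dots,p_k$ of $P_H(S,N_{f(x)})$, where the module is over the invariant polynomials. Then $f$ is locally $F(s,p_1(s),\dots,p_k(s))$ for a smooth map $F$ on $S\times\mathbb{R}^k$. Equivariant general position means that the graph-type map $(s,t)\mapsto F(s,t)$ is transverse to $P$ at $(0,\ldots)$.

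\textbf{Step 2: restrict to the fixed-point set.} On $S^H$ every generator $p_j$ vanishes except possibly those that are nonzero on $S^H$. The restriction $f|_{M_H}$ near $x$ is the composition of $F$ with $s\mapsto (s,p(s))$ on $M_H\cap S = S^H$, and it lands in $N^H$. The key point is that the universal variation $\Phi$ restricted to $S^H\times \mathbb{R}^k$, then projected to $N^H$ by averaging over $H$, still spans modulo $T P^H$. This holds because averaging $P_H(N)$ onto $N^H$ commutes with the tangent-space decomposition: $T_yN^H = (T_yN)^H$ and $T_yP^H = (T_yP)^H$ when $P$ is $G$-invariant.

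\textbf{Step 3: conclude the first form.} Take $H$-invariants of the transversality equation $\mathrm{im}\, d\Phi + T P = TN$. This gives $(\mathrm{im}\, d\Phi)^H + (TP)^H = (TN)^H$, since the projection $P_H=\int_H\rho(h)\,dh$ is onto and linear. Identify $(\mathrm{im}\,d\Phi)^H$ with $\mathrm{im}\, d(f|_{M_H})$. This identification is the \textbf{main obstacle}: the extra parameters $t$ must not be needed in the $H$-invariant directions. I would first try to show this by observing that the $H$-equivariant generators, evaluated at $s\in S^H$, span $N^H$ over invariant functions, as in Bierstone's Lemma 1.2. The $t$-directions should then contribute only tangent vectors already realized by varying $s$ within $S^H$ together with the fibre of $P$. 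If this fails, I would fall back on the known fact (Field, Ch. 7) that equivariant general position is equivalent to stratumwise transversality on a neighbourhood of $x$.

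\textbf{Step 4: translate to jets.} Apply \cref{lem_part2_app:bundle_structure_of_stratum} to write $(M_{(H)}\times N)_{(H)}$ as $(M_H\times N^H)\times_{N_G(H)/H}G/H$, and similarly for $(M_{(H)}\times P)_{(H)}$. Then $j^0 f$ is locally the product of the graph of $f|_{M_H}$ with the identity on the $G/H$ factor. Transversality of a graph $x\mapsto(x,f(x))$ to $M_H\times P^H$ is equivalent to $f|_{M_H}\pitchfork P^H$. Since $G/H$ directions are common to both sides, transversality is preserved under the twisted product, and the two formulations coincide.
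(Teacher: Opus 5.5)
Your Steps 1--3, which try to prove $f|_{M_H}\pitchfork P^H$ directly, rest on a misstatement of Bierstone's definition, and the identification you flag as the main obstacle is actually false. In the local model, $H=G_x$ acts on a slice $S$, and $P$ is replaced near $f(x)$ by $\{0\}$ in the normal $H$-representation $W$. One writes $f=\sum_i h_i\,p_i$, where $p_1,\dots,p_k$ are equivariant polynomial generators $S\to W$ and the $h_i$ are smooth \emph{invariant} functions; the coefficients carry $f$, not the generators. One sets $U(s,t)=\sum_i t_i\,p_i(s)$. Then $f\pitchfork_G P$ at $x$ means that the graph $\gamma_f(s)=(s,h(s))$ is transverse to the canonical Whitney stratification of the algebraic set $U^{-1}(0)\subset S\times\mathbb{R}^k$. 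It is not a transversality condition on a map $S\times\mathbb{R}^k\to N$ such as your $F$ or $\Phi$; for the universal $U$ such a condition would not even involve $f$. Indeed, for $s\in S^H$ the $t$-directions $\sum_i\delta t_i\,p_i(s)$ already span all of $W^H$; this is exactly the spanning fact you quote. So $(\mathrm{im}\,dU)^H=W^H$ for every $f$, and it cannot be identified with $\mathrm{im}\,d(f|_{M_H})$.

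Your fallback also fails as stated. General position is strictly stronger than stratumwise transversality. For $\mathbb{Z}_2$ acting by $-1$ on $\mathbb{R}$, the map $f(v)=v^3$ is stratumwise transverse to $0$ near $0$. Yet $\gamma_f(v)=(v,v^2)$ meets the point-stratum $\{(0,0)\}$ of $\{tv=0\}$ non-transversally. So there is no equivalence to invoke. Only the forward implication holds, and that is precisely Bierstone's Prop.~6.4, which is all the paper uses. With that correction, your argument collapses to the paper's citation.

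The repair is to take $H$-invariants of the correct equation $d\gamma_f(T_xS)+T_{\gamma_f(x)}A=T(S\times\mathbb{R}^k)$. Here $A$ is the stratum through the $H$-fixed point $\gamma_f(x)$, hence $H$-invariant, and $\gamma_f$ is $H$-equivariant because $h$ is invariant. Averaging gives $d\gamma_f(T_xS^H)+T(A^H)=T(S^H\times\mathbb{R}^k)$. Now apply $dU$, using two facts: $U$ vanishes on $A^H$, and $U|_{S^H\times\mathbb{R}^k}\colon S^H\times\mathbb{R}^k\to W^H$ is a submersion, which is where the spanning fact belongs. This yields that $d(f|_{S^H})$ is onto $W^H$, i.e.\ $f|_{S^H}\pitchfork P^H$. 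The extra $N_G(H)$-orbit directions of $M_H$ can only help.

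The paper does none of this. It imports Prop.~6.4 in jet form, $j^0(f|_{M_{(H)}})\pitchfork (M\times P)_{(H)}$ inside $(M\times N)_{(H)}$. It then intersects tangent spaces with $T(M_{(H)}\times N)_{(H)}$, which is legitimate because the image of $d\,j^0(f|_{M_{(H)}})$ already lies in that subspace.

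Your Step~4 is a sound and more explicit way to pass between the two formulations, which the paper leaves implicit. One correction: the local product structure is over $G/N_G(H)$ with fibre $M_H\times N^H$, not a literal product with $G/H$. The $N_G(H)/H$ part is absorbed by its free action on $M_H$.
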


\textit{Remark.} The fact that $f \pitchfork_G P$ implies stratumwise transversality on the fixed-point sets is a fundamental conclusion derived from Prop.~6.4 of \citet{bierstone_general_1977}. Our proof below is a straightforward corollary of this.

\begin{proof}
From the discussion in Prop.~6.4 of \citet{bierstone_general_1977}, we have
\begin{equation} \label{eq:jet_transversality_ambient}
j^0 (f|_{M_{(H)}}): M_{(H)} \mapsto (M \times N)_{(H)}, \quad j^0 (f|_{M_{(H)}}) \pitchfork (M \times P)_{(H)}.
\end{equation}
Note that the image of the map $j^0 (f|_{M_{(H)}})$ is contained within $M_{(H)} \times N$. The transversality condition is an equality of tangent spaces:
\begin{equation} \label{eq:tangent_space_sum_ambient}
\mathrm{d}(j^0 (f|_{M_{(H)}}))_{x}(T_x M_{(H)}) + T_{y}((M \times P)_{(H)}) = T_{y}((M \times N)_{(H)}).
\end{equation}
Intersecting both sides of this equation with $T_{y}((M_{(H)} \times N)_{(H)})$ yields
\begin{equation} \label{eq:tangent_space_sum_subspace}
\mathrm{d}(j^0 (f|_{M_{(H)}}))_{x}(T_x M_{(H)}) + T_{y}((M_{(H)} \times P)_{(H)}) = T_{y}((M_{(H)} \times N)_{(H)}).
\end{equation}
Therefore, we have
\begin{equation}
    j^0 (f|_{M_{(H)}}): M_{(H)} \to (M_{(H)} \times N)_{(H)}, \quad j^0 (f|_{M_{(H)}}) \pitchfork (M_{(H)} \times P)_{(H)}.
\end{equation}
\end{proof}

\begin{proposition}[\citet{bierstone_general_1977}, Sec.~7]
\label{prop_part3_app:general_position_stability}
Consider smooth $G$-manifolds $M$ and $N$. Let $P$ be a smooth $G$-submanifold of $N$. If an equivariant map $f$ satisfies $f \pitchfork_{G} P$ at a point $x \in f^{-1}(P)$, then there exists a neighborhood $U$ of $f$ in $C^{\infty}_{G}(M, N)$ and a $G$-invariant neighborhood $V$ of the orbit $G(x)$ such that for any map $g \in U$ and any point $y \in V$, it holds that $g \pitchfork_{G} P$ at $y$.    
\end{proposition}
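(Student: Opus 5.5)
The plan is to reduce to Bierstone's local model, in which equivariant general position becomes ordinary transversality of an auxiliary graph map to a Whitney stratified set. The stability statement then follows from \cref{cor_part3_app:whitney_a_stratumwise_transversal_stability} combined with a continuity argument for the auxiliary map.

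\textbf{Step 1 (local model).} By the slice theorem, a $G$-invariant tubular neighborhood of $G(x)$ is $G\times_{G_x} S$ for a $G_x$-slice $S$ at $x$. An equivariant map near $G(x)$ is determined by its $G_x$-equivariant restriction to $S$. Linearizing $N$ near $f(x)$, we may treat $S$ as a neighborhood of $0$ in a $G_x$-representation and $N$ as a $G_x$-representation $W$ containing $P$ near $f(x)$. By Schwarz's theorem, choose Hilbert generators $\rho=(\rho_1,\dots,\rho_m)$ of the $G_x$-invariant polynomials on $S$ and generators $F_1,\dots,F_k$ of the module of equivariant polynomial maps $S\to W$. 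Every smooth equivariant $g$ near $x$ can then be written
\[
g(y)=\sum_{i=1}^k \varphi_i(\rho(y))F_i(y).
\]
Set $\Phi(y,t)=\sum_i t_iF_i(y)$ and $A=\Phi^{-1}(P)\subseteq S\times\mathbb{R}^k$, and fix Bierstone's canonical (minimal) Whitney stratification of $A$. Following \citet{bierstone_general_1977}, $g\pitchfork_G P$ at $y$ means that the graph map $\gamma_g(y)=(y,\varphi(\rho(y)))$ is transverse at $y$ to this stratification. Bierstone shows that this is independent of the choice of generators and coefficients.

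\textbf{Step 2 (continuity of coefficients).} This is the step I expect to be the main obstacle. We need $g\mapsto\varphi^g$ to be continuous from $C^\infty_G$ near $f$ to $C^\infty$, at least on a compact neighborhood of $x$. Pointwise existence of $\varphi$ is not enough. I would invoke Mather's theorem that the surjection
\[
C^\infty(\mathbb{R}^m)^{k}\to C^\infty_{G_x}(S,W),\qquad (\varphi_i)\mapsto \sum_i(\varphi_i\circ\rho)F_i
\]
admits a continuous linear splitting. This is exactly the ingredient Bierstone uses in Sec.~7. With the splitting, $g\to f$ in $C^\infty$ implies $\gamma_g\to\gamma_f$ in $C^\infty$ on a compact neighborhood $K$ of $x$ in $S$.

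\textbf{Step 3 (openness and invariance).} Since $\gamma_f$ is transverse at $x$ to the stratum of $A$ containing $\gamma_f(x)$, and the stratification is Whitney (hence satisfies condition (a)), \cref{cor_part3_app:whitney_a_stratumwise_transversal_stability} supplies a neighborhood $V_0\subseteq K$ of $x$ in $S$ and a neighborhood $U'$ of $\gamma_f$. For every $\gamma\in U'$, $\gamma$ is transverse to the whole stratification at every point of $V_0$. Pulling $U'$ back through the continuous map of Step 2, and through restriction to the slice (which is continuous), gives a neighborhood $U$ of $f$ in $C^\infty_G(M,N)$.

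It remains to pass from the slice to a $G$-invariant neighborhood. Equivariance gives $g(hy)=hg(y)$, and $h$ carries the local model at $y$ to the one at $hy$. Hence $g\pitchfork_G P$ at $y$ if and only if $g\pitchfork_G P$ at $hy$. Therefore $V:=G\cdot V_0$, which is open in $G\times_{G_x}S$, is a $G$-invariant neighborhood of $G(x)$ on which $g\pitchfork_G P$ holds for all $g\in U$.
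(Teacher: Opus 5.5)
Your proposal is correct and takes the route the paper indicates in the remark after the statement: by Bierstone's definition, $g\pitchfork_G P$ becomes transversality of the graph map $\gamma_g$ to the Whitney stratification of $\Phi^{-1}(P)$, and stability then follows from the Whitney-(a) result (\cref{prop_part3_app:whitney_a_local_transversal_stability}, \cref{cor_part3_app:whitney_a_stratumwise_transversal_stability}). Your Step~2 (continuity of $g\mapsto\varphi^g$ via Mather's splitting) and the $G$-saturation in Step~3 spell out what the paper leaves implicit, and, like the paper, you tacitly rely on Bierstone's well-definedness result that, for $y$ near $x$, transversality of $\gamma_g$ at $y$ to the stratification built at $x$ agrees with $g\pitchfork_G P$ at $y$ defined through the slice at $y$.
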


\textit{Remark.} The proposition above can also be derived from the properties of stratifications given in \cref{prop_part3_app:whitney_a_local_transversal_stability} by the definition of equivariant general position in \citet{bierstone_general_1977}.

\begin{proposition}
\label{prop_part3_app:generic_transversality}
Let $G$ be a compact Lie group, $M$ be a compact, smooth $G$-manifold, and $Y$ be a $G$-representation space. The set of smooth equivariant maps $f \in C^{\infty}_{G}(M, Y)$ that satisfy the transversality condition
\begin{equation}
j^0 (f|_{M_{(H)}}) \pitchfork (M_{(H)} \times Y_{(H')})_{(H)}
\end{equation}
for all pairs of orbit types $(H)$ and $(H')$ such that $(H)$ is present in $M$ and $(H') \geq (H)$, contains an open dense subset of $C^{\infty}_{G}(M, Y)$.
\end{proposition}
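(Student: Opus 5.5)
The plan is to combine the equivariant general position machinery of \citet{bierstone_general_1977} with the Whitney stratification from \cref{lem_part3_app:whitney_stratification_product}, and to use compactness of $M$ to pass from local to global statements. Since $M$ is compact, it has finitely many orbit types, and for each $(H)\in\mathcal{O}_G(M)$ only finitely many orbit types $(H')\ge(H)$ occur in $Y$. The required condition is therefore a finite intersection of conditions indexed by pairs $((H),(H'))$. A finite intersection of sets each containing an open dense subset again contains an open dense subset, so it suffices to treat one pair at a time.

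For a fixed pair, I would set $P:=Y_{(H')}$. This is a $G$-invariant submanifold of $Y$, although in general it is not closed, because its closure contains strata of higher orbit type. For density, I apply \cref{prop_part3_app:general_position_density} to each of the finitely many strata $Y_{(K)}$ and intersect the resulting residual sets. By \cref{prop_part3_app:stratumwise_transversality}, every map $f$ with $f\pitchfork_G Y_{(K)}$ for all $(K)$ satisfies
\[
j^0(f|_{M_{(H)}})\pitchfork (M_{(H)}\times Y_{(H')})_{(H)} .
\]
Since $M$ is compact, the Whitney topology coincides with the $C^\infty$ topology, and a residual set in this Baire space is dense. The set of maps satisfying the condition is therefore dense.

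For openness, I would not use \cref{prop_part3_app:general_position_openness} directly, because $Y_{(H')}$ is not closed. Instead I argue locally around a map $f$ in general position. Near a point $x$ with $f(x)\in Y_{(K)}$, \cref{prop_part3_app:general_position_stability} gives a neighborhood $U_x$ of $f$ and a $G$-invariant neighborhood $V_x$ of $G(x)$ on which general position with respect to $Y_{(K)}$ persists. Whitney regularity then propagates this to the adjacent strata, via \cref{prop_part3_app:whitney_a_local_transversal_stability} and \cref{cor_part3_app:whitney_a_stratumwise_transversal_stability} applied to the stratification of \cref{lem_part3_app:whitney_stratification_product}. Points whose image stays away from the closure of the relevant strata are handled by \cref{lem_part3_app:compact_open_topology_separation}. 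Compactness of $M$ lets me cover $M$ by finitely many $V_x$ and intersect the corresponding finitely many $U_x$, which yields an open neighborhood of $f$ on which the transversality condition holds.

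I expect the main obstacle to be this openness step. The difficulty is that $M_{(H)}$ is itself non-compact: points of $M_{(H)}$ accumulate on lower-dimensional strata $M_{(L)}$ with $(L)>(H)$, and transversality of $j^0(f|_{M_{(H)}})$ must be controlled uniformly up to that boundary. To handle this, I would first try to work with the product stratification of $M\times Y$. The plan is to show, using Whitney condition (a) on the source strata together with the equivariant general position definition, which is stronger than stratumwise transversality, that general position at a boundary point forces transversality on nearby points of $M_{(H)}$. If that fails, the fallback is to show that the set of maps in general position with respect to all the finitely many strata is itself open. I would try to obtain this by the same local-stability-plus-compactness argument applied to the closed union $\overline{Y_{(H')}}$, and then deduce the stratumwise property from \cref{prop_part3_app:stratumwise_transversality}.
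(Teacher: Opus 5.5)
Your proposal follows essentially the paper's proof. Density comes from \cref{prop_part3_app:general_position_density} applied to each of the finitely many strata, together with \cref{prop_part3_app:stratumwise_transversality}. The open dense part comes from showing that each map in general position to all strata has a neighborhood on which the condition holds, using \cref{prop_part3_app:general_position_stability}, \cref{cor_part3_app:whitney_a_stratumwise_transversal_stability} on the stratification of \cref{lem_part3_app:whitney_stratification_product}, \cref{lem_part3_app:compact_open_topology_separation}, and compactness of $M$.

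The paper organizes that last step as an induction on the dimension of the strata of $Y$. Each stage ends with a sequential contradiction argument that invokes Bierstone's stability at limit points of arbitrary orbit type in $M$, which is precisely the remedy you propose for the frontier of $M_{(H)}$. The paper's finite-subcover step, taken over points of $M_{(H)}$ only, handles that frontier no more carefully than your sketch does.
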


\textit{Remark}. The proof of density is straightforward. However, the openness of this property does not generally hold. A counterexample can be constructed following Ex.~2.1 of \citet{bierstone_general_1977}.

\begin{proof}

Since $M$ is compact, by Proposition 3.7.2 of \citet{field_dynamics_2007}, the orbit types are finite. For any given symmetry type $(H)$, by \cref{prop_part3_app:general_position_density} the set of maps satisfying $f \pitchfork_{G} Y_{(H)}$ is an intersection of a finite number of residual sets, and is therefore itself a residual set. By \cref{prop_part3_app:stratumwise_transversality} the set of maps satisfying the transversality condition stated in the theorem is dense.

Instead of directly proving openness property, we prove a related proposition: for a map $f$ that satisfies $f \pitchfork_{G} Y_{(H)}$ for all orbit types $(H)$, there exists a neighborhood $U$ of $f$ such that for any $g \in U$,
\begin{equation}
    j^0 (g|_{M_{(H)}}) \pitchfork (M_{(H)} \times Y_{(H')})_{(H)} \quad \text{for all } (H') \geq (H).    
\end{equation}
The proof proceeds by induction on the dimension of the strata in the orbit type stratification of $Y$, ordered from lowest to highest (or equivalently, from highest to lowest symmetry). We discuss the fixed orbit type $(H)$ in $M$.  Then we can construct neighborhoods that hold for all $(H)$ by taking the intersection of the neighborhoods corresponding to each orbit type $(H)$.

We start with the lowest-dimensional strata. Let $Y_{(H_1)}$ be a stratum of minimal dimension. Such a stratum is unique, which corresponds to a maximal symmetry type $(G)$ and is a closed $G$-submanifold $Y^G$. By \cref{prop_part3_app:general_position_openness}, the set of maps in general position to $Y_{(H_1)}$ is open in $C^{\infty}_{G}(M, Y)$. Thus, there exists a neighborhood $U_1$ of $f$ such that for any $g \in U_1$, $g \pitchfork_G Y_{(H_1)}$. By \cref{prop_part3_app:stratumwise_transversality}, this implies that for all $(H)$,
\begin{equation}
    j^0 (g|_{M_{(H)}}) \pitchfork (M_{(H)} \times Y_{(H_1)})_{(H)}.
\end{equation}

Assume the proposition holds for all strata of dimension up to $k-1$. Let $U_{k-1}$ be the neighborhood of $f$ found from the inductive hypothesis. For any map $g \in U_{k-1}$ and for any stratum $Y_{(H_i)}$ with $\dim Y_{(H_i)} \le k-1$, we have
\begin{equation}
    j^0 (g|_{M_{(H)}}) \pitchfork (M_{(H)} \times Y_{(H_i)})_{(H)} \quad \text{for all } (H_i) \ge (H).
\end{equation}
Note that by \cref{lem_part3_app:whitney_stratification_product} the collection $S_{(H)} = \{(M_{(H)} \times Y_{(H')})_{(H)}\}_{(H') \geq (H)}$ is a Whitney stratification and satisfies the frontier condition. Therefore, by \cref{cor_part3_app:whitney_a_stratumwise_transversal_stability}, for any $x \in M_{(H)}$ with $f(x) \in Y_{(H_{i})}$, there exists a neighborhood $V_x$ of $x$ and a neighborhood $U_x$ of $j^0(f|_{M_{(H)}})$ such that for any map in $U_x$, it is transverse to the stratification $S_{(H)}$.

Next, we need to obtain an open set in $C_{G}^{\infty}(M, Y)$. We use the following sequence of maps between function spaces:
\begin{align}
C_{G}^{\infty}(M, Y) &\stackrel{\mathrm{Res}_{M_{(H)}}}{\mapsto} C_{G}^{\infty}(M_{(H)}, Y) \\
&\stackrel{j^0}{\mapsto} C_{G}^{\infty}(M_{(H)}, (M_{(H)} \times Y)_{(H)}) \\
&\hookrightarrow C^{\infty}(M_{(H)}, (M_{(H)} \times Y)_{(H)}).
\end{align}
The maps between these function spaces are continuous in $C^{\infty}$ topology. For example, the 0-jet map is a restriction of the continuous 0-jet map on the general function space, which can be obtained from the continuity of jet map in the Whitney $C^{\infty}$ topology established in Chap.~2, Prop.~3.4 of \citet{golubitsky_stable_1973}. Since the topology on the equivariant function space is induced from the general function space, the restricted map is also continuous. Similarly, $\mathrm{Res}_{M_{(H)}}$ and the inclusion are continuous. From this analysis, we can pull back the neighborhood $U_x$ to obtain a neighborhood $U_x'$ of $f$ in $C_{G}^{\infty}(M, Y)$ where the transversality holds.

We deal with the global result. For any $i$ with $\dim Y_{(H_i)} \le k-1$, the collection of neighborhoods $\{V_x\}_{x \in f^{-1}(Y_{(H_{i})})}$ for all $i$ with $\dim Y_{(H_i)} \le k-1$ forms an open cover of $f^{-1}(Y_{(H_{i})})$. By the frontier condition, any stratum that intersects the closure $\overline{Y_{(H_i)}}$ has strictly smaller dimension than $Y_{(H_i)}$. Hence $\bigcup_i Y_{(H_i)}$ is closed. Since $M$ is compact and $f$ is continuous, the preimage $f^{-1}(\bigcup_i Y_{(H_{i})})$ is compact, so there exists a finite subcover $\{V_{x_n}\}_{n=1}^N$ of the preimage. We construct the neighborhood $V_k := \bigcup_{n=1}^N V_{x_n}$ of the preimage in $M$, and the neighborhood $U_k^{(1)} := \bigcap_{n=1}^N U_{x_n}'$ of $f$ in the function space.

In the $C^0$ topology, by \cref{lem_part3_app:compact_open_topology_separation}, there exists a neighborhood $U_{k}^{(2)}$ of $f$ in $C(M,Y)$ such that for any $g \in U_{k}^{(2)}$,
\begin{equation}
    g(M \setminus V_k) \cap \bigcup_i Y_{(H_{i})} = \varnothing.    
\end{equation}

By pulling this back through the inclusions $C_{G}^{\infty}(M, Y) \hookrightarrow C^{\infty}(M, Y) \hookrightarrow C(M, Y)$, we obtain a neighborhood $(U_{k}^{(2)})'$ in $C_{G}^{\infty}(M, Y)$. The neighborhood is open in the $C^0$ topology, so it also open in the $C^{\infty}$ topology. Let $U_k' = U_{k-1} \cap U_{k}^{(1)} \cap (U_{k}^{(2)})'$. For any map $g \in U_k'$, we have
\begin{equation}
\begin{cases}
j^0(g|_{M_{(H)}}) \pitchfork S_{(H)} & \text{for } x \in V_k \\
g(x) \notin Y_{(H_{i})} & \text{for } x \in M \setminus V_k
\end{cases}    
\end{equation}
for all $i$ with $\dim Y_{(H_i)} \le k-1$.

Now we show by contradiction that there exists a neighborhood $U_k \subset U_k'$ of $f$ where the transversality condition holds for strata of dimension $k$.  By the induction hypothesis, the transversality condition holds for all strata of dimension at most $k-1$. Therefore, it also holds for all strata of dimension at most $k$. The proof idea is from the proof of openness of equivariant general position in Sec.~7 of \citet{bierstone_general_1977}, where the closedness of $P$ is replaced by the condition that $f$ does not intersect low-dimensional $Y_{(H_i)}$ outside of $V_k$.

Assume openness does not hold. Then there exists a sequence $\{g_n\} \to f$ in $U_k'$ such that each $g_n$ fails the stratumwise transversality condition for some stratum of dimension $k$. Since the stratumwise transversality condition fails, it follows from \cref{prop_part3_app:stratumwise_transversality} that $g_{n} \pitchfork_{G} Y_{(H_{j})}$ for all $j$ with $\dim Y_{(H_j)} = k$ also does not hold. The points of non-transversality for these maps, $\{y_n\}$, can only occur in $M \setminus V_k$ and $g_n(y_Y) \in \bigcup_j Y_{(H_j)}$. Since $M \setminus V_k$ is compact, there is a convergent subsequence $\{y_{n_i}\}$ with limit $x$. Then $g_n(y_Y) \to f(x)$. By construction of $U_k'$, $f(M \setminus V_k)$ does not intersect $Y_{(H_i)}$ for any $\dim Y_{(H_i)} \le k-1$. Thus, we must have $f(x) \in Y_{(H_j)}$ for some $j$. 

We claim that $g_{n_i}(y_{n_i})\in Y_{(H_j)}$ for all sufficiently large $i$. Otherwise, there exists a further subsequence $\{g_{n_{i_\alpha}}(y_{n_{i_\alpha}})\}$ such that $g_{n_{i_\alpha}}(y_{n_{i_\alpha}})\in Y_{(H_{j'})}$ for some $j'\neq j$ and all $\alpha$, while still $g_{n_{i_\alpha}}(y_{n_{i_\alpha}})\to f(x)\in Y_{(H_j)}$. Hence $Y_{(H_j)}\cap \overline{Y_{(H_{j'})}}\neq \varnothing$. By the frontier condition of stratification this implies $\dim Y_{(H_j)}<\dim Y_{(H_{j'})}$. However, $Y_{(H_j)}$ and $Y_{(H_{j'})}$ have the same dimension, yielding a contradiction. Therefore, for $i$ sufficiently large, $g_{n_i}(y_{n_i})\in Y_{(H_j)}$.

We now show that this contradicts the assumption that $f \pitchfork_G Y_{(H_j)}$ at $x$. If $f$ were to satisfy $f \pitchfork_G Y_{(H_j)}$ at $x$, then by \cref{prop_part3_app:general_position_stability}, there would exist a neighborhood $U$ of $f$ and a $G$-invariant neighborhood $V$ of $G(x)$ such that any $g \in U$ satisfies $g \pitchfork_G Y_{(H_j)}$ at any $y \in V$. This contradicts the existence of sequence $\{g_n\}$ and points $\{y_n\}$ where transversality fails. This completes the proof.
\end{proof}

\begin{proposition}
\label{thm_part3_cont:generic_bad_set}
Let $\mathcal{F}$ be a $C^\infty$-dense family of smooth parameterized maps in $C^{\infty}_{G}(X, Y)$ and $\{M_{j}\}$ be a finite collection of compact, connected and smooth $G$-submanifolds of $X$. Let
\begin{equation}
S_{(H) \to (H')}(f) = \{x \in X \mid (G_{x}) = (H), (G_{f(x)}) = (H')\}.
\end{equation}

There is a $C^\infty$-dense subset $\mathcal{G} \subset \mathcal{F}$, for $g \in \mathcal{G}$, the set $S_{(H) \to (H')}(g|_{M_{j}})$ is a disjoint union of smooth $G$-submanifolds of $X$, and its dimension satisfies
\begin{equation}
\dim S_{(H) \to (H')}(g|_{M_{j}}) = \dim (M_{j})_{(H)} - (\dim Y^H - \dim Y_{(H')}^H)
\end{equation}

if the right-hand side of the equation is not smaller than $\dim G - \dim H$. Otherwise, the set $S_{(H) \to (H')}(g|_{M_{j}})$ is empty.
\end{proposition}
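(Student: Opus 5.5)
Our plan is to combine the genericity result \cref{prop_part3_app:generic_transversality} with the density of $\mathcal{F}$, and then read off the dimension of the bad set from stratumwise transversality via the bundle decomposition of \cref{lem_part2_app:bundle_structure_of_stratum}.

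\emph{Step 1: a generic set of maps on each $M_j$.} Fix $j$. By \cref{prop_part3_app:generic_transversality}, applied to the compact $G$-manifold $M_j$ and the representation $Y$, the maps $f\in C^\infty_G(M_j,Y)$ satisfying
\begin{equation}
j^0(f|_{(M_j)_{(H)}})\pitchfork ((M_j)_{(H)}\times Y_{(H')})_{(H)}
\end{equation}
for all pairs $(H')\ge(H)$ contain an open dense set $S_j$. By \cref{cor_part3_app:dense_open_restriction}, the set $A_j=\{f\in C^\infty_G(X,Y): f|_{M_j}\in S_j\}$ contains an open dense subset of $C^\infty_G(X,Y)$. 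Since there are finitely many $j$, the intersection $A=\bigcap_j A_j$ still contains an open dense subset $O$. Set $\mathcal G=\mathcal F\cap O$. This set is $C^\infty$-dense: any open set $W$ meets $O$ in a nonempty open set, and that open set meets $\mathcal F$ because $\mathcal F$ is dense. Here we need to be careful that the topology on $C^\infty_G(X,Y)$ is the compact-set $C^\infty$ topology used in \cref{thm_part3_cont:parametric_cinf_density}. Restricting to the compact $M_j$ is continuous for that topology, so this is fine.

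\emph{Step 2: reduce to the fixed-point stratum.} Take $g\in\mathcal G$ and write $M=M_j$. The set $S_{(H)\to(H')}(g|_M)$ is $G$-invariant. By \cref{lem_part2_app:bundle_structure_of_stratum}, it equals
\begin{equation}
\bigl(M_H\cap g^{-1}(Y^H_{(H')})\bigr)\times_{N_G(H)/H} G/H ,
\end{equation}
because $x\in M_H$ forces $g(x)\in Y^H$. By \cref{prop_part3_app:stratumwise_transversality} (equivalently, by the transversality condition of Step 1, passing through the bundle chart), the map $g|_{M_H}:M_H\to Y^H$ is transverse to the submanifold $Y^H_{(H')}$; this submanifold is smooth by \cref{lem_part3_app:whitney_stratification_product}. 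The preimage is therefore a submanifold of $M_H$ with codimension $\dim Y^H-\dim Y^H_{(H')}$, or it is empty.

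\emph{Step 3: count dimensions.} The twisted product adds $\dim G/H-\dim N_G(H)/H$ on both sides, since $\dim (M)_{(H)}=\dim M_H+\dim G-\dim N_G(H)$. The codimension is therefore preserved, which gives
\begin{equation}
\dim S_{(H)\to(H')} = \dim M_{(H)}-(\dim Y^H-\dim Y^H_{(H')}).
\end{equation}
Connected components of differing dimension give the disjoint-union statement.

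\emph{Step 4: the emptiness clause.} Any nonempty $G$-invariant set containing a point of type $(H)$ contains a whole orbit $G/H$, so its dimension is at least $\dim G-\dim H$. If the formula gives less than this, the transverse preimage must be empty.

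We expect the main obstacle to be Step 2. The transversality in \cref{prop_part3_app:generic_transversality} is phrased for jet maps into $((M_{(H)}\times Y)_{(H)})$, and we must check that it really translates into transversality of $g|_{M_H}$ to $Y^H_{(H')}$ inside $Y^H$. We would attempt this through the local trivialization $M_{(H)}\cong M_H\times_{N_G(H)/H}G/H$, as in the proof of \cref{lem_part3_app:whitney_stratification_product}. In that chart the $G/H$ directions contribute equally to both tangent spaces, so they cancel in the transversality equation.
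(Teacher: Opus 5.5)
Your proposal is correct and follows essentially the same route as the paper. Both arguments take generic transversality from \cref{prop_part3_app:generic_transversality}, lift it via \cref{cor_part3_app:dense_open_restriction}, intersect with the dense family $\mathcal{F}$, read off the codimension $\dim Y^H-\dim Y^H_{(H')}$ from the twisted-product decomposition, and use the orbit-dimension bound for emptiness.

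The one difference is that the paper applies the transverse-preimage theorem directly to $j^0(g|_{(M_j)_{(H)}})$, whose preimage of $((M_j)_{(H)}\times Y_{(H')})_{(H)}$ is exactly $S_{(H)\to(H')}$. This makes your Step~2 detour through $M_H$ unnecessary. If you keep that detour, note that \cref{prop_part3_app:stratumwise_transversality} is not directly applicable, since $g\in\mathcal{G}$ need not be in equivariant general position; your local-trivialization argument is the justification to use.
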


\begin{proof}
    By \cref{prop_part3_app:generic_transversality}, for each $M_j$, there exists an open dense set in $C_{G}^{\infty}(M_{j}, Y)$ such that for any map $g$ in this set,
    \begin{equation}
        j^0 (g|_{{(M_j)}_{(H)}}) \pitchfork ({(M_j)}_{(H)} \times Y_{(H')})_{(H)} \quad \text{for all } (H') \ge (H).
    \end{equation}
    Therefore, writing $\mathrm{codim}_{X} Y := \dim X - \dim Y$ for the codimension of $Y$ in $X$, the dimension theorem for transverse maps (see, e.g., Sec.~2.3 of \citet{arnold_singularities_2012}) yields
    \begin{equation}
    \mathrm{codim}_{(M_j)_{(H)}} S_{(H) \to (H')}
    =
    \mathrm{codim}_{((M_j)_{(H)} \times Y)_{(H)}}
    \bigl((M_j)_{(H)} \times Y_{(H')}\bigr)_{(H)}.
    \end{equation}
    Furthermore, from the proof of \cref{lem_part3_app:whitney_stratification_product}, we have
    \begin{equation}
        ((M_j)_{(H)} \times Y_{(H')})_{(H)} = ((M_j)_{(H)}^H \times Y_{(H')}^H) \times_{N_G(H) / H} G/H.
    \end{equation}
    Thus, we obtain
    \begin{equation}
        \mathrm{codim}_{(M_j)_{(H)}} S_{(H) \to (H')} = \mathrm{codim}_{Y^H} Y_{(H')}^H.
    \end{equation}
    Moreover, since the dimension of an orbit with orbit type $(H)$ in a $G$-manifold is $\dim G - \dim H$, if the dimension of $S_{(H) \to (H')}$ calculated above is less than $\dim G - \dim H$, then $S_{(H) \to (H')}$ is an empty set.
    
    For each $M_j$, we take the corresponding open dense set $U_j \subset C_{G}^{\infty}(M_{j}, Y)$ on which the maps satisfy the dimension theorem. Then, by \cref{cor_part3_app:dense_open_restriction}, the set of functions $f \in C_{G}^{\infty}(X, Y)$ that satisfy $f|_{M_j} \in U_j$ contains an open dense set $U_j'$. Since $\{M_j\}$ is a finite collection, the intersection $U = \bigcap_j U_j'$ is also an open dense set. Therefore, the intersection of $U$ with the $C^{\infty}$-dense set $\mathcal{F}$ is a dense set $\mathcal{G}$. Thus, this intersection is the set of maps in $\mathcal{F}$ that we sought to construct, which is dense and whose elements satisfy the dimension theorem.
\end{proof}

\subsection{Proof of \cref{thm_part3_cont:sufficient_relative_isovariant}}

\thmThreeContSufficientRelativeIsovariant*

\begin{proof}[Proof of \cref{thm_part3_cont:sufficient_relative_isovariant}]
        By \cref{lem_part2_app:bundle_structure_of_stratum}, we consider the relation
    \begin{equation}
        \dim N_G(H, H') - \dim N_G(H') =\alpha(H, H') = \dim Y_{(H')}^H - \dim Y^{H'}.
    \end{equation}
    Regarding the $G$-representation as a faithful representation of $G / \mathrm{ker}\rho_{Y}$, for orbit types satisfying $(p_{Y}(H')) > I(H, Y)$, the dense and open property of the minimal orbit type by \cref{prop_part1_app:minimal_orbit_type_dense_open} in the fixed-point space implies that for each $M_{j}$,
    \begin{equation}
        \dim Y^H - \dim Y^{H}_{(H')} = \dim Y^H - \dim Y^{H'} - \alpha_{G}(H, H') > 0.
    \end{equation}

    By \cref{thm_part3_cont:generic_bad_set}, it implies
    \begin{equation}
        \dim (M_{j})_{(H)} > \dim S_{(H) \to (H')}(g|_{M_{j}}).
    \end{equation}

    With respect to the Hausdorff measure $\mathcal{H}^d$, since the dimension of $S_{(H) \to (H')}(g|_{M_{j}})$ is strictly less than the dimension of the manifold it lies in, its measure is zero. Then, by the finiteness of the set of orbit types for a compact manifold and the finiteness of the collection $\{M_{j}\}$, it follows that $g|_{M}$ is an almost isovariant map relative to $Y$.
    
    Furthermore, if we require $g$ to be isovariant relative to $Y$, we need
    \begin{equation}
        S_{(H) \to (H')}(g|_{M_{j}}) = \varnothing.
    \end{equation}
    The condition for this set to be empty is related to its codimension. Since the inequality
    \begin{equation}
        \dim Y^H - \dim Y^{H'} - \alpha_{G}(H, H') \geq 1
    \end{equation}
    scales with multiplicity $r$ to become
    \begin{equation}
        r(\dim Y^H - \dim Y^{H'}) - \alpha_{G}(H, H') \geq r,
    \end{equation}
    it is sufficient to choose the multiplicity for the representation $Y^{\oplus r}$ such that $r > \max_j\{\dim M_j\}$.
\end{proof}

\clearpage
\section{Tables}

\rebuttal{
All closed subgroups of $O(3)$ are denoted using Schoenflies notation, where it should be noted that $K = SO(3)$ and $K_h = O(3)$. When interpreting the tables, care should be taken to recognize the low-dimensional equivalences: $C_{s} = C_{1h} = C_{1v}$, $D_{1} = C_{2}$, $D_{1h} = C_{2v}$, and $D_{1d} = C_{2h}$. In the tables of symmetry infimum, we list a representative subgroup from each conjugacy class and omit the class notation (e.g., $C_2$ instead of $(C_2)$) for clarity.
}

\subsection{Minimal Proper Supergroups in $SO(3)$ or $O(3)$}
\label{sec:minimal_subgroup_relation}

Minimal proper supergroups table of $SO(3)$ or $O(3)$. Some of the results can be found from Fig.~3.2.1.6 of \citet{aroyo_international_2016}. We only present the results for $O(3)$. The discussion for $SO(3)$ can be obtained by removing the subgroups that are not subgroups of $SO(3)$ (i.e., the subgroups of the first kind), namely $C_k, D_k, T, O, I, C_{\infty}, D_{\infty}$, and $K$.

\begin{table}[H]
\centering
\caption{Table of minimal proper supergroups $H$ of axial closed subgroups of $O(3)$. In the supergroup notation, $p$ denotes a prime number and $p^*$ denotes an odd prime number. }
\label{tab:adj_o3_axial}
\begin{tabular}{lll}
\toprule
    $G$ & $H$ for General\ $k$ & $H$ for Special $k$\\
\midrule
    $C_{k}$      & $C_{pk}, S_{2k}, C_{kh}, C_{kv}, D_{k}$    & $D_{p^{*}}\ (k=2);T\ (k=3)$               \\
    $S_{2k}$              & $S_{2p^*k}, C_{2k,h}, D_{kd}$             & $T_{h}\ (k=3)$                                   \\
    $C_{kh}$     & $C_{pk,h}, D_{kh}$                          & $C_{pv}(k=1);D_{p^*d}\ (k=2)$                        \\
    $C_{kv}$\ ($k>1$)     & $C_{pk, v}, D_{kh}, D_{kd}$               & $T_{d}\ (k=3); D_{ph}\ (k=2)$             \\
    $D_{k}$\ ($k>1$)      & $D_{pk}, D_{kh}, D_{kd}$                     & $T\ (k=2);O\ (k=3,4);I \ (k=3,5)$          \\
    $D_{kh}$\ ($k>1$)     & $D_{pk, h}$                               & $T_{h}\ (k=2); O_{h}\ (k=4)$                  \\
    $D_{kd}$\ ($k>1$)     & $D_{p^*k,d}, D_{2k,h}$                      & $T_{d}\ (k=2);\ O_{h}\ (k=3);I_{h} \ (k=3,5)$ \\
\bottomrule
\end{tabular}
\end{table}

\begin{table}[H]
\centering
\caption{Table of minimal proper supergroups $H$ of other closed subgroups $O(3)$.}
\label{tab:adj_o3_other}
\begin{tabular}{llll}
\toprule
    $G$ & $H$\\
\midrule
    $T$                   & $T_{d},T_{h},O,I$                                                                            \\
    $T_{d}$               & $O_{h}$                                                                                      \\
    $T_{h}$               & $O_{h},I_{h}$                                                                                \\
    $O$                   & $O_{h},K$                                                                                      \\
    $O_{h}$               & $K_{h}$                                                                                    \\
    $I$                   & $I_{h},K$                                                                                 \\
    $I_{h}$               & $K_{h}$                                                                                     \\
    $C_{\infty}$          & $C_{\infty h}, C_{\infty,v},D_{\infty}$                                                  \\
    $C_{\infty h}$        & $D_{\infty,h}$                                                                         \\
    $C_{\infty v}$        & $D_{\infty,h}$                                                                         \\
    $D_{\infty}$          & $D_{\infty,h},K$                                                                        \\
    $D_{\infty h}$        & $K_{h}$                                                                              \\
    $K$                   & $K_{h}$                                                                                        \\
\bottomrule
\end{tabular}
\end{table}

\clearpage

\subsection{\rebuttal{Dimensions of Fixed-point Subspaces for Subgroups of $SO(3)$ or $O(3)$}}
\label{subsec:fp_dim_so3_o3}

\rebuttal{Dimensions table of fixed-point subspaces for subgroups of $SO(3)$ or $O(3)$. Some of the results can be found from Table~B.1 and Table~B.2 of \citet{linehan_little_2001}. We only present the results for $O(3)$. The discussion for $SO(3)$ can be obtained directly from the table.}

\newcolumntype{C}[1]{>{\centering\arraybackslash}p{#1}}

\begin{table}[H]
\renewcommand{\arraystretch}{1.3} %
\centering
\caption{\rebuttal{Dimensions of fixed-point subspaces for closed subgroups of $O(3)$ acting on the irreducible representations $V_{l= l_0^{\pm}}$, where $a_k(l) = \lfloor l / k\rfloor$ and $b_k(l) = \lfloor (l + k) / (2k)\rfloor$.}}
\label{tab:dim_fp_o3}

\begin{tabular}{|l|l|C{1.6cm}|C{1.6cm}|C{1.6cm}|C{1.6cm}|}
\hline
\multicolumn{2}{|c|}{\multirow{2}{*}{Subgroup}} & \multicolumn{2}{c|}{$l=l_0^{-}$} & \multicolumn{2}{c|}{$l=l_0^{+}$} \\ \cline{3-6}
\multicolumn{2}{|c|}{}                          & $l_0$ even                                                              & $l_0$ odd     & $l_0$ even                                              & $l_0$ odd      \\ \hline
$C_{k}$        &                                & \multicolumn{4}{c|}{$2a_k(l_0) + 1$}                                                                        \\ \hline
\multirow{2}{*}{$S_{2k}$} & $k\ \mathrm{even}$ & \multicolumn{2}{c|}{$2b_k(l_0)$}                                                    & \multicolumn{2}{c|}{$2a_{2k}(l_0) + 1$}                                     \\ \cline{2-6}
               & $k\ \mathrm{odd}$              & \multicolumn{2}{c|}{$0$}                                                         & \multicolumn{2}{c|}{$2a_k(l_0) + 1$}                                        \\ \hline
\multirow{2}{*}{$C_{kh}$} & $k\ \mathrm{even}$ & \multicolumn{2}{c|}{$0$}                                                         & \multicolumn{2}{c|}{$2a_k(l_0) + 1$}                                        \\ \cline{2-6}
               & $k\ \mathrm{odd}$              & \multicolumn{2}{c|}{$2b_k(l_0)$}                                                    & \multicolumn{2}{c|}{$2a_{2k}(l_0) + 1$}                                     \\ \hline
$C_{kv}$       &                                & $a_k(l_0)$                                                                        & $a_k(l_0) + 1$          & $a_k(l_0) + 1$                                                    & $a_k(l_0)$               \\ \hline
$D_{k}$        &                                & $a_k(l_0) + 1$                                                                    & $a_k(l_0)$              & $a_k(l_0) + 1$                                                    & $a_k(l_0)$               \\ \hline
\multirow{2}{*}{$D_{kh}$} & $k\ \mathrm{even}$ & \multicolumn{2}{c|}{$0$}                                                         & $a_k(l_0) + 1$                                                    & $a_k(l_0)$               \\ \cline{2-6}
               & $k\ \mathrm{odd}$              & \multicolumn{2}{c|}{$b_k(l_0)$}                                                    & $a_{2k}(l_0) + 1$                                                 & $a_{2k}(l_0)$            \\ \hline
\multirow{2}{*}{$D_{kd}$} & $k\ \mathrm{even}$ & \multicolumn{2}{c|}{$b_k(l_0)$}                                                    & $a_{2k}(l_0) + 1$                                                 & $a_{2k}(l_0)$            \\ \cline{2-6}
               & $k\ \mathrm{odd}$              & \multicolumn{2}{c|}{$0$}                                                         & $a_k(l_0) + 1$                                                    & $a_k(l_0)$               \\ \hline
$T$            &                                & \multicolumn{4}{c|}{$2a_3(l_0) + a_2(l_0) - l_0 + 1$}                                                    \\ \hline
$T_{h}$        &                                & \multicolumn{2}{c|}{$0$}                                                         & \multicolumn{2}{c|}{$2a_3(l_0) + a_2(l_0) - l_0 + 1$}                           \\ \hline
$T_{d}$        &                                & \multicolumn{2}{c|}{$a_3(l_0) + b_2(l_0) + b_1(l_0) - l_0$} & \multicolumn{2}{c|}{$a_4(l_0) + a_3(l_0) + a_2(l_0) - l_0 + 1$}               \\ \hline
$O$            &                                & \multicolumn{4}{c|}{$a_4(l_0) + a_3(l_0) + a_2(l_0) - l_0 + 1$}                                               \\ \hline
$O_{h}$        &                                & \multicolumn{2}{c|}{$0$}                                                         & \multicolumn{2}{c|}{$a_4(l_0) + a_3(l_0) + a_2(l_0) - l_0 + 1$}                   \\ \hline
$I$            &                                & \multicolumn{4}{c|}{$a_5(l_0) + a_3(l_0) + a_2(l_0) - l_0 + 1$}                                                   \\ \hline
$I_{h}$        &                                & \multicolumn{2}{c|}{$0$}                                                         & \multicolumn{2}{c|}{$a_5(l_0) + a_3(l_0) + a_2(l_0) - l_0 + 1$}                   \\ \hline
$C_{\infty}$   &                                & \multicolumn{4}{c|}{$1$}                                                                                                            \\ \hline
$C_{\infty h}$ &                                & \multicolumn{2}{c|}{$0$}                                                         & \multicolumn{2}{c|}{$1$}                                                        \\ \hline
$C_{\infty v}$ &                                & $0$                                                                             & $1$                   & $1$                                                             & $0$                        \\ \hline
$D_{\infty}$   &                                & $1$                                                                             & $0$                   & $1$                                                             & $0$                        \\ \hline
$D_{\infty h}$ &                                & \multicolumn{2}{c|}{$0$}                                                         & $1$                                                             & $0$                        \\ \hline
\end{tabular}
\end{table}

\clearpage

\subsection{\rebuttal{Symmetry Infimum for Subgroups of $SO(3)$}}
\label{subsec:sym_inf_so3}

\rebuttal{
Symmetry infimum table for subgroups of $SO(3)$. In the following tables, we use a color-coding scheme to classify the types of symmetry increase. An increase to the full group is termed \textbf{full degeneration} and is marked in \colorbox{red!30}{red}. An increase to a supergroup of a strictly higher dimension is termed \textbf{continuous degeneration} and is marked in \colorbox{blue!30}{blue}. An increase to a supergroup of the same dimension is termed \textbf{discrete degeneration}, and is marked in \colorbox{yellow!30}{yellow}. No increase is termed \textbf{no degeneration}, and is marked in \colorbox{green!30}{green}. All subgroups increase to to $K$ when $l = 0$. 
}

\begin{table}[H]
\centering
\caption{\rebuttal{Symmetry infimum of general axis subgroup of $SO(3)$ on $V_{l=l_0}^{\oplus r}$, $r > 3$, $l_0 > 0$.}}
\renewcommand{\arraystretch}{1.3} %
\label{tab:general_axis_group_SO3}
\begin{tabular}{|c|l|l|l|}
\hline
\multirow{2}{*}{} & \multicolumn{2}{c|}{$l_0 < k$} & \multirow{2}{*}{$l_0 \geq k$} \\ \cline{2-3}
 & $l_0\ \mathrm{even}$ & $l_0\ \mathrm{odd}$ & \\ \hline
$C_k$ & \cellcolor{blue!30} $D_{\infty}$ & \cellcolor{blue!30} $C_{\infty}$ & \cellcolor{green!30} $C_k$ \\ \hline
$D_k(k > 2)$ & \cellcolor{blue!30} $D_{\infty}$ & \cellcolor{red!30} $K$ & \cellcolor{green!30} $D_k$ \\ \hline
\end{tabular}
\end{table}

\begin{table}[H]
\centering
\caption{\rebuttal{Symmetry infimum of special axis subgroup of $SO(3)$ on $V_{l=l_0}^{\oplus r}$, $r > 3$, $l_0 > 0$.}}
\renewcommand{\arraystretch}{1.3} %
\label{tab:special_axis_group_SO3}
\begin{tabular}{|c|l|l|l|l|}
\hline
 & $l_0=1$ & $l_0=2$ & $l_0=3$ & $l_0\geq 4$ \\ \hline
$D_2$ & \cellcolor{red!30} $K$ & \cellcolor{green!30} $D_2$ & \cellcolor{yellow!30} $T$ & \cellcolor{green!30} $D_2$ \\ \hline
\end{tabular}
\end{table}

\begin{table}[H]
\centering
\caption{\rebuttal{Symmetry infimum of polyhedral subgroup of $SO(3)$ on $V_{l=l_0}^{\oplus r}$, $r > 3$, $l_0 > 0$. The "Caption" in the table takes the value $l_0=6,10,12,15,16,18,20-22,24-28$.} }
\renewcommand{\arraystretch}{1.3} %
\label{tab:polyhedral_group_SO3}
\begin{tabular}{|c|ccccc|}
\hline
\multirow{2}{*}{$T$} & $l_0=6,9,10$ & $l_0=3,7$ & $l_0=4,8$ & $l_0\geq 12$ & other \\ \cline{2-6} 
 & \cellcolor{green!30} $T$ & \cellcolor{green!30} $T$ & \cellcolor{yellow!30} $O$ & \cellcolor{green!30} $T$ & \cellcolor{red!30} $K$ \\ \hline
\multirow{2}{*}{$O$} & $l_0=4,6,8,9,10$ & $l_0\geq 12$ & \multicolumn{3}{c|}{other} \\ \cline{2-6} 
 & \cellcolor{green!30} $O$ & \cellcolor{green!30} $O$ & \multicolumn{3}{>{\columncolor{red!30}}c|}{$K$} \\ \hline
\multirow{2}{*}{$I$} & Caption & $l_0\geq 30$ & \multicolumn{3}{c|}{other} \\ \cline{2-6} 
 & \cellcolor{green!30} $I$ & \cellcolor{green!30} $I$ & \multicolumn{3}{>{\columncolor{red!30}}c|}{$K$} \\ \hline
\end{tabular}
\end{table}

\begin{table}[H]
\centering
\caption{\rebuttal{Symmetry infimum of infinite subgroup of $SO(3)$ on $V_{l=l_0}^{\oplus r}$, $r > 3$, $l_0 > 0$.}}
\renewcommand{\arraystretch}{1.3} %
\label{tab:inf_group_SO3}
\begin{tabular}{|c|l|l|}
\hline
 & $l_0\ \mathrm{even}$ & $l_0\ \mathrm{odd}$ \\ \hline
$C_{\infty}$ & \cellcolor{yellow!30} $D_{\infty}$ & \cellcolor{green!30} $C_{\infty}$ \\ \hline
$D_{\infty}$ & \cellcolor{green!30} $D_{\infty}$ & \cellcolor{red!30} $K$ \\ \hline
$K$ & \cellcolor{red!30} $K$ & \cellcolor{red!30} $K$ \\ \hline
\end{tabular}
\end{table}

\clearpage

\subsection{\rebuttal{Symmetry Infimum for Subgroups of $O(3)$}}
\label{subsec:sym_inf_o3}

\rebuttal{
Symmetry Infimum table for Subgroups of $O(3)$. The color scheme for full, continuous, and no degeneration is the same as for $SO(3)$. For cases of \textbf{discrete degeneration} of $l_0 = l^+$, we distinguish between (a) \colorbox{green!10}{light Green}, which indicates the predictable increase to $\pi^{-1}(\pi(H))$ for the projection map $\pi: O(3) \to SO(3)$, and (b) \colorbox{yellow!30}{yellow}, which indicates all other exceptional cases of discrete degeneration. In the following table, the first type of subgroups degenerate to $K$ and the others to $K_h$ when $l_0 = 0^-$. All subgroups increase to to $K_h$ when $l_0 = 0^+$.
}

\begin{table}[H]
\centering
\caption{\rebuttal{Symmetry infimum of general axis subgroup of $O(3)$ on $V_{l=l_0^{-}}^{\oplus r}$, $r > 3$, $l_0 > 0$.}}
\renewcommand{\arraystretch}{1.3} %
\label{tab:general_axis_group_O3-}
\begin{tabular}{|c|l|l|l|l|l|l|l|}
\hline
\multicolumn{2}{|c|}{\multirow{2}{*}{Subgroup}} & \multicolumn{2}{c|}{$0<l_0<k$} & \multicolumn{2}{c|}{$k \leq l_0 < 2k$} & \multicolumn{2}{c|}{$l_0 \geq 2k$} \\ \cline{3-8}
\multicolumn{2}{|c|}{} & $l_0\ \mathrm{even}$ & $l_0\ \mathrm{odd}$ & $l_0\ \mathrm{even}$ & $l_0\ \mathrm{odd}$ & $l_0\ \mathrm{even}$ & $l_0\ \mathrm{odd}$ \\ \hline
\multirow{2}{*}{$C_{k}$} & $k \ \mathrm{even}$ & \cellcolor{blue!30} $D_{\infty}$ & \cellcolor{blue!30} $C_{\infty v}$ & \cellcolor{green!30} $C_k$ & \cellcolor{green!30} $C_k$ & \cellcolor{green!30} $C_k$ & \cellcolor{green!30} $C_k$ \\ 
\cline{2-8} & $k \ \mathrm{odd}$ & \cellcolor{blue!30} $D_{\infty}$ & \cellcolor{blue!30} $C_{\infty v}$ & \cellcolor{green!30} $C_k$ & \cellcolor{green!30} $C_k$ & \cellcolor{green!30} $C_k$ & \cellcolor{green!30} $C_k$ \\ \hline
\multirow{2}{*}{$S_{2k}$} & $k \ \mathrm{even}$ & \cellcolor{red!30} $K_{h}$ & \cellcolor{red!30} $K_{h}$ & \cellcolor{green!30} $S_{2k}$ & \cellcolor{green!30} $S_{2k}$ & \cellcolor{green!30} $S_{2k}$ & \cellcolor{green!30} $S_{2k}$ \\ 
\cline{2-8} & $k \ \mathrm{odd}$ & \cellcolor{red!30} $K_{h}$ & \cellcolor{red!30} $K_{h}$ & \cellcolor{red!30} $K_{h}$ & \cellcolor{red!30} $K_{h}$ & \cellcolor{red!30} $K_{h}$ & \cellcolor{red!30} $K_{h}$ \\ \hline
\multirow{2}{*}{$C_{kh}$} & $k \ \mathrm{even}$ & \cellcolor{red!30} $K_{h}$ & \cellcolor{red!30} $K_{h}$ & \cellcolor{red!30} $K_{h}$ & \cellcolor{red!30} $K_{h}$ & \cellcolor{red!30} $K_{h}$ & \cellcolor{red!30} $K_{h}$ \\ \cline{2-8}
 & $k \ \mathrm{odd}$ & \cellcolor{red!30} $K_{h}$ & \cellcolor{red!30} $K_{h}$ & \cellcolor{green!30} $C_{kh}$ & \cellcolor{green!30} $C_{kh}$ & \cellcolor{green!30} $C_{kh}$ & \cellcolor{green!30} $C_{kh}$ \\ \hline
\multirow{2}{*}{$C_{kv}(k > 2)$} & $k \ \mathrm{even}$ & \cellcolor{red!30} $K_{h}$ & \cellcolor{blue!30} $C_{\infty v}$ & \cellcolor{yellow!30} $D_{kd}$ & \cellcolor{green!30} $C_{kv}$ & \cellcolor{green!30} $C_{kv}$ & \cellcolor{green!30} $C_{kv}$ \\ \cline{2-8}
 & $k \ \mathrm{odd}$ & \cellcolor{red!30} $K_{h}$ & \cellcolor{blue!30} $C_{\infty v}$ & \cellcolor{yellow!30} $D_{kh}$ & \cellcolor{green!30} $C_{kv}$ & \cellcolor{green!30} $C_{kv}$ & \cellcolor{green!30} $C_{kv}$ \\ \hline
\multirow{2}{*}{$D_k(k > 2)$} & $k \ \mathrm{even}$ & \cellcolor{blue!30} $D_{\infty}$ & \cellcolor{red!30} $K_{h}$ & \cellcolor{green!30} $D_k$ & \cellcolor{yellow!30} $D_{kd}$ & \cellcolor{green!30} $D_k$ & \cellcolor{green!30} $D_k$ \\ \cline{2-8}
 & $k \ \mathrm{odd}$ & \cellcolor{blue!30} $D_{\infty}$ & \cellcolor{red!30} $K_{h}$ & \cellcolor{green!30} $D_k$ & \cellcolor{yellow!30} $D_{kh}$ & \cellcolor{green!30} $D_k$ & \cellcolor{green!30} $D_k$ \\ \hline
\multirow{2}{*}{$D_{kh}(k > 2)$} & $k \ \mathrm{even}$ & \cellcolor{red!30} $K_{h}$ & \cellcolor{red!30} $K_{h}$ & \cellcolor{red!30} $K_{h}$ & \cellcolor{red!30} $K_{h}$ & \cellcolor{red!30} $K_{h}$ & \cellcolor{red!30} $K_{h}$ \\ \cline{2-8}
 & $k \ \mathrm{odd}$ & \cellcolor{red!30} $K_{h}$ & \cellcolor{red!30} $K_{h}$ & \cellcolor{green!30} $D_{kh}$ & \cellcolor{green!30} $D_{kh}$ & \cellcolor{green!30} $D_{kh}$ & \cellcolor{green!30} $D_{kh}$ \\ \hline
\multirow{2}{*}{$D_{kd}(k > 2)$} & $k \ \mathrm{even}$ & \cellcolor{red!30} $K_{h}$ & \cellcolor{red!30} $K_{h}$ & \cellcolor{green!30} $D_{kd}$ & \cellcolor{green!30} $D_{kd}$ & \cellcolor{green!30} $D_{kd}$ & \cellcolor{green!30} $D_{kd}$ \\ \cline{2-8}
 & $k \ \mathrm{odd}$ & \cellcolor{red!30} $K_{h}$ & \cellcolor{red!30} $K_{h}$ & \cellcolor{red!30} $K_{h}$ & \cellcolor{red!30} $K_{h}$ & \cellcolor{red!30} $K_{h}$ & \cellcolor{red!30} $K_{h}$ \\ \hline
\end{tabular}
\end{table}

\begin{table}[H]
\centering
\caption{\rebuttal{Symmetry infimum of general axis subgroup of $O(3)$ on $V_{l=l_0^{+}}^{\oplus r}$, $r > 3$, $l_0 > 0$.}}
\renewcommand{\arraystretch}{1.3} %
\label{tab:general_axis_group_O3+}
\begin{tabular}{|c|l|l|l|l|l|l|l|}
\hline
\multicolumn{2}{|c|}{\multirow{2}{*}{Subgroup}} & \multicolumn{2}{c|}{$0<l_0<k$} & \multicolumn{2}{c|}{$k \leq l_0 < 2k$} & \multicolumn{2}{c|}{$l_0 \geq 2k$} \\ \cline{3-8}
\multicolumn{2}{|c|}{} & $l_0\ \mathrm{even}$ & $l_0\ \mathrm{odd}$ & $l_0\ \mathrm{even}$ & $l_0\ \mathrm{odd}$ & $l_0\ \mathrm{even}$ & $l_0\ \mathrm{odd}$ \\ \hline
\multirow{2}{*}{$C_k$} & $k \ \mathrm{even}$ & \cellcolor{blue!30} $D_{\infty h}$ & \cellcolor{blue!30} $C_{\infty h}$ & \cellcolor{green!10} $C_{kh}$ & \cellcolor{green!10} $C_{kh}$ & \cellcolor{green!10} $C_{kh}$ & \cellcolor{green!10} $C_{kh}$ \\ \cline{2-8}
 & $k \ \mathrm{odd}$ & \cellcolor{blue!30} $D_{\infty h}$ & \cellcolor{blue!30} $C_{\infty h}$ & \cellcolor{green!10} $S_{2k}$ & \cellcolor{green!10} $S_{2k}$ & \cellcolor{green!10} $S_{2k}$ & \cellcolor{green!10} $S_{2k}$ \\ \hline
\multirow{2}{*}{$S_{2k}$} & $k \ \mathrm{even}$ & \cellcolor{blue!30} $D_{\infty h}$ & \cellcolor{blue!30} $C_{\infty h}$ & \cellcolor{blue!30} $D_{\infty h}$ & \cellcolor{blue!30} $C_{\infty h}$ & \cellcolor{green!10} $C_{2kh}$ & \cellcolor{green!10} $C_{2kh}$ \\ \cline{2-8}
& $k \ \mathrm{odd}$ & \cellcolor{blue!30} $D_{\infty h}$ & \cellcolor{blue!30} $C_{\infty h}$ & \cellcolor{green!30} $S_{2k}$ & \cellcolor{green!30} $S_{2k}$ & \cellcolor{green!30} $S_{2k}$ & \cellcolor{green!30} $S_{2k}$ \\ \hline
\multirow{2}{*}{$C_{kh}$} & $k \ \mathrm{even}$ & \cellcolor{blue!30} $D_{\infty h}$ & \cellcolor{blue!30} $C_{\infty h}$ & \cellcolor{green!30} $C_{kh}$ & \cellcolor{green!30} $C_{kh}$ & \cellcolor{green!30} $C_{kh}$ & \cellcolor{green!30} $C_{kh}$ \\ \cline{2-8}
 & $k \ \mathrm{odd}$ & \cellcolor{blue!30} $D_{\infty h}$ & \cellcolor{blue!30} $C_{\infty h}$ & \cellcolor{blue!30} $D_{\infty h}$ & \cellcolor{blue!30} $C_{\infty h}$ & \cellcolor{green!10} $C_{2kh}$ & \cellcolor{green!10} $C_{2kh}$ \\ \hline
\multirow{2}{*}{$C_{kv}(k > 2)$} & $k \ \mathrm{even}$ & \cellcolor{blue!30} $D_{\infty h}$ & \cellcolor{red!30} $K_h$ & \cellcolor{green!10} $D_{kh}$ & \cellcolor{green!10} $D_{kh}$ & \cellcolor{green!10} $D_{kh}$ & \cellcolor{green!10} $D_{kh}$ \\ \cline{2-8}
 & $k \ \mathrm{odd}$ & \cellcolor{blue!30} $D_{\infty h}$ & \cellcolor{red!30} $K_h$ & \cellcolor{green!10} $D_{kd}$ & \cellcolor{green!10} $D_{kd}$ & \cellcolor{green!10} $D_{kd}$ & \cellcolor{green!10} $D_{kd}$ \\ \hline
\multirow{2}{*}{$D_k(k > 2)$} & $k \ \mathrm{even}$ & \cellcolor{blue!30} $D_{\infty h}$ & \cellcolor{red!30} $K_h$ & \cellcolor{green!10} $D_{kh}$ & \cellcolor{green!10} $D_{kh}$ & \cellcolor{green!10} $D_{kh}$ & \cellcolor{green!10} $D_{kh}$ \\ \cline{2-8}
 & $k \ \mathrm{odd}$ & \cellcolor{blue!30} $D_{\infty h}$ & \cellcolor{red!30} $K_h$ & \cellcolor{green!10} $D_{kd}$ & \cellcolor{green!10} $D_{kd}$ & \cellcolor{green!10} $D_{kd}$ & \cellcolor{green!10} $D_{kd}$ \\ \hline
\multirow{2}{*}{$D_{kh}(k > 2)$} & $k \ \mathrm{even}$ & \cellcolor{blue!30} $D_{\infty h}$ & \cellcolor{red!30} $K_h$ & \cellcolor{green!30} $D_{kh}$ & \cellcolor{green!30} $D_{kh}$ & \cellcolor{green!30} $D_{kh}$ & \cellcolor{green!30} $D_{kh}$ \\ \cline{2-8}
 & $k \ \mathrm{odd}$ & \cellcolor{blue!30} $D_{\infty h}$ & \cellcolor{red!30} $K_h$ & \cellcolor{blue!30} $D_{\infty h}$ & \cellcolor{red!30} $K_h$ & \cellcolor{green!10} $D_{2kh}$ & \cellcolor{green!10} $D_{2kh}$ \\ \hline
\multirow{2}{*}{$D_{kd}(k > 2)$} & $k \ \mathrm{even}$ & \cellcolor{blue!30} $D_{\infty h}$ & \cellcolor{red!30} $K_h$ & \cellcolor{blue!30} $D_{\infty h}$ & \cellcolor{red!30} $K_h$ & \cellcolor{green!10} $D_{2kh}$ & \cellcolor{green!10} $D_{2kh}$ \\ \cline{2-8}
 & $k \ \mathrm{odd}$ & \cellcolor{blue!30} $D_{\infty h}$ & \cellcolor{red!30} $K_h$ & \cellcolor{green!30} $D_{kd}$ & \cellcolor{green!30} $D_{kd}$ & \cellcolor{green!30} $D_{kd}$ & \cellcolor{green!30} $D_{kd}$ \\ \hline
\end{tabular}
\end{table}

\begin{table}[H]
\centering
\caption{\rebuttal{Symmetry infimum of special axis subgroup of $O(3)$ on $V_{l=l_0^{-}}^{\oplus r}$, $r > 3$, $l_0 > 0$.}}
\renewcommand{\arraystretch}{1.3} %
\label{tab:special_axis_group_O3-}
\begin{tabular}{|c|l|l|l|l|l|}
\hline
& \multirow{2}{*}{$l_0=1$} & \multirow{2}{*}{$l_0=2$} & \multirow{2}{*}{$l_0=3$} & \multicolumn{2}{c|}{$l_0 \geq 4$} \\ \cline{5-6}
 &  &  &  & $l_0\ \mathrm{even}$ & $l_0\ \mathrm{odd}$ \\ \hline
$C_{2v}$ & \cellcolor{blue!30} $C_{\infty v}$ & \cellcolor{yellow!30} $D_{2d}$ & \cellcolor{green!30} $C_{2v}$ & \cellcolor{green!30} $C_{2v}$ & \cellcolor{green!30} $C_{2v}$ \\ \hline
$D_{2}$ & \cellcolor{red!30} $K_{h}$ & \cellcolor{green!30} $D_{2}$ & \cellcolor{yellow!30} $T_{d}$ & \cellcolor{green!30} $D_{2}$ & \cellcolor{green!30} $D_{2}$ \\ \hline
$D_{2h}$ & \cellcolor{red!30} $K_{h}$ & \cellcolor{red!30} $K_{h}$ & \cellcolor{red!30} $K_{h}$ & \cellcolor{red!30} $K_{h}$ & \cellcolor{red!30} $K_{h}$ \\ \hline
$D_{2d}$ & \cellcolor{red!30} $K_{h}$ & \cellcolor{green!30} $D_{2d}$ & \cellcolor{yellow!30} $T_{d}$ & \cellcolor{green!30} $D_{2d}$ & \cellcolor{green!30} $D_{2d}$ \\ \hline
\end{tabular}
\end{table}

\begin{table}[H]
\centering
\caption{\rebuttal{Symmetry infimum of special axis subgroup of $O(3)$ on $V_{l=l_0^{+}}^{\oplus r}$, $r > 3$, $l_0 > 0$.}}
\renewcommand{\arraystretch}{1.3} %
\label{tab:special_axis_group_O3+}
\begin{tabular}{|c|l|l|l|l|l|}
\hline
& \multirow{2}{*}{$l_0=1$} & \multirow{2}{*}{$l_0=2$} & \multirow{2}{*}{$l_0=3$} & \multicolumn{2}{c|}{$l_0 \geq 4$} \\ \cline{5-6}
 &  &  &  & $l_0\ \mathrm{even}$ & $l_0\ \mathrm{odd}$ \\ \hline
$C_{2v}$ & \cellcolor{red!30} $K_{h}$ & \cellcolor{green!10} $D_{2h}$ & \cellcolor{yellow!30} $T_{h}$ & \cellcolor{green!10} $D_{2h}$ & \cellcolor{green!10} $D_{2h}$ \\ \hline
$D_{2}$ & \cellcolor{red!30} $K_{h}$ & \cellcolor{green!10} $D_{2h}$ & \cellcolor{yellow!30} $T_{h}$ & \cellcolor{green!10} $D_{2h}$ & \cellcolor{green!10} $D_{2h}$ \\ \hline
$D_{2h}$ & \cellcolor{red!30} $K_{h}$ & \cellcolor{green!30} $D_{2h}$ & \cellcolor{yellow!30} $T_{h}$ & \cellcolor{green!30} $D_{2h}$ & \cellcolor{green!30} $D_{2h}$ \\ \hline
$D_{2d}$ & \cellcolor{red!30} $K_{h}$ & \cellcolor{blue!30} $D_{\infty h}$ & \cellcolor{red!30} $K_{h}$ & \cellcolor{green!10} $D_{4h}$ & \cellcolor{green!10} $D_{4h}$ \\ \hline
\end{tabular}
\end{table}

\begin{table}[H]
\centering
\caption{\rebuttal{Symmetry infimum of polyhedral subgroup of $O(3)$ on $V_{l=l_0^{-}}^{\oplus r}$, $r > 3$, $l_0 > 0$. The "Caption" in the table takes the value $l_0=6,10,12,15,16,18,20-22,24-28$.}}
\renewcommand{\arraystretch}{1.3} %
\label{tab:polyhedral_group_O3-}
\begin{tabular}{|c|c|c|c|c|c|}
\hline
\multirow{2}{*}{$T$} & $l_0=6,9,10$ & $l_0=3,7$ & $l_0=4,8$ & $l_0 \geq 12$ & other \\ \cline{2-6} 
 & \cellcolor{green!30} $T$ & \cellcolor{yellow!30} $T_d$ & \cellcolor{yellow!30} $O$ & \cellcolor{green!30} $T$ & \cellcolor{red!30} $K_h$ \\ \hline
\multirow{2}{*}{$T_d$} & $l_0=3,6,7$ & $l_0 \geq 9$ & \multicolumn{3}{c|}{other} \\ \cline{2-6} 
 & \cellcolor{green!30} $T_d$ & \cellcolor{green!30} $T_d$ & \multicolumn{3}{>{\columncolor{red!30}}c|}{$K_h$} \\ \hline
\multirow{2}{*}{$T_h$} & \multicolumn{5}{c|}{all $l_0$} \\ \cline{2-6} 
 & \multicolumn{5}{>{\columncolor{red!30}}c|}{$K_h$} \\ \hline
\multirow{2}{*}{$O$} & $l_0=4,6,8,9,10$ & $l_0 \geq 12$ & \multicolumn{3}{c|}{other} \\ \cline{2-6} 
 & \cellcolor{green!30} $O$ & \cellcolor{green!30} $O$ & \multicolumn{3}{>{\columncolor{red!30}}c|}{$K_h$} \\ \hline
\multirow{2}{*}{$O_h$} & \multicolumn{5}{c|}{all $l_0$} \\ \cline{2-6} 
 & \multicolumn{5}{>{\columncolor{red!30}}c|}{$K_h$} \\ \hline
\multirow{2}{*}{$I$} & Caption & $l_0 \geq 30$ & \multicolumn{3}{c|}{other} \\ \cline{2-6} 
 & \cellcolor{yellow!30} $I_h$ & \cellcolor{yellow!30} $I_h$ & \multicolumn{3}{>{\columncolor{red!30}}c|}{$K_h$} \\ \hline
\multirow{2}{*}{$I_h$} & \multicolumn{5}{c|}{all $l_0$} \\ \cline{2-6} 
 & \multicolumn{5}{>{\columncolor{red!30}}c|}{$K_h$} \\ \hline
\end{tabular}
\end{table}

\begin{table}[H]
\centering
\caption{\rebuttal{Symmetry infimum of polyhedral subgroup of $O(3)$ on $V_{l=l_0^{+}}^{\oplus r}$, $r > 3$, $l_0 > 0$.The "Caption" in the table takes the value $l_0=6,10,12,15,16,18,20-22,24-28$.}}
\renewcommand{\arraystretch}{1.3} %
\label{tab:polyhedral_group_O3+}
\begin{tabular}{|c|c|c|c|c|}
\hline
\multirow{2}{*}{$T$} & $l_0=3,6,7,9,10$ & $l_0=4,8$ & $l_0 \geq 12$ & other \\ \cline{2-5} 
 & \cellcolor{green!10} $T_h$ & \cellcolor{yellow!30} $O_h$ & \cellcolor{green!10} $T_h$ & \cellcolor{red!30}  $K_h$ \\ \hline
\multirow{2}{*}{$T_d$} & $l_0=4,6,8,9,10$ & $l_0 \geq 12$ & \multicolumn{2}{c|}{other} \\ \cline{2-5} 
 & \cellcolor{green!10} $O_h$ & \cellcolor{green!10} $O_h$ & \multicolumn{2}{>{\columncolor{red!30}}c|}{$K_h$} \\ \hline
\multirow{2}{*}{$T_h$} & $l_0=3,6,7,9,10$ & $l_0=4,8$ & $l_0 \geq 12$ & other \\ \cline{2-5} 
 & \cellcolor{green!30} $T_h$ & \cellcolor{yellow!30} $O_h$ & \cellcolor{green!30} $T_h$ & \cellcolor{red!30} $K_h$ \\ \hline
\multirow{2}{*}{$O$} & $l_0=4,6,8,9,10$ & $l_0 \geq 12$ & \multicolumn{2}{c|}{other} \\ \cline{2-5} 
 & \cellcolor{green!10} $O_h$ & \cellcolor{green!10} $O_h$ & \multicolumn{2}{>{\columncolor{red!30}}c|}{$K_h$} \\ \hline
\multirow{2}{*}{$O_h$} & $l_0=4,6,8,9,10$ & $l_0 \geq 12$ & \multicolumn{2}{c|}{other} \\ \cline{2-5} 
 & \cellcolor{green!30} $O_h$ & \cellcolor{green!30} $O_h$ & \multicolumn{2}{>{\columncolor{red!30}}c|}{$K_h$} \\ \hline
\multirow{2}{*}{$I$} & Caption & $l_0 \geq 12$ & \multicolumn{2}{c|}{other} \\ \cline{2-5} 
 & \cellcolor{green!10} $I_h$ & \cellcolor{green!10} $I_h$ & \multicolumn{2}{>{\columncolor{red!30}}c|}{$K_h$} \\ \hline
\multirow{2}{*}{$I_h$} & Caption & $l_0 \geq 12$ & \multicolumn{2}{c|}{other} \\ \cline{2-5} 
 & \cellcolor{green!30} $I_h$ & \cellcolor{green!30} $I_h$ & \multicolumn{2}{>{\columncolor{red!30}}c|}{$K_h$} \\ \hline
\end{tabular}
\end{table}

\begin{table}[H]
\centering
\caption{\rebuttal{Symmetry infimum of infinite subgroup of $O(3)$ on $V_{l=l_0^{-}}^{\oplus r}$, $r > 3$, $l_0 > 0$.}}
\renewcommand{\arraystretch}{1.3} %
\label{tab:inf_group_O3-}
\begin{tabular}{|c|l|l|}
\hline
 & $l_0\ \mathrm{even}$ & $l_0\ \mathrm{odd}$ \\ \hline
$C_{\infty}$ & \cellcolor{yellow!30} $D_{\infty}$ & \cellcolor{yellow!30} $C_{\infty v}$ \\ \hline
$C_{\infty h} = S_{2 \infty}$ & \cellcolor{red!30} $K_h$ & \cellcolor{red!30} $K_h$ \\ \hline
$C_{\infty v}$ & \cellcolor{red!30} $K_h$ & \cellcolor{green!30} $C_{\infty v}$ \\ \hline
$D_{\infty}$ & \cellcolor{green!30} $D_{\infty}$ & \cellcolor{red!30} $K_h$ \\ \hline
$D_{\infty h}=D_{\infty d}$ & \cellcolor{red!30} $K_h$ & \cellcolor{red!30} $K_h$ \\ \hline
$K$ & \cellcolor{red!30} $K_h$ & \cellcolor{red!30} $K_h$  \\ \hline
$K_h$ & \cellcolor{red!30} $K_h$ & \cellcolor{red!30} $K_h$  \\ \hline
\end{tabular}
\end{table}

\begin{table}[H]
\centering
\caption{\rebuttal{Symmetry infimum of infinite subgroup of $O(3)$ on $V_{l=l_0^{+}}^{\oplus r}$, $r > 3$, $l_0 > 0$.}}
\renewcommand{\arraystretch}{1.3} %
\label{tab:inf_group_O3+}
\begin{tabular}{|c|l|l|}
\hline
 & $l_0\ \mathrm{even}$ & $l_0\ \mathrm{odd}$ \\ \hline
$C_{\infty}$ & \cellcolor{yellow!30} $D_{\infty h}$ & \cellcolor{green!10} $C_{\infty h}$ \\ \hline
$C_{\infty h}=S_{2 \infty}$ & \cellcolor{yellow!30} $D_{\infty h}$ & \cellcolor{green!30} $C_{\infty h}$ \\ \hline
$C_{\infty v}$ & \cellcolor{green!10} $D_{\infty h}$ & \cellcolor{red!30} $K_h$ \\ \hline
$D_{\infty}$ & \cellcolor{green!10} $D_{\infty h}$ & \cellcolor{red!30} $K_h$ \\ \hline
$D_{\infty h}$=$D_{\infty d}$ & \cellcolor{green!30} $D_{\infty h}$ & \cellcolor{red!30} $K_h$ \\ \hline
$K$ & \cellcolor{red!30} $K_h$ & \cellcolor{red!30} $K_h$ \\ \hline
$K_h$ & \cellcolor{red!30} $K_h$ & \cellcolor{red!30} $K_h$ \\ \hline
\end{tabular}
\end{table}

\clearpage
\section{\rebuttal{Detailed Experiment}}
\label{sec:detailed_experiment}

\rebuttal{
Experiments in \cref{sec:vis_of_representation,sec:sym_emb_norm} are conducted with randomly initialized TFN~\citep{thomas2018tensor} and HEGNN~\citep{cen_are_2024} architectures as the underlying equivariant models, whereas the experiments in \cref{sec:qm9} first pretrain a HEGNN encoder to obtain molecular embeddings and then fine-tune separate MLP heads for the final prediction task under different settings.

For TFN~\citep{thomas2018tensor}, we use the implementation from GWL-test~\citep{joshi2023expressive}, which relies on irreducible-representation features with alternating parity. For HEGNN, we extend the original design to a multi-channel variant. In particular, the spherical scalarized features computed for nodes $j$ and $i$ are defined as
\begin{equation}
    \textstyle\vz_{ij,c}^{(l)}=1/\sqrt{C}\cdot\langle\svv_{i,c}^{(l)},\svv_{j,c}^{(l)}\rangle,
\end{equation}
where $\svv_{i,c}^{(l)}$ and $\svv_{j,c}^{(l)}$ denote the $l$-th degree steerable features of channel $c$ among a total of $C$ channels. To obtain the degree-$l_0$ graph-level representation, we apply global mean pooling for each graph $\gG(\mathcal{V}, \mathcal{E})$:
\begin{equation}
    \textstyle\svv_{\gG, c}^{(l)} = 1/|\mathcal{V}|\cdot\sum_{i}\svv_{i,c}^{(l)}.
\end{equation}
We will detail, in this section, how each experimental task processes these graph-level features.
}

\subsection{Visualization of Representation Space}
\label{sec:detailed_vis}

\rebuttal{We use a TFN with single-layer to calculate the embedding. After that, to extract features of degree $l_0$, we append an \texttt{o3.Linear} layer from \texttt{e3nn}~\citep{geiger2022e3nn}, denoting this setup as $\mathrm{TFN}_{l=l_0}$ in our experiments.}

\subsection{Expressivity on Symmetric Graphs}
\rebuttal{This section first introduces the detailed settings of \cref{sec:sym_emb_norm} in \cref{sec:detail_sym_emb_norm}, and then introduces the reproduction of the original experiment of GWL-test~\citep{joshi2023expressive} in \cref{sec:gwl_test}.}

\subsubsection{\rebuttal{Embedding Difference Norm Experiment}}
\label{sec:detail_sym_emb_norm}
\rebuttal{We employed both TFN~\citep{thomas2018tensor} and HEGNN~\citep{cen_are_2024} to compute the norm of the embedding difference across 12 configurations for each model, varying the number of channels (1, 4, 16) and layers (1, 2, 3, 4). The degree-$l$ discrepancy between two graphs $\gG_0$ and $\gG_1$ is defined as
\begin{equation}
\textstyle \Delta^{(l)}=1/|C|\cdot\sum_{c=1}^C\|\svv_{\gG_0, c}^{(l)}-\svv_{\gG_1, c}^{(l)}\|.
\end{equation}
These choices give rise to $2\times 3\times 4=24$ distinct $\Delta^{(l)}$. In \cref{fig:sym_graph_norm_heatmap}, we report the maximum norm across all $\Delta^{(l)}$. Since every norm is strictly positive, a maximal value below $10^{-6}$ indicates that all corresponding norms fall below $10^{-6}$, meaning none of them can distinguish $\gG_0$ from $\gG_1$.}

\subsubsection{Original GWL-Test on Symmetric Graphs}
\label{sec:gwl_test}
\textbf{Dataset. }
Same as the setting in \citep{joshi2023expressive}, we construct four symmetric $k$-fold structures ($k \in \{2,3,4,6\}$), each centered at the origin. For each structure $\mathcal{G}_0$ we apply a random rotation to produce $\mathcal{G}_1$ which ensures $\gG_1$ does not coincide with the original $\mathcal{G}_0$. The goal is to evaluate whether different equivariant neural network architectures can distinguish $\mathcal{G}_0$ from $\mathcal{G}_1$. To validate distinct aspects of our theory, we consider rotations separately in 2D and 3D; in the 3D experiments we additionally ensure that $\mathcal{G}_1$ is not coplanar with $\mathcal{G}_0$.

\textbf{Embeddings. }
The extracted $l_0$-degree embeddings from TFN are fed into a vanilla classifier for the classification task. The model was trained for 300 epochs to ensure the classifier had sufficient capacity to discriminate the classes.

\textbf{Results. }
The detailed experimental results are presented in~\cref{tab:kfold-exp}, which can be observed that the color blocks in this table are completely consistent with \cref{fig:sym_graph_norm_heatmap}. It demonstrates that our theoretical predictions in~\cref{tab:infimum_summary_dnh} are in complete agreement with the empirical findings obtained from the model. Such remarkable consistency not only confirms the correctness of our theoretical analysis, but also highlights the importance of constructing mappings with appropriately chosen features.

\begin{table}[H]
\centering
\vspace{-0.5cm}
\caption{Results of distinguishing $k$-fold structures rotated in 2D/3D space.}
\label{tab:kfold-exp}
\resizebox{\textwidth}{!}{
\begin{tabular}{lcccccccc}
\toprule
    & \multicolumn{4}{c}{\textbf{2D Rotational Symmetry}} & \multicolumn{4}{c}{\textbf{3D Rotational Symmetry}} \\
    \textbf{GNN Layer} & 2 fold & 3 fold & 4 fold & 6 fold & 2 fold & 3 fold & 4 fold & 6 fold\\
\midrule
    TFN$_{l=0}$ & \unable & \unable & \unable & \unable & \unable & \unable & \unable & \unable\\
    TFN$_{l=1}$ & \unable & \unable & \unable & \unable & \unable & \unable & \unable & \unable\\
    TFN$_{l=2}$ & \able & \unable & \unable & \unable & \able & \able & \able & \able\\
    TFN$_{l=3}$ & \unable & \able & \unable & \unable & \unable & \able & \unable & \unable\\
    TFN$_{l=4}$ & \able & \unable & \able & \unable & \able & \able & \able & \able\\
    TFN$_{l=5}$ & \unable & \able & \unable & \unable & \unable & \able & \unable & \unable\\
    TFN$_{l=6}$ & \able & \able & \able & \able & \able & \able & \able & \able\\
    TFN$_{l=7}$ & \unable & \able & \unable & \unable & \unable & \able & \unable & \unable\\
    TFN$_{l=8}$ & \able & \able & \able & \able & \able & \able & \able & \able\\
    TFN$_{l=9}$ & \unable & \able & \unable & \unable & \unable & \able & \unable & \unable\\
    TFN$_{l=10}$ & \able & \able & \able & \able & \able & \able & \able & \able\\
    TFN$_{l=11}$ & \unable & \able & \unable & \unable & \unable & \able & \unable & \unable\\
\bottomrule
\end{tabular}
}
\vspace{-0.3cm}
\end{table}

\rebuttal{
\subsection{\rebuttal{Molecule Property Prediction with Pretrained Equivariant Features}}
\label{sec:detailed_qm9}

\subsubsection{\rebuttal{Detailed Experimental Setup}}
\label{sec:qm9_setup}
We employ HEGNN as the encoder in our experiments. TFN is computationally prohibitive in this setting: its tensor-product operator has a complexity of $\mathcal{O}(L^6)$, and when the maximum degree is set to $L=11$ (the upper limit supported by \texttt{e3nn}), training a four-layer TFN with only four irreducible-representation channels on QM9 requires roughly 10 A100 GPU-hours per epoch. This far exceeds any practical budget. In contrast, HEGNN adopts spherical scalarization, where interactions across different degrees are mediated solely through scalars, reducing the complexity to $\mathcal{O}(L^2)$. For this reason, we choose HEGNN as our encoder.

Concretely, we use a four-layer HEGNN with 16 irreducible-representation channels and a hidden dimension of 64. The resulting features are passed through a scalarization layer and then into a two-layer MLP to predict the molecular isotropic polarizability~$\alpha$ on QM9. During fine-tuning, we freeze the encoder, apply a mask to selectively remove information, and train a separate two-layer MLP for each setting. Specifically, our designed mask multiplies the features of the unselected degrees by 0, followed by scalarization calculation, that is:
\begin{align}
    &\svv_c^{(l)} = \texttt{HEGNN}(\gG),\\
    &\vs^{(l)} = \texttt{vec}([\langle\svv_{c_1}^{(l)},\svv_{c_2}^{(l)}\rangle])_{C\times C},l=1,\dots,11,\\
    &\hat{\alpha} = \texttt{MLP}_{\texttt{finetune}}(\svv^{(0)},\vs^{(1)},\dots,\vs^{(11)}),
\end{align}
Following the standard protocol, we train on the first 110k molecules for 300 epochs and fine-tune for an additional 30 epochs. Notably, although the final visualizations are produced by running the trained models on the entire QM9 dataset\footnotemark, this does not affect our theoretical conclusions, as our analysis relies solely on horizontal comparisons rather than absolute predictive performance.

\subsubsection{\rebuttal{Case Studies}}
\label{sec:qm9_case_studies}
To further validate our theory, we analyze three prominent point groups $C_{2h}$, $C_{3h}$, and $T_d$ (see \cref{fig:pg_error_curves_0_l}). Each exhibiting a representative pattern of symmetry increase. These groups display a symmetry increase to the full group $O(3)$ under specific conditions: for $C_{2h}$ when $l_0$ is odd, for $C_{3h}$ when $l_0 = 1$, and for $T_d$ when $l_0 = 1, 2, 5$. This increase corresponds to full degeneration, making the features non-discriminative. The impact of this degeneration is evident in our experiments. Due to the full degeneration of $1$-degree features, introducing them paradoxically decreases predictive performance for molecules with $C_{2h}$, $C_{3h}$, and $T_d$ symmetry. Following the introduction of $2$-degree features, a marked improvement in performance is observed for $C_{2h}$ and $C_{3h}$. Conversely, for $T_d$, performance again decreases, a result directly attributable to the fact that its $2$-degree features also undergo full degeneration.
\footnotetext{\rebuttal{Minor discrepancies between $l = 0$ in \cref{fig:pg_error_curves_l} and $l \leq 0$ in \cref{fig:pg_error_curves_0_l} result from not fixing the random seed and do not affect the conclusions.}}

The experiment demonstrates a model design principle: not only should one avoid building a model entirely from fully degenerate features, but one should also avoid including individual feature components that undergo full degeneration, as they can be harmful to predictive performance.
}

\begin{figure}[!t]
    \centering
    \includegraphics[width=0.9\linewidth]{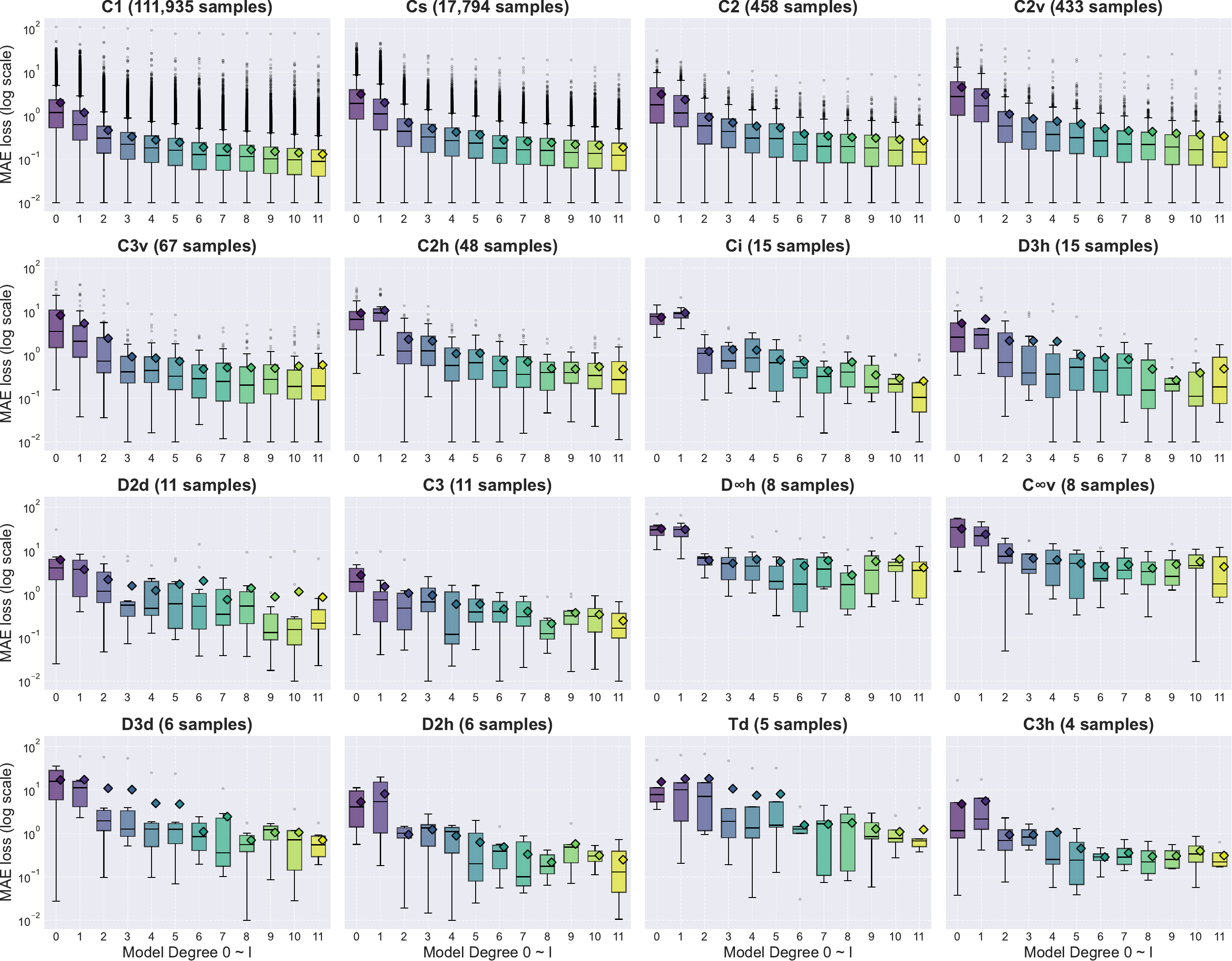}
    \label{fig:pg_error_curves_0_l}
\end{figure}

\begin{table}[t]
\centering
\caption{\rebuttal{Symmetry infimum of point group symmetry on the QM9 dataset.}}
\label{tab:sym_QM9}
\renewcommand{\arraystretch}{1.3} %
\adjustbox{width=\textwidth, center, padding = 2pt}{
\begin{tabular}{|c|c|c|c|c|c|c|c|c|c|c|c|c|}
\hline
& 1 & 2 & 3 & 4 & 5 & 6 & 7 & 8 & 9 & 10 & 11 \\ \hline
$C_1$ & \cellcolor{green!30} $C_1$ & \cellcolor{green!10} $C_s$ & \cellcolor{green!30} $C_1$ & \cellcolor{green!10} $C_s$ & \cellcolor{green!30} $C_1$ & \cellcolor{green!10} $C_s$ & \cellcolor{green!30} $C_1$ & \cellcolor{green!10} $C_s$ & \cellcolor{green!30} $C_1$ & \cellcolor{green!10} $C_s$ & \cellcolor{green!30} $C_1$ \\ \hline
$C_s$ & \cellcolor{green!30} $C_s$ & \cellcolor{green!10} $C_{2h}$ & \cellcolor{green!30} $C_s$ & \cellcolor{green!10} $C_{2h}$ & \cellcolor{green!30} $C_s$ & \cellcolor{green!10} $C_{2h}$ & \cellcolor{green!30} $C_s$ & \cellcolor{green!10} $C_{2h}$ & \cellcolor{green!30} $C_s$ & \cellcolor{green!10} $C_{2h}$ & \cellcolor{green!30} $C_s$ \\ \hline
$C_2$ & \cellcolor{blue!30} $C_{\infty v}$ & \cellcolor{green!10} $C_{2h}$ & \cellcolor{green!30} $C_2$ & \cellcolor{green!10} $C_{2h}$ & \cellcolor{green!30} $C_2$ & \cellcolor{green!10} $C_{2h}$ & \cellcolor{green!30} $C_2$ & \cellcolor{green!10} $C_{2h}$ & \cellcolor{green!30} $C_2$ & \cellcolor{green!10} $C_{2h}$ & \cellcolor{green!30} $C_2$ \\ \hline
$C_{2v}$ & \cellcolor{blue!30} $C_{\infty v}$ & \cellcolor{green!10} $D_{2h}$ & \cellcolor{green!30} $C_{2v}$ & \cellcolor{green!10} $D_{2h}$ & \cellcolor{green!30} $C_{2v}$ & \cellcolor{green!10} $D_{2h}$ & \cellcolor{green!30} $C_{2v}$ & \cellcolor{green!10} $D_{2h}$ & \cellcolor{green!30} $C_{2v}$ & \cellcolor{green!10} $D_{2h}$ & \cellcolor{green!30} $C_{2v}$ \\ \hline
$C_{3v}$ & \cellcolor{blue!30} $D_{\infty h}$ & \cellcolor{blue!30} $C_{\infty v}$ & \cellcolor{green!30} $C_{3v}$ & \cellcolor{green!10} $D_{3d}$ & \cellcolor{green!30} $C_{3v}$ & \cellcolor{green!10} $D_{3d}$ & \cellcolor{green!30} $C_{3v}$ & \cellcolor{green!10} $D_{3d}$ & \cellcolor{green!30} $C_{3v}$ & \cellcolor{green!10} $D_{3d}$ & \cellcolor{green!30} $C_{3v}$ \\ \hline
$C_{2h}$ & \cellcolor{red!30} $K_h$ & \cellcolor{green!30} $C_{2h}$ & \cellcolor{red!30} $K_h$ & \cellcolor{green!30} $C_{2h}$ & \cellcolor{red!30} $K_h$ & \cellcolor{green!30} $C_{2h}$ & \cellcolor{red!30} $K_h$ & \cellcolor{green!30} $C_{2h}$ & \cellcolor{red!30} $K_h$ & \cellcolor{green!30} $C_{2h}$ & \cellcolor{red!30} $K_h$ \\ \hline
$C_i=S_2$ & \cellcolor{red!30} $K_h$ & \cellcolor{green!30} $C_i$ & \cellcolor{red!30} $K_h$ & \cellcolor{green!30} $C_i$ & \cellcolor{red!30} $K_h$ & \cellcolor{green!30} $C_i$ & \cellcolor{red!30} $K_h$ & \cellcolor{green!30} $C_i$ & \cellcolor{red!30} $K_h$ & \cellcolor{green!30} $C_i$ & \cellcolor{red!30} $K_h$ \\ \hline
$D_{3h}$ & \cellcolor{red!30} $K_h$ & \cellcolor{blue!30} $D_{\infty h}$ & \cellcolor{green!30} $D_{3h}$ & \cellcolor{blue!30} $D_{\infty h}$ & \cellcolor{green!30} $D_{3h}$ & \cellcolor{green!10} $D_{6h}$ & \cellcolor{green!30} $D_{3h}$ & \cellcolor{green!10} $D_{6h}$ & \cellcolor{green!30} $D_{3h}$ & \cellcolor{green!10} $D_{6h}$ & \cellcolor{green!30} $D_{3h}$ \\ \hline
$D_{2d}$ & \cellcolor{red!30} $K_h$ & \cellcolor{blue!30} $D_{\infty h}$ & \cellcolor{yellow!30} $T_d$ & \cellcolor{green!10} $D_{4h}$ & \cellcolor{green!30} $D_{2d}$ & \cellcolor{green!10} $D_{4h}$ & \cellcolor{green!30} $D_{2d}$ & \cellcolor{green!10} $D_{4h}$ & \cellcolor{green!30} $D_{2d}$ & \cellcolor{green!10} $D_{4h}$ & \cellcolor{green!30} $D_{2d}$ \\ \hline
$C_3$ & \cellcolor{blue!30} $C_{\infty v}$ & \cellcolor{blue!30} $D_{\infty h}$ & \cellcolor{green!30} $C_3$ & \cellcolor{green!10} $C_{3h}$ & \cellcolor{green!30} $C_3$ & \cellcolor{green!10} $C_{3h}$ & \cellcolor{green!30} $C_3$ & \cellcolor{green!10} $C_{3h}$ & \cellcolor{green!30} $C_3$ & \cellcolor{green!10} $C_{3h}$ & \cellcolor{green!30} $C_3$ \\ \hline
$D_{\infty h}$ & \cellcolor{red!30} $K_h$ & \cellcolor{green!30} $D_{\infty h}$ & \cellcolor{red!30} $K_h$ & \cellcolor{green!30} $D_{\infty h}$ & \cellcolor{red!30} $K_h$ & \cellcolor{green!30} $D_{\infty h}$ & \cellcolor{red!30} $K_h$ & \cellcolor{green!30} $D_{\infty h}$ & \cellcolor{red!30} $K_h$ & \cellcolor{green!30} $D_{\infty h}$ & \cellcolor{red!30} $K_h$ \\ \hline
$C_{\infty v}$ & \cellcolor{green!30} $C_{\infty v}$ & \cellcolor{green!10} $D_{\infty h}$ & \cellcolor{green!30} $C_{\infty v}$ & \cellcolor{green!10} $D_{\infty h}$ & \cellcolor{green!30} $C_{\infty v}$ & \cellcolor{green!10} $D_{\infty h}$ & \cellcolor{green!30} $C_{\infty v}$ & \cellcolor{green!10} $D_{\infty h}$ & \cellcolor{green!30} $C_{\infty v}$ & \cellcolor{green!10} $D_{\infty h}$ & \cellcolor{green!30} $C_{\infty v}$ \\ \hline
$D_{3d}$ & \cellcolor{red!30} $K_h$ & \cellcolor{blue!30} $D_{\infty h}$ & \cellcolor{red!30} $K_h$ & \cellcolor{green!30} $D_{3d}$ & \cellcolor{red!30} $K_h$ & \cellcolor{green!30} $D_{3d}$ & \cellcolor{red!30} $K_h$ & \cellcolor{green!30} $D_{3d}$ & \cellcolor{red!30} $K_h$ & \cellcolor{green!30} $D_{3d}$ & \cellcolor{red!30} $K_h$ \\ \hline
$D_{2h}$ & \cellcolor{red!30} $K_h$ & \cellcolor{green!30} $D_{2h}$ & \cellcolor{red!30} $K_h$ & \cellcolor{green!30} $D_{2h}$ & \cellcolor{red!30} $K_h$ & \cellcolor{green!30} $D_{2h}$ & \cellcolor{red!30} $K_h$ & \cellcolor{green!30} $D_{2h}$ & \cellcolor{red!30} $K_h$ & \cellcolor{green!30} $D_{2h}$ & \cellcolor{red!30} $K_h$ \\ \hline
$T_d$ & \cellcolor{red!30} $K_h$ & \cellcolor{red!30} $K_h$ & \cellcolor{green!30} $T_d$ & \cellcolor{green!10} $O_h$ & \cellcolor{red!30} $K_h$ & \cellcolor{green!10} $O_h$ & \cellcolor{green!30} $T_d$ & \cellcolor{green!10} $O_h$ & \cellcolor{green!30} $T_d$ & \cellcolor{green!10} $O_h$ & \cellcolor{green!30} $T_d$ \\ \hline
$C_{3h}$ & \cellcolor{red!30} $K_h$ & \cellcolor{blue!30} $D_{\infty h}$ & \cellcolor{green!30} $C_{3h}$ & \cellcolor{blue!30} $D_{\infty h}$ & \cellcolor{green!30} $C_{3h}$ & \cellcolor{green!10} $C_{6h}$ & \cellcolor{green!30} $C_{3h}$ & \cellcolor{green!10} $C_{6h}$ & \cellcolor{green!30} $C_{3h}$ & \cellcolor{green!10} $C_{6h}$ & \cellcolor{green!30} $C_{3h}$ \\ \hline
\end{tabular}
}
\end{table}

\end{document}